\newtheorem{theorem}{Theorem}[section]
\newtheorem{corollary}{Corollary}[theorem]
\newtheorem{lemma}[theorem]{Lemma}
\newtheorem{definition}{Definition}[section]
\newtheorem*{remark}{Remark}
\newtheorem{assumption}{Assumption}
\newcommand{\supp}{\textbf{supp}\ }
\newcommand{\Span}{\textbf{span}}
\begin{document}
\runningtitle{The Expressive Power of a Class of Normalizing Flow Models}

\twocolumn[
\aistatstitle{The Expressive Power of a Class of Normalizing Flow Models}
\aistatsauthor{Zhifeng Kong \And Kamalika Chaudhuri}
\aistatsaddress{z4kong@eng.ucsd.edu \\ University of California San Diego \And kamalika@cs.ucsd.edu\\ University of California San Diego}]

\begin{abstract}
  Normalizing flows have received a great deal of recent attention as they allow flexible generative modeling as well as easy likelihood computation. While a wide variety of flow models have been proposed, there is little formal understanding of the representation power of these models.
  In this work, we study some basic normalizing flows and rigorously establish bounds on their expressive power. Our results indicate that while these flows are highly expressive in one dimension, in higher dimensions their representation power may be limited, especially when the flows have moderate depth.
\end{abstract}

\section{Introduction}

Normalizing flows are a class of deep generative models that aspire to learn an invertible transformation to convert a pre-specified distribution, such as a Gaussian, to the distribution of the input data. These models offer flexible generative modeling -- as the invertible transformation can be implemented by deep neural networks -- and easy likelihood computation in equation \eqref{eqn: likelihood computation chain rule} that follows from the invertibility of the transformation \citep{rezende2015variational}.

Due to these advantages and their empirical success, a number of flow models have been proposed \citep{dinh2014nice, germain2015made, uria2016neural, kingma2016improved, tomczak2016improving, dinh2016density, papamakarios2017masked, huang2018neural, berg2018sylvester, grathwohl2018ffjord,behrmann2018invertible, jaini2019sum, ho2019flow++}. However, the expressive power offered by different kinds of flow models -- what kind of distributions they can map between, and with what complexity -- remains not well-understood, which makes it challenging to select the right flow model for specific tasks. Obviously, due to their invertible nature, a normalizing flow can only transform a distribution to one with a homeomorphic support \citep{armstrong2013basic}. However, even within such distributions, it remains unclear whether a simple distribution supported on $\mathbb{R}^d$ could be transformed or approximated via a normalizing flow from a Gaussian.

In this work, we carry out a rigorous analysis of the expressive power of planar flows, Sylvester flows, and Householder flows -- the most basic classes of normalizing flows. The main challenge in analyzing the expressive power of any flow model class is \textit{invertibility}. There is a body of prior work that analyzes the universal approximation properties of standard neural networks; however, analyzing the approximation properties of \textit{invertible} mappings between distributions is a completely different problem. Just because a function class $\mathcal{F}$ is a universal approximator does not mean that the set of all its invertible functions can transform between arbitrary distributions; dually, even if functions in $\mathcal{F}$ have limited expressivity, it is possible that its invertible subset is an universal approximator in transforming between distributions \citep{villani2008optimal}. Additionally, universal approximation properties are often proved by construction via non-invertible functions \citep{lu2017expressive,lin2018resnet} and hence these constructions cannot to be used to establish properties of the corresponding flows.

This work gets around this challenge by studying properties of input-output distribution pairs directly, instead of considering the transformation class itself. In particular, we consider both a local and global analysis of properties of planar flows, their higher dimensional generalization -- Sylvester flows, and Householder flows. First, we analyze the local topology -- namely, the directional derivatives of the induced density. Second, we seek to bound the global total variation distance between the input and output distributions that can be achieved by each planar flow or Householder flow under certain conditions.

Using these two kinds of analysis, we make three main contributions in this paper.

First, we show that in one dimension, even planar flows are highly expressive. In particular, they can transform a source distribution supported on $\mathbb{R}$ to an arbitrarily-accurate approximation of any target distribution supported on a finite union of intervals. The conclusion holds even if we restrict to planar flows with ReLU non-linearity and Gaussian source distributions. This indicates that planar flows in one dimension are universal approximators.

We next turn our attention to general $d$-dimensional spaces, and we look at what kinds of distributions may be expressed by a Sylvester flow model acting on a Gaussian, mixtures of Gaussian (MoG) distributions, or product (Prod) distributions. We show that when the non-linearity is a ReLU function, Sylvester flows of any depth cannot in general exactly transform between certain standard classes of distributions. In particular, ReLU Sylvester flows cannot exactly transform any mixture of $k$ Gaussian distributions or product distributions into another one -- no matter what the depth is -- except under very special circumstances.

Finally, we consider the approximation capability of normalizing flow models in $d$-dimensional space. Here, we focus on local planar flows with a class of local non-linearities -- including common non-linearities such as $\tanh$, $\arctan$ and sigmoid -- and Householder flows. We show that in these cases, provided certain conditions hold, transforming a source distribution into a target may require flows of inordinately large depth. In particular, if the target distribution $p(z)$ is constant in a ball centered at the origin and proportional to $\exp(-\|x\|_2^{1/\tau})$ outside the ball, then $p$ may require local planar flows with depth $\Omega\left(d^{1/\tau-1}\right)$ to transform from an arbitrary source distribution (that is not too close). A similar conclusion holds for Householder flows when the target distribution is close to the standard Gaussian distribution. These results indicate that when local planar flows with certain non-linearities and Householder flows have moderate depth, they may have poor approximation power.

\subsection{Related Work}

There is a body of work on analyzing the approximation properties of neural networks \citep{cybenko1989approximation, hornik1989multilayer, hornik1991approximation, montufar2014number, telgarsky2015representation, lu2017expressive, hanin2017universal, raghu2017expressive}. Most of these results apply to feed-forward neural networks including non-invertible functions. Therefore, their universal approximation properties do not directly translate to normalizing flows.

The work most related to ours shows that a residual network (ResNet) in which each block is a single-neuron hidden layer with ReLU activation is a universal approximator in the space of Lebesgue integrable functions from $\mathbb{R}^d$ to $\mathbb{R}^d$ \citep{lin2018resnet}. This is related to us because the set of all such ResNets with $T$ invertible blocks is exactly $T$-layer ReLU planar flows. However, their construction that establishes this property is based on non-invertible mappings, consequently, their universal approximation result does not extend to planar flows.


There has also been some recent related work on the expressive power of generative networks. In particular, it was proved by construction that when the output dimension is equal to the input dimension, deep neural networks can approximately transform Gaussians to uniform distributions and vice versa \citep{bailey2018size}. However, their constructions are again based on non-invertible functions, and hence their results do not extend to normalizing flows.

Finally, there is also a body of empirical work on different kinds of normalizing flows; a more detailed discussion of these works is presented in Section~\ref{sec:morerelatedwork}.



\section{Preliminaries}
\subsection{Definitions and Notation}
\label{sec: def of flows}
Suppose $d$ is the data dimension. Let $z\in\mathbb{R}^d$ be a random variable with density $q_z:\mathbb{R}^d\rightarrow\{0\}\cup\mathbb{R}^+$. Then, an invertible function $f:\mathbb{R}^d\rightarrow\mathbb{R}^d$ is called a \textit{normalizing flow} if $f$ is differentiable almost everywhere $(a.e.)$ and the determinant of the Jacobian matrix of $f$ does not equal to zero:
\[\det J_f(z)\neq0\ (a.e.) \]
where $J_f(z)_{ij}=\frac{\partial f_i}{\partial z_j},\ \forall i,j\in\{1,\cdots,d\}$. If we apply a flow $f$ over $z$, we obtain a new random variable $y=f(z)$, whose density $q_y$ can be written through the change-of-variable formula:
\begin{equation}
\label{eqn:change of variable}
q_y(y)=\frac{q_z(z)}{|\det J_f(z)|}
\end{equation}
or
\begin{equation}
\label{eqn:change of variable log}
\log q_y(y)=\log q_z(z)-\log|\det J_f(z)|
\end{equation}
For conciseness, we write $q_y=f\#q_z$ in such context.
In particular, if the flow $f$ is composed of $T$ \textit{simple} flows $f_t,t=1\cdots,T$:
\[f=f_T\circ f_{T-1}\circ \cdots\circ f_1\]
then according to the chain rule of the Jacobian matrix, we have
\begin{equation}
\label{eqn: likelihood computation chain rule}
    \log q_y(y)=\log q_z(z)-\sum_{t=1}^T \log|\det J_{f_t}(z_{t-1})|
\end{equation}
where $z_0=z,\ z_t=f_t(z_{t-1}),\ t=1,\cdots,T$.

Two simple flows are defined below \citep{rezende2015variational}:

\textbf{Planar Flows}. Given the scaling vector $u\in\mathbb{R}^d$, tangent vector $w\in\mathbb{R}^d$, shift $b\in \mathbb{R}$, and non-linearity $h:\mathbb{R}\rightarrow\mathbb{R}$, a planar flow $f_{\mathrm{pf}}$ on $\mathbb{R}^d$ is defined by
\begin{equation}
\label{eqn: planar flow}
f_{\mathrm{pf}}(z)=z+uh(w^{\top}z+b)
\end{equation}

\textbf{Radial Flows}. Given the smoothing factor $a\in\mathbb{R}^+$, scaling factor $b\in\mathbb{R}$, and center $z_0\in\mathbb{R}^d$, a radial flow $f_{\mathrm{rf}}$ on $\mathbb{R}^d$ is defined by
\begin{equation}
\label{eqn: radial flow}
f_{\mathrm{rf}}(z)=z+\frac{b}{a+\|z-z_0\|_2}(z-z_0)
\end{equation}

A geometric intuition between planar and radial flows is shown in Section \ref{sec: planad vs radial}. Planar flows can be generalized to a higher dimension below \citep{berg2018sylvester}:

\textbf{Sylvester Flows}. Given the flow dimension $m<d$, scaling matrix $A\in\mathbb{R}^{d\times m}$, tangent matrix $B\in\mathbb{R}^{d\times m}$, shift vector $b\in \mathbb{R}^d$, and non-linearity $h:\mathbb{R}\rightarrow\mathbb{R}$, a Sylvester flow $f_{\mathrm{syl}}$ on $\mathbb{R}^d$ is defined by
\begin{equation}
\label{eqn: sylvester flow}
f_{\mathrm{syl}}(z)=z+Ah(B^{\top}z+b)
\end{equation}
where $h$ maps coordinate-wise.

In addition, Householder matrices can also be used to construct flows \citep{tomczak2016improving}:

\textbf{Householder Flows}. Given a unit reflection vector $v\in\mathbb{R}^d$, a Householder flow $f_{\text{hh}}$ on $\mathbb{R}^d$ is defined by
\begin{equation}
\label{eqn: householder flow}
f_{\text{hh}}(z)=z-2vv^{\top}z
\end{equation}

For conciseness, we denote these flows by \textit{base} flows.

\subsection{Problem Statement}
In this paper, we study the expressivity of base flows in Section \ref{sec: def of flows}: given an input distribution $q$, we hope to understand when a flow $f$ composed of a finite number of base flows can transform $q$ into any target distribution $p$ or its approximation on $\mathbb{R}^d$. Formally, suppose $f$ is composed of $T$ base flows in the same class. We propose to answer the following two questions:

\textbf{Q}1 (Exact transformation):
Under what conditions is it possible to \textit{exactly} transform $q$ into $p$ with a finite number of base flows? That is, $f\#q=p,\ (a.e.)$.

\textbf{Q}2 (Approximation):
Since sometimes it may not be possible to exactly transform $q$ into $p$, when is it possible to \textit{approximate} $p$ in total variation distance (which is equal to half of the $\ell_1$ distance)? How many layers of base flows do we need? That is, given $\epsilon>0$, is there a bound for $T$ such that
\[\|f\#q-p\|_1\leq\epsilon\]

\subsection{Additional Definitions and Notations}
The determinant of the Jacobian matrix of a planar flow $f_{\mathrm{pf}}$, a Sylvester flow $f_{\mathrm{syl}}$, and a Householder flow $f_{\text{hh}}$ can be easily calculated by
\begin{equation}\label{eqn:det J_f h}
\begin{array}{rl}
     \det J_{f_{\mathrm{pf}}}(z)&=1+u^{\top}w h'(w^{\top}z+b)  \\
     \det J_{f_{\mathrm{syl}}}(z)&=\det (I_m+diag(h'(B^{\top}z+b))B^{\top}A)\\
     \det J_{f_{\text{hh}}}(z)&=-1
\end{array}
\end{equation}

In this paper, we consider three types of non-linearities $h$: $\text{ReLU}(x)=\max(x,0)$, general differentiable functions, and local non-linearities (see Section \ref{sec:high d approx} for detail) including $\tanh(x)$, $\arctan(x)$ and $\text{sigmoid}(x)=1/(1+\exp(-x))$. Specifically, let $h=$ ReLU and $1\{\cdot\}$ be the indicator function, then $\det J_{f_{\mathrm{pf}}}$ is equal to
\begin{equation}\label{eqn:det J_f ReLU}
\det J_{f_{\mathrm{pf}}}(z)=
1+u^{\top}w\cdot1\{w^{\top}z+b\geq 0\}
\end{equation}
A ReLU planar/Sylvester flow is invertible under certain bounds on its parameters as ReLU is Lipschitz.

We make a few additional definitions here.
$\mathcal{N}$ denotes a Gaussian distribution on $\mathbb{R}^d$:
\[\mathcal{N}(x;\mu,\Sigma)=\frac{\exp\left(-\frac12 (x-\mu)^{\top}\Sigma^{-1}(x-\mu)\right)}{(2\pi)^{d/2}\sqrt{\det\Sigma}}\]
The set $\supp p$ denotes the support of distribution $p$:
\[\supp p=\{x\in\mathbb{R}^d:p(x)>0\}\]
For vectors $w_i\in\mathbb{R}^d, 1\leq i\leq k$, the $\Span$ of them denotes the subspace spanned by $\{w_i\}_{i=1}^k$:
\[\Span\{w_1,\cdots,w_k\}=\left\{\sum_{i=1}^k \alpha_i w_i:\ \alpha_i\in\mathbb{R},1\leq i\leq k\right\}\]
The $\Span$ of a set of matrices is defined as the span of the union of their column vectors.
For any differentiable function $g:\mathbb{R}^d\rightarrow\mathbb{R}$ and direction $\delta\in\mathbb{R}^d\setminus\{0\}$, its corresponding directional derivative is defined by
\[
\lim_{\alpha\rightarrow0}\frac{g(x+\alpha\delta)-g(x)}{\alpha}=\nabla_x g(x)^{\top}\delta\]

\subsection{Challenges}

The main challenge in analyzing whether a class of flows can universally approximate any target distribution when applied to a fixed source is \textit{invertibility}. To understand this, suppose $\mathcal{F},\mathcal{C}$ are function classes and $\mathcal{I}$ is the set of all invertible functions.

Even if $\mathcal{F}$ can approximate any function in $\mathcal{C}$, it might not hold that the invertible functions in $\mathcal{F}$ can approximate any invertible function in $\mathcal{C}$.
This is because the set of invertible functions $\mathcal{I}$ might have no interior in $\mathcal{C}$: for any invertible function, it is possible to modify it slightly to make it non-invertible -- and hence the approximation to an invertible function $c \in \mathcal{C}$ may be a non-invertible function $f \in \mathcal{F}$ (see \textbf{Lemma} 4, \citep{mulansky1998interpolation}).
For instance, it was shown that a certain ResNet ($\mathcal{F}$) is a universal approximator in $\mathcal{C}=\ell_1(\mathbb{R}^d)$ \citep{lin2018resnet}, and its invertible function subset ($\mathcal{F}\cap\mathcal{I}$) is exactly the set of transformations composed of finitely many ReLU planar flows. However, since the universal approximation property was proved by construction using the non-invertible trapezoid functions, this result does not translate to ReLU planar flows.

Dually, if $\mathcal{F}$ has limited expressivity, it might still happen that functions in $\mathcal{F}\cap\mathcal{I}$ can approximate or even express transformations between arbitrary pairs of distributions. This is because a small subset of functions $\mathcal{T}$ (for instance, increasing triangular maps \citep{villani2008optimal}) is enough to transform between distributions. Therefore, if $\mathcal{F}\cap\mathcal{I}$ is dense in $\mathcal{T}$, then it is expressive. It is however challenging to find all such dense sets $\mathcal{T}$.


\section{The $d=1$ case}
\label{sec:1d}
In this section, we discuss the universal approximation properties of Sylvester flows when the data dimension $d=1$. In this case, a Sylvester flow is identical to a planar flow. However, the one-dimensional case is not trivial and requires delicate design. For both general and ReLU non-linearity cases, we demonstrate they are able to achieve universal approximation.

\subsection{General Smooth Non-linearity}
\label{sec:1d h}
Suppose the flow $f$ is a single planar flow with an arbitrary smooth non-linearity $h$. It is straightforward to show by construction that if $\supp p=\supp q=\mathbb{R}$, then there exists a planar flow that exactly transforms $q$ into $p$. (See \textbf{Lemma} \ref{lemma:1d general}). Using these exact transformations, we can approximate any density supported on a finite union of intervals when the input distribution is supported on $\mathbb{R}$ (e.g. a Gaussian).

\begin{theorem}[Universal Approximation]
\label{thm:1d approx}
Let $p,q$ be densities on $\mathbb{R}$ such that $p$ is supported on a finite union of intervals and $\supp q = \mathbb{R}$. Then, for any $\epsilon>0$, there exists a planar flow $f_{\mathrm{pf}}$ such that $\|f_{\mathrm{pf}}\#q-p\|_1\leq\epsilon$.
\end{theorem}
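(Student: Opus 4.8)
The plan is to build the approximating density as a finite mixture-free composition: first reduce to the case where the target density $p$ is itself a nice enough function on a single interval, then handle the finite union of intervals, and finally invoke the exact one-dimensional transformation result (Lemma~\ref{lemma:1d general}) together with a limiting argument. Concretely, I would start by fixing $\epsilon>0$ and choosing a density $\tilde p$ supported on the same finite union of intervals $S=\bigcup_{j=1}^{N}[a_j,b_j]$ such that $\|\tilde p-p\|_1\leq\epsilon/2$ and $\tilde p$ is bounded, bounded away from zero on the interior of $S$, and continuous (e.g.\ a piecewise-constant or smoothed histogram approximation of $p$); this is a routine density-approximation fact in $L^1$. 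After this step it suffices to approximate $\tilde p$ to within $\epsilon/2$ in $\ell_1$ by a planar flow applied to $q$.

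The core difficulty is that a single planar flow $f_{\mathrm{pf}}(z)=z+uh(wz+b)$ in dimension one is a monotone map $\mathbb{R}\to\mathbb{R}$ (monotonicity is forced by invertibility), so its pushforward of $q$ is again supported on all of $\mathbb{R}$ (or on an interval, depending on the range of $h$); it can never be \emph{exactly} supported on a disconnected set $S$. The trick is to approximate: I would construct a sequence of planar flows whose pushforwards of $q$ converge in $\ell_1$ to $\tilde p$, by letting the transformation concentrate almost all the mass of $q$ onto $S$ and push only a vanishing amount of mass onto the complement. Specifically, I would use Lemma~\ref{lemma:1d general} to exactly transform $q$ into an auxiliary density $p_\eta$ that equals $(1-\eta)\tilde p$ on $S$ and spreads the remaining mass $\eta$ thinly over the gaps $\mathbb{R}\setminus S$ (for instance, as a tiny, very flat bump), arranged so that $p_\eta$ is a genuine density on $\mathbb{R}$ with full support. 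Then $\|p_\eta-\tilde p\|_1\leq 2\eta$ for the $p_\eta$-construction, so choosing $\eta\leq\epsilon/4$ and using a planar flow that realizes $f_{\mathrm{pf}}\#q=p_\eta$ exactly gives $\|f_{\mathrm{pf}}\#q-p\|_1\leq\|p_\eta-\tilde p\|_1+\|\tilde p-p\|_1\leq \epsilon/2+\epsilon/2=\epsilon$.

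The step I expect to be the main obstacle is verifying that the auxiliary full-support density $p_\eta$ is actually reachable by a \emph{single} planar flow with a smooth non-linearity $h$, i.e.\ that the exact transformation guaranteed in Lemma~\ref{lemma:1d general} can be chosen to have the right shape. The monotone transport map $F$ taking $q$ to $p_\eta$ is $F = Q_{p_\eta}^{-1}\circ Q_q$ (composition of the target quantile function with the source CDF), and one must exhibit $u,w,b,h$ so that $F(z)=z+uh(wz+b)$; since $h$ is a free smooth function this amounts to solving $h(wz+b)=(F(z)-z)/u$, which is fine provided $F(z)-z$ is, after the affine reparametrization $t=wz+b$, a well-defined smooth function of $t$ — this requires $F(z)-z$ to depend on $z$ only through $wz+b$, which for $w\neq 0$ is automatic, so $h(t) := (F((t-b)/w)-(t-b)/w)/u$ works as long as $F$ is smooth and its derivative stays in the range dictated by invertibility of planar flows. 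So the remaining work is to check that $p_\eta$ can be chosen smooth and bounded away from zero (so that $Q_{p_\eta}^{-1}$ is smooth), and that the resulting $h$ is a legitimate smooth non-linearity with $1+uw\,h'(wz+b)>0$ everywhere; this is exactly the content one gets from Lemma~\ref{lemma:1d general}, and the theorem then follows by assembling the three error terms.
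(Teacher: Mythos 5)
Your proposal is correct and follows essentially the same route as the paper: perturb $p$ to a full-support density (your $p_\eta$, the paper's $\tilde p$, built by slightly down-weighting $p$ and spreading the freed mass over the gaps and tails) within $\epsilon$ in $\ell_1$, then invoke Lemma~\ref{lemma:1d general} in the full-support case to realize the monotone transport $\Phi_{p_\eta}^{-1}\circ\Phi_q$ exactly as a single planar flow by absorbing it into the free non-linearity $h$. Your extra preliminary smoothing of $p$ is harmless but not needed, and your identification of the quantile-composition step as the crux matches the paper's construction.
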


Since in \textbf{Theorem} \ref{thm:1d approx}, the support of $p$ might not be $\mathbb{R}$, we are unable to achieve exact transformation between $p$ and $q$. However, approximation is possible in that we can transform $q$ into $\tilde{p}$, a distribution supported on $\mathbb{R}$ but approximates $p$ in $\ell_1$ norm. To achieve this, we construct such $\tilde{p}$ that satisfying $\tilde{p}\approx p$ on $\supp p$ and $\tilde{p}\approx0$ on $\overline{\supp p}$. An example is shown in Figure \ref{fig: 1d general approx}, where $p(x)=\frac34\min((|x|-1)^2, (|x|-3)^2)$ for $1\leq|x|\leq3$ and $p(x)=0$ elsewhere.

\begin{figure}[!h]
    \centering
    \includegraphics[trim=10 0 10 10, clip, width=0.45\textwidth]{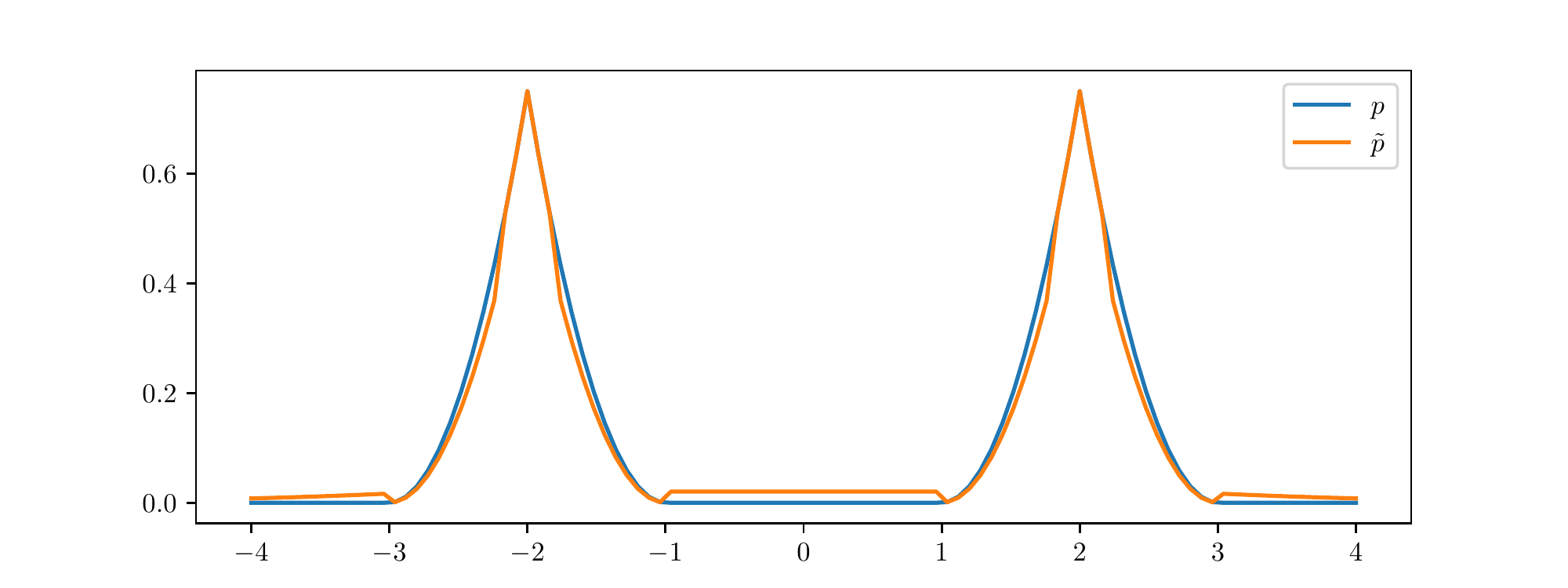}
    \caption{Target distribution $p$ and its approximation $\tilde{p}$ with $\supp\tilde{p}=\mathbb{R}$.}
    \label{fig: 1d general approx}
\end{figure}

\subsection{ReLU Non-linearity}
\label{sec:1d ReLU}
Since the ReLU activation has been proven to be expressive and is popular in recent neural network models \citep{he2016deep, lin2018resnet}, we provide a universal approximation result for planar flows with ReLU non-linearity.

Suppose the one-dimensional ReLU flow has the form $f(z)=f_{\mathrm{pf}}(z)=z+uh(wz+b)$, where $h=\text{ReLU}$. Since ReLU is linear on both $\mathbb{R}^-$ and $\mathbb{R}^+$, we assign $u=\pm1$ for concreteness. In addition, to ensure the transformation is strictly increasing, we require $uw>-1$. Different from the general non-linearity case, the determinant of $\det J_f$ in \eqref{eqn:det J_f ReLU} indicates that a ReLU planar flow keeps a halfspace of $\mathbb{R}$ and applies linear scaling transformation to the other halfspace.

Given that the input distribution $q$ is Gaussian, we prove it is possible to approximate any density supported on a finite union of intervals in $\ell_1$ norm using a finite number of ReLU planar flows.

\begin{theorem}[Universal Approximation]
\label{thm: 1d ReLU universal approx}
Let $p$ be a density on $\mathbb{R}$ supported on a finite union of intervals. Then, for any $\epsilon>0$, there exists a flow $f$ composed of finitely many ReLU planar flows and a Gaussian distribution $q_{\mathcal{N}}$ such that $\|f\#q_{\mathcal{N}}-p\|_1\leq\epsilon$.
\end{theorem}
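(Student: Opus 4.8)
The plan is to split the proof into a geometric lemma about one-dimensional ReLU planar flows and a density-approximation step -- the same two-part structure as \textbf{Theorem}~\ref{thm:1d approx}, but with piecewise-linear transports in place of smooth ones.

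First I would pin down exactly what a single one-dimensional ReLU planar flow can do. Writing $f(z)=z+u\,\mathrm{ReLU}(wz+b)$ with $u=\pm1$ and $uw>-1$, the Jacobian formula \eqref{eqn:det J_f ReLU} shows that $f$ is a continuous strictly increasing bijection of $\mathbb{R}$ which equals the identity on one of the two closed half-lines cut out by the breakpoint $z_0=-b/w$ and is affine with some slope $\rho>0$ on the other, the two pieces joining continuously at $z_0$; conversely every such ``one-kink'' increasing bijection arises this way, the signs of $u,w$ selecting the modified side and $|w|$ sweeping $\rho$ over all of $(0,\infty)$ subject to $uw>-1$. The key lemma is then: \emph{every strictly increasing piecewise-linear bijection $g$ of $\mathbb{R}$ with finitely many breakpoints is a composition of finitely many one-dimensional ReLU planar flows, up to pre-composition with a single affine map.} I would prove it by a telescoping construction: if $g$ has breakpoints $s_1<\cdots<s_n$ and slopes $\rho_0,\dots,\rho_n$, let $g^{(i)}$ (for $0\le i\le n$, with $s_{n+1}=+\infty$) agree with $g$ on $(-\infty,s_{i+1}]$ and be affine beyond it, so $g^{(0)}$ is affine and $g^{(n)}=g$; a direct check tracking which pieces are affine with which slopes shows that $g^{(i)}\circ(g^{(i-1)})^{-1}$ is the identity on $(-\infty,g(s_i)]$ and affine with slope $\rho_i/\rho_{i-1}$ to the right of $g(s_i)$, hence is exactly one ReLU planar flow. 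Composing, $g=\bigl(g^{(n)}\circ(g^{(0)})^{-1}\bigr)\circ g^{(0)}$ with the first factor a composition of $n$ ReLU planar flows. Since we are free to choose the source Gaussian and $g^{(0)}\#\mathcal{N}$ is again Gaussian, the leftover affine factor is absorbed into the source, so it suffices to realize the desired transport as $g^{(n)}\circ(g^{(0)})^{-1}$ acting on the Gaussian $g^{(0)}\#\mathcal{N}(0,1)$.

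Second I would produce the transport. Since simple functions are dense in $L^1$, choose a piecewise-constant probability density $\hat p$ supported on a finite union of narrow intervals with $\|p-\hat p\|_1\le\epsilon/3$. Because a Gaussian puts positive mass on every interval while $\hat p$ vanishes on the ``gaps'' of its support and on the two unbounded tails, I assign each such region a tiny virtual mass (the total amount chosen, using only the fixed number of gaps, so that this leakage is at most $\epsilon/3$); renormalizing gives a probability vector $\tilde m_1,\dots,\tilde m_M$, one entry per interval of a common refinement. I then partition $\mathbb{R}$ into consecutive intervals carrying Gaussian mass $\tilde m_1,\dots,\tilde m_M$ and let $g$ be the increasing piecewise-linear bijection that maps the $i$-th source interval affinely onto the $i$-th target interval; by the key lemma $g$ is realized by finitely many ReLU planar flows on a Gaussian source. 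On each target interval the push-forward density is a rescaled piece of the Gaussian of total mass $\tilde m_i$; on the virtual-mass regions its mass is at most the small assigned amount, and on a narrow non-gap interval it is $\ell_1$-close to the constant value of $\hat p$ there provided the corresponding source interval is short on the Gaussian's scale -- which is arranged by refining the partition (this does not change the gaps or the leakage) until each $\tilde m_i$ is small enough. Summing these contributions with $\|p-\hat p\|_1$ gives $\|f\#q_{\mathcal N}-p\|_1\le\epsilon$.

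The crux, and the main obstacle, is that any flow applied to a Gaussian yields a density with full support $\mathbb{R}$, so it can never equal $\hat p$ and the approximation is intrinsically in $\ell_1$: one must control the push-forward \emph{density}, not merely the transport map. This is why the slopes of $g$ -- equivalently the magnitudes $|w|$ of the flows -- must be tuned so that every source interval is short on the Gaussian's scale, forcing each rescaled Gaussian piece to be nearly constant, and why the unavoidable leakage of mass into the gaps and tails of $\mathrm{supp}\,\hat p$ must be absorbed by the virtual-mass device; the order of choices (fix $\hat p$, then the virtual masses from its gap count, then the refinement fineness from the resulting bounds) has to be respected to avoid circularity. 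An alternative that sidesteps the gap bookkeeping is to invoke \textbf{Lemma}~\ref{lemma:1d general}/\textbf{Theorem}~\ref{thm:1d approx} for a smooth planar flow $f^{*}$ with $\|f^{*}\#q_{\mathcal N}-p\|_1\le\epsilon/2$, approximate the smooth increasing bijection $f^{*}$ by a piecewise-linear one (fine mesh on a large interval, affine continuation on the tails), check that the push-forwards stay $\ell_1$-close, and finish with the same key lemma.
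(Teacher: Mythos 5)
Your proposal is correct and is essentially the paper's argument viewed from the transport-map side: the pushforward of a Gaussian under an increasing piecewise-linear bijection is exactly a tail-consistent piecewise Gaussian distribution, your telescoping of $g$ into one-kink maps is the paper's inductive construction of such a distribution one piece at a time (\textbf{Lemma}~\ref{lemma: 1d ReLU PWG exact single} and \textbf{Lemma}~\ref{lemma: 1d ReLU PWG exact}), and your ``source interval short on the Gaussian's scale'' estimate is the Lipschitz bound of \textbf{Lemma}~\ref{lemma: 1d ReLU approx PWC}. No gaps.
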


There are two steps in the proof. First, we show that Gaussian distributions can be exactly transformed to tail-consistent piecewise Gaussian distributions (see \textbf{Definition} \ref{def:PWG}, \textbf{Definition} \ref{def:tail} for formal definitions and \textbf{Lemma} \ref{lemma: 1d ReLU PWG exact}). An example of a tail-consistent piecewise Gaussian distribution of three pieces is shown in Figure \ref{fig: tail consistent pwg}: the distribution
is composed of three Gaussian pieces in full lines of three colors, where the dashed lines are corresponding prolongations. Then, the area below yellow lines (\textcolor{orange}{---}/\textcolor{orange}{- -}) is equal to the area below the blue dashed line (\textcolor{blue}{- -}), and the area below the green full line (\textcolor{green}{---}) is equal to the area below the yellow dashed line (\textcolor{orange}{- -}).

In the second step, we show that tail-consistent piecewise distributions can approximate any piecewise constant distribution supported on a finite union of compact intervals (see \textbf{Lemma} \ref{lemma: 1d ReLU approx PWC}). Notice that piecewise constant functions supported on a finite union of compact intervals can approximate any Lebesgue-integrable function \citep{lin2018resnet}, so do densities supported on a finite union of intervals. Therefore, the universal approximation property of ReLU planar flows (\textbf{Theorem} \ref{thm: 1d ReLU universal approx}) is obtained.

\begin{figure}[!h]
    \centering
    \includegraphics[trim=10 0 10 10, clip, width=0.45\textwidth]{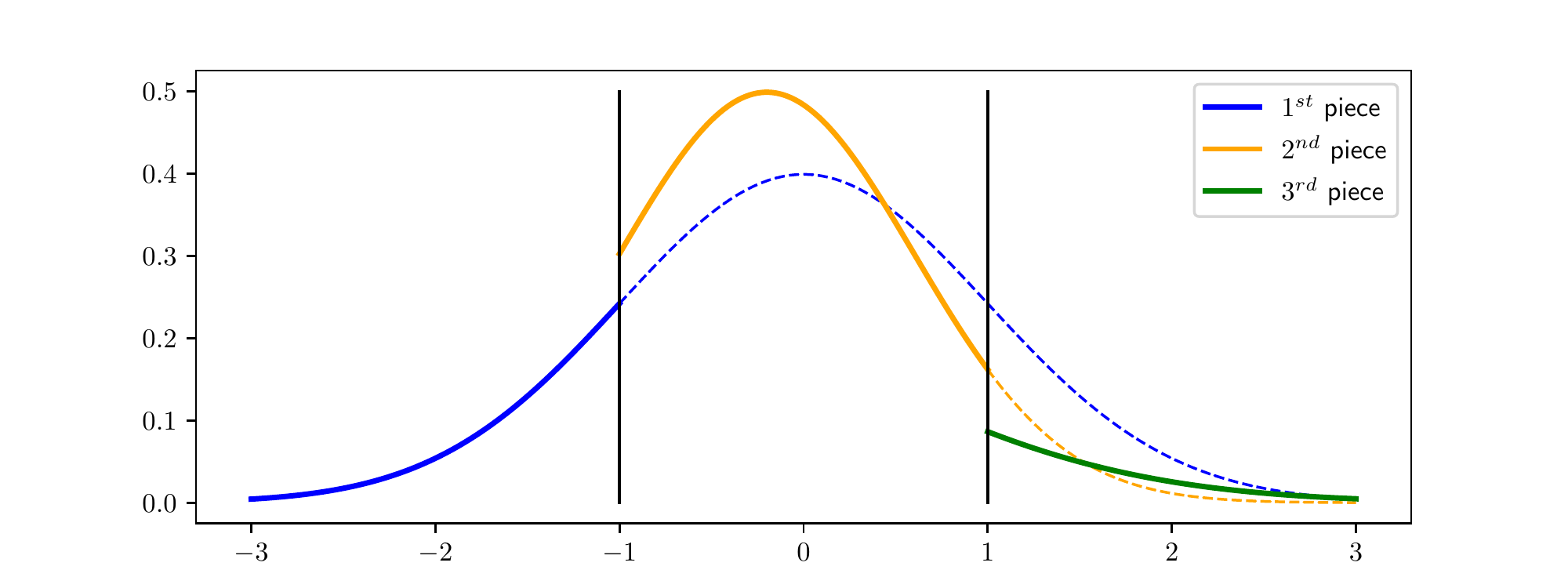}
    \caption{A tail-consistent piecewise Gaussian distribution in $\mathcal{PW}(3,\mathcal{G})$.}
    \label{fig: tail consistent pwg}
\end{figure}

In Figure \ref{fig: 1d relu}, two examples are presented on approximating the same target distribution $p$ with different number of ReLU planar flows. As illustrated, the approximation almost reaches perfection when we choose a larger number of ReLU planar flows.

\begin{figure}[!h]
    \centering
    \includegraphics[trim=10 0 10 10, clip, width=0.45\textwidth]{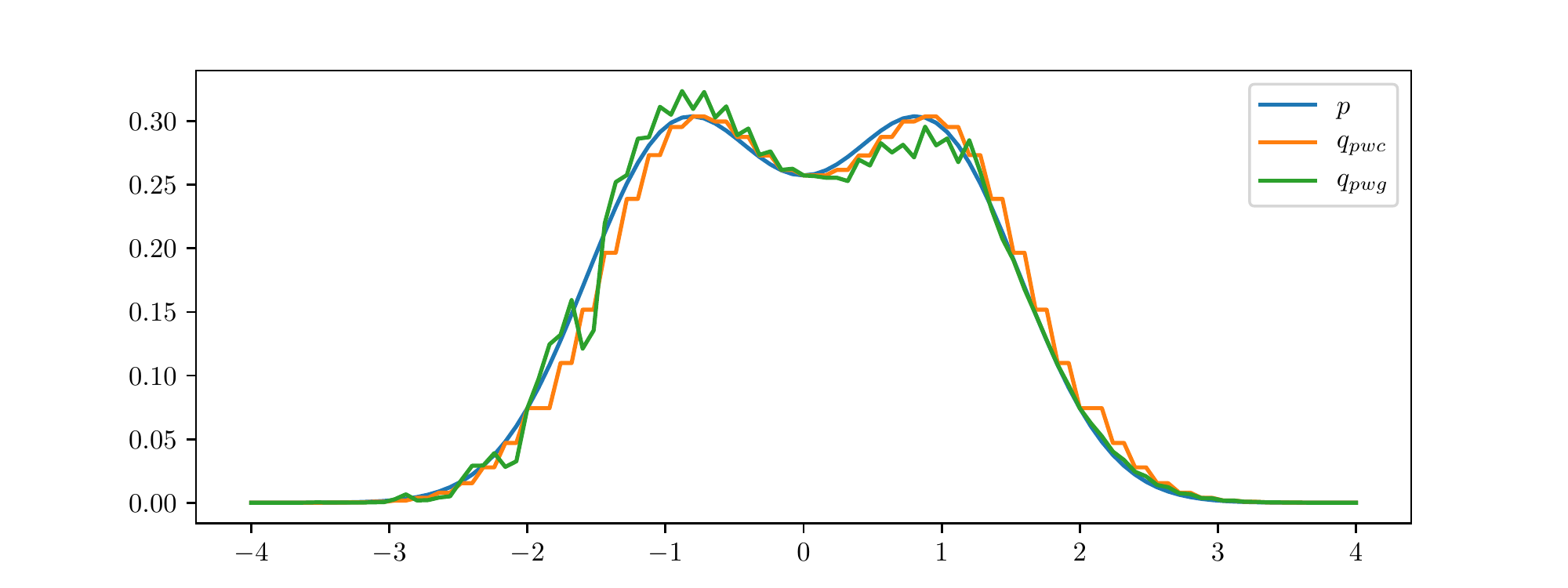}\\
    \includegraphics[trim=10 0 10 10, clip, width=0.45\textwidth]{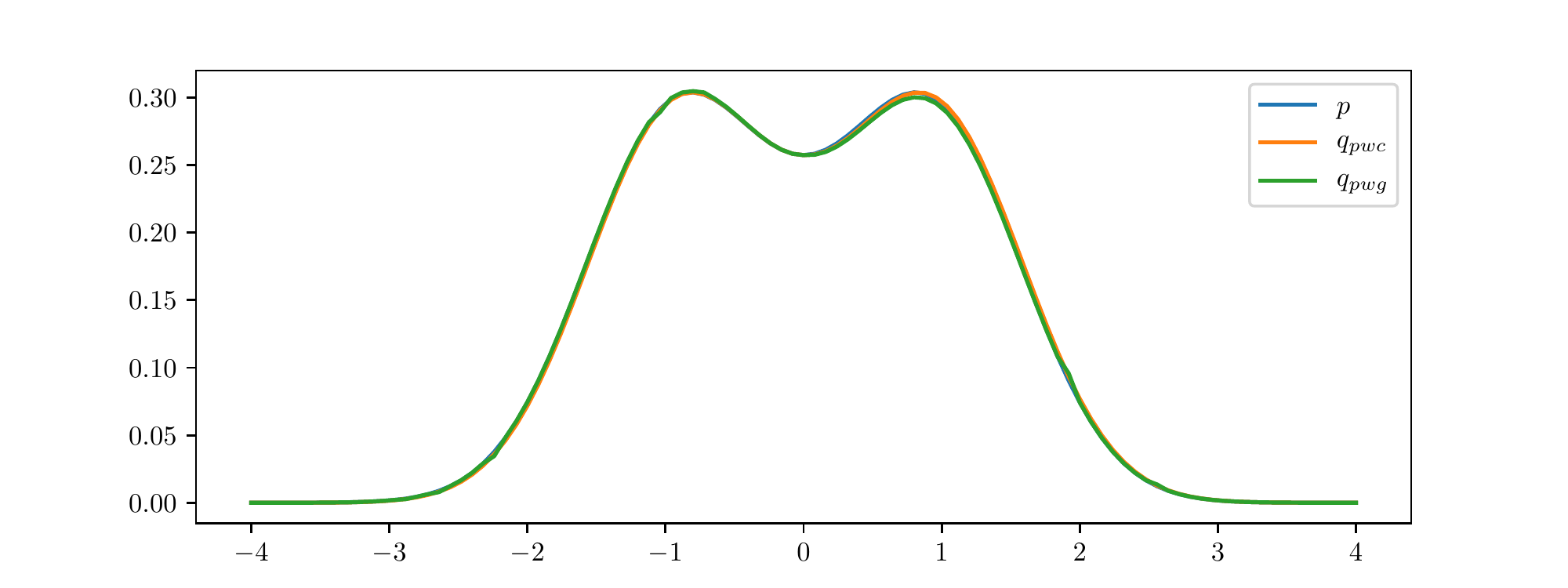}
    \caption{Target distribution $p$, its piecewise constant distribution approximation $q_{pwc}$ of 50 (top)/300 (bottom) pieces, and its tail-consistent piecewise Gaussian distribution approximation $q_{pwg}$ generated by 50 (top)/300 (bottom) ReLU planar flows over a Gaussian.}
    \label{fig: 1d relu}
\end{figure}

\begin{remark}
Since we can transform the standard Gaussian distribution $\mathcal{N}(0,1)$ to any other Gaussian distribution using a scaling function, which can be achieved by two ReLU planar flows and a shift, we can further assign the input distribution $q_{\mathcal{N}}$ in \textbf{Theorem} \ref{thm: 1d ReLU universal approx} to be the standard Gaussian distribution.
\end{remark}

\section{Exact Transformation for $d>1$}
\label{sec:high d exact}
In this section, we consider the exact transformation question when the data dimension $d>1$. We study two cases where the flow is composed of a finite number of Sylvester flows with ($i$) ReLU non-linearity and ($ii$) general non-linearity. We specifically show how the topology matching conditions yield negative results to the exact transformation question (that is, to show there does not exist such flow that can transform between certain distributions).

Our results are based on the following key observation for a flow $f:\mathbb{R}^d\rightarrow\mathbb{R}^d$. For almost every $z\in\mathbb{R}^d$ there exists a subspace $\mathcal{V}(z)\subset\mathbb{R}^d$ such that for any $v\in\mathcal{V}$ and small $\alpha>0$, $\det J_f(z)=\det J_f(z+\alpha v)$. We call $\mathcal{V}$ the complementary subspace of $f$ at $z$. This observation can be used to determine what class of distributions flows can transform between. By letting $\alpha\rightarrow0$, we can focus on properties of small neighbourhoods around $z$, which we call \textit{topology matching}.

\subsection{ReLU Non-linearity}
\label{sec:high d ReLU}
We begin with constructing a topology matching condition for ReLU Sylvester flows: $f(z)=f_{\mathrm{syl}}(z)=Z+A~\text{ReLU}(B^{\top}z+b)$. \eqref{eqn:det J_f h} shows that for a single ReLU Sylvester flow, if $B^{\top}z+b\neq0$, then $\det J_f(z')=\det J_f(z)$ when $z'$ is close to $z$. This statement can be further generalized: if $f$ is a flow composed of a finite number of ReLU Sylvester flows, for almost every $z\in\mathbb{R}^d$, the determinant of the Jacobian of $f$ is a constant near $z$. Based on this observation, we conclude that the complementary subspace $\mathcal{V}(z)=\mathbb{R}^d,\ a.e.$ (see \textbf{Lemma} \ref{lemma:ReLU local linear}). Using this property, we construct the topology matching condition in the following theorem.

\begin{theorem}[Topology Matching for ReLU Sylvester flows]
\label{thm:high d ReLU}
Suppose distribution $q$ is defined on $\mathbb{R}^d$, and flow $f$ is composed of finitely many ReLU Sylvester flows on $\mathbb{R}^d$. Let $p=f\#q$. Then, there exists a zero-measure closed set $\Omega\subset\mathbb{R}^d$ such that $\forall z\in\mathbb{R}^d\setminus\Omega$, we have
\[J_f(z)^{\top}\nabla_z\log p(f(z))=\nabla_z\log q(z)\]
\end{theorem}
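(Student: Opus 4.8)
The plan is to derive the stated identity directly from the change-of-variable formula \eqref{eqn:change of variable log} by differentiating with respect to $z$, using the key structural fact about ReLU Sylvester flows that $\det J_f(z)$ is locally constant away from a small bad set. First I would invoke \textbf{Lemma}~\ref{lemma:ReLU local linear}, which gives a zero-measure closed set $\Omega_0$ outside of which $\det J_f(z)$ is constant in a neighbourhood of $z$; in particular the map $z \mapsto \log|\det J_f(z)|$ has gradient zero on $\mathbb{R}^d \setminus \Omega_0$. I would also need the set where $f$ or its composing flows fail to be differentiable, or where the pushforward density $p = f\#q$ or the source $q$ fails to be differentiable or positive; collecting all of these into a single zero-measure closed set $\Omega$ is a bookkeeping step (each ReLU Sylvester flow is piecewise affine with finitely many "kink" hyperplanes, so finite composition yields finitely many affine pieces, and the boundaries form a closed zero-measure set).

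Next, for $z \in \mathbb{R}^d \setminus \Omega$, equation \eqref{eqn:change of variable log} with $y = f(z)$ reads $\log p(f(z)) = \log q(z) - \log|\det J_f(z)|$. Since all three terms are differentiable at such $z$, I would take the gradient of both sides with respect to $z$. By the chain rule, the left-hand side gives $J_f(z)^{\top} \nabla_z \log p(f(z))$ (here $\nabla_z \log p$ is evaluated at the point $f(z)$, and $J_f(z)^{\top}$ comes from differentiating the inner map $f$). The first term on the right gives $\nabla_z \log q(z)$. The crucial point is that the last term, $\nabla_z \log|\det J_f(z)|$, vanishes identically on $\mathbb{R}^d \setminus \Omega$ because $\det J_f$ is locally constant there — this is exactly where the ReLU structure (as opposed to a general smooth non-linearity) is used. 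Rearranging yields the claimed identity.

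The main obstacle I anticipate is making the "locally constant Jacobian determinant $\Rightarrow$ zero gradient of its log" step fully rigorous, together with correctly specifying $\Omega$. One must be careful that $\Omega$ simultaneously (a) contains the kink set of $f$ so that $f$ is genuinely differentiable off $\Omega$, (b) contains the preimage under $f$ of the non-differentiability/zero set of $p$, and the non-differentiability/zero set of $q$, so that both $\log p(f(z))$ and $\log q(z)$ are differentiable off $\Omega$, and (c) still has measure zero and is closed. Point (b) requires knowing $f$ maps zero-measure sets to zero-measure sets and pulls them back similarly — which follows since each ReLU Sylvester flow is bi-Lipschitz (invertibility plus Lipschitz non-linearity, as noted after \eqref{eqn:det J_f ReLU}), hence so is the finite composition, and bi-Lipschitz maps preserve Lebesgue-null sets in both directions. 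Once $\Omega$ is pinned down, the differentiation itself is routine chain rule; the only subtlety is that on each affine piece $\det J_f$ is literally constant, so its logarithm's gradient is zero pointwise on the interior of each piece, and the union of interiors is exactly $\mathbb{R}^d \setminus \Omega$.

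I would present the argument as: (1) cite \textbf{Lemma}~\ref{lemma:ReLU local linear} to get local constancy of $\det J_f$ off a closed null set $\Omega_0$; (2) enlarge $\Omega_0$ to $\Omega$ by adjoining the (closed, null) sets where $q$, or $p \circ f$, is non-differentiable or non-positive, using the bi-Lipschitz property to keep $\Omega$ null; (3) on $\mathbb{R}^d \setminus \Omega$, differentiate \eqref{eqn:change of variable log}, observing $\nabla_z \log|\det J_f(z)| = 0$; (4) rearrange to obtain $J_f(z)^{\top}\nabla_z\log p(f(z)) = \nabla_z \log q(z)$.
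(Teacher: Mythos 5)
Your proposal is correct and follows essentially the same route as the paper: both invoke \textbf{Lemma}~\ref{lemma:ReLU local linear} to get local constancy of $J_f$ off a closed null set, and then differentiate the log change-of-variable identity \eqref{eqn:change of variable log} (the paper does this via a finite-difference/Taylor-expansion argument in an arbitrary direction $\delta$, which is the same computation as your direct gradient plus chain rule). Your additional bookkeeping about enlarging $\Omega$ to cover non-differentiability sets of $p$ and $q$ via the bi-Lipschitz property is a care the paper's proof silently omits, but it does not change the argument.
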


Intuitively, the local directional derivatives of the logarithm of the density are preserved. As a special case, if $z$ satisfies $\nabla_z q(z)=0$ (which means that $z$ is a local minima, local maxima, or saddle point of $q$), then $p(f(z))$ must also have zero gradient at $z$. For instance, suppose $p$ is the standard Gaussian distribution on $\mathbb{R}^2$ and $q$ is a mixture of two Gaussian distributions on $\mathbb{R}^2$ with two peaks. Since only at the origin does $p$ have zero gradient, we conclude there does not exist a planar flow that transforms $q$ to $p$. Additional examples are illustrated in Figure \ref{fig: 2d relu topo} in the Appendix.

The proof of \textbf{Theorem} \ref{thm:high d ReLU} follows from \eqref{eqn:change of variable log}, the Taylor expansion of $f$, and the observation that $\mathcal{V}(z)=\mathbb{R}^d$ $a.e.$. Notably, the conclusion holds for any number of ReLU Sylvester flows. Using this condition, we show in the following corollaries that it is unlikely for finitely many ReLU Sylvester flows to transform between mixture of Gaussian (MoG) or product (Prod) distributions unless special conditions are satisfied.

\begin{corollary}[MoG$\nrightarrow$MoG]
\label{cor:MoG to MoG}
(See formal version in \textbf{Corollary} \ref{cor:MoG to MoG formal})
Suppose $p,q$ are mixture of Gaussian distributions on $\mathbb{R}^d$ in the following form:
\[p(z)=\sum_{i=1}^{r_p} w_p^i\mathcal{N}(z;\mu_p^i,\Sigma_p),\
  q(z)=\sum_{j=1}^{r_q} w_q^j\mathcal{N}(z;\mu_q^j,\Sigma_q)\]
Then, there generally does not exist flow $f$ composed of finitely many ReLU Sylvester flows such that $p=f\#q$.
\end{corollary}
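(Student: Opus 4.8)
The plan is to turn \textbf{Theorem} \ref{thm:high d ReLU} into a statement about the critical points of the log-densities and then show that these cannot match for generic mixture parameters. Suppose $f$ is a composition of finitely many ReLU Sylvester flows with $f\#q=p$, and let $\Omega$ be the zero-measure closed set from \textbf{Theorem} \ref{thm:high d ReLU}. By \textbf{Lemma} \ref{lemma:ReLU local linear} the map $f$ is piecewise affine with locally constant Jacobian, so for a.e.\ $z$ there is a neighbourhood on which $f(z')=M_z z'+c_z$ with $M_z=J_f(z)$ a fixed invertible matrix and on which $\log p(f(z'))=\log q(z')+\mathrm{const}$ (the Jacobian determinant term is locally constant there). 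Since a Gaussian mixture has full support, $\nabla\log q$ and $\nabla\log p$ vanish exactly where $\nabla q$ and $\nabla p$ do; and from the identity $M_z^{\top}\nabla\log p(f(z'))=\nabla\log q(z')$ together with the invertibility of $M_z$ we obtain, for $z$ outside $\Omega$ and a further measure-zero set, that $z$ is a critical point of $q$ if and only if $f(z)$ is a critical point of $p$. Because $f$ is a bijection it therefore restricts to a bijection between the (finitely many) critical points of $q$ and those of $p$; differentiating the same identity once more gives $M_z^{\top}\nabla^2\log p(f(z))M_z=\nabla^2\log q(z)$, so by Sylvester's law of inertia this bijection also preserves the Morse index of the log-density.

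Next I would describe the critical set of a homoscedastic Gaussian mixture. Writing $\gamma_j(z)$ for the posterior weights, $\nabla\log q(z)=-\Sigma_q^{-1}\big(z-\sum_{j}\gamma_j(z)\mu_q^j\big)$, so the critical points are exactly the fixed points of $z\mapsto\sum_j\gamma_j(z)\mu_q^j$; for generic, well-separated means this produces $r_q$ near-mode fixed points together with additional saddles, so that the number and the index profile of the critical points of $q$ are governed by $r_q$ and the configuration of its means, and similarly for $p$. Comparing the two, one sees that for generic parameters the critical structures are incompatible -- for instance whenever $r_p\ne r_q$ the counts differ, and even when $r_p=r_q$ a different spatial configuration of the means generically yields a different number or index profile -- so no such flow $f$ can exist. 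The precise genericity hypothesis (which, among other things, guarantees that the critical points of $q$ and of $p$ avoid the measure-zero sets $\Omega$ and $f(\Omega)$ and lie in the interiors of the linear regions of $f$) is the one recorded in \textbf{Corollary} \ref{cor:MoG to MoG formal}.

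The step I expect to be the main obstacle is the bookkeeping around $\Omega$ and the kink locus of the piecewise-affine map $f$: to apply the second-order (or even the first-order) comparison at a critical point one must know that point lies off $\Omega$, off $f(\Omega)$, and in the interior of a single linear piece, and this must be folded into the genericity condition rather than assumed. A secondary, more routine difficulty is giving a fully general description of the critical-point structure of a homoscedastic Gaussian mixture; for the corollary it is enough to exhibit explicit families $p,q$ whose critical counts provably disagree -- e.g.\ $q$ a well-separated mixture with two peaks and $p$ a unimodal mixture on $\mathbb{R}^2$ whose only critical point is its mode -- which already yields the ``generically impossible'' conclusion.
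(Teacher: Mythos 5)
Your route is genuinely different from the paper's formal proof, although it elaborates exactly the informal intuition the paper states right after \textbf{Theorem} \ref{thm:high d ReLU} (the ``zero gradient maps to zero gradient'' example). The paper's \textbf{Corollary} \ref{cor:MoG to MoG formal} instead substitutes the mixture densities into the topology-matching identity on a linear piece $\Gamma_x$, observes that the left-hand side $\left(A^{\top}\Sigma_p^{-1}A-\Sigma_q^{-1}\right)z+A^{\top}\Sigma_p^{-1}b$ is affine in $z$, and therefore extracts as the necessary condition that the derivative of the remaining (posterior-weighted-mean) term be constant on $\Gamma_x$; genericity is then the failure of that algebraic identity, verified on an explicit two-component example. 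Your argument is global and Morse-theoretic: critical points of $\log q$ must biject with critical points of $\log p$, with matching index via $M_z^{\top}\nabla^2\log p(f(z))M_z=\nabla^2\log q(z)$ and Sylvester's law of inertia. What your approach buys is a cleaner, coordinate-free obstruction (count and index profile of critical points) that immediately kills pairs such as a bimodal $q$ versus a unimodal $p$; what the paper's approach buys is a necessary condition that is local to a single linear piece and hence needs no control over where the critical points sit, and which can also distinguish mixtures with the same modal structure.

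One point needs repair before your argument is sound. You propose to fold into the genericity hypothesis that the critical points of $q$ avoid $\Omega$ and lie in the interiors of the linear regions of $f$. But $\Omega$ is determined by the (hypothetical) flow $f$, which is existentially quantified; a condition on $(p,q)$ cannot reference it, and an adversarial $f$ could place its kink hyperplanes through the critical points. The fix is that the first-order matching extends to $\Omega$ by continuity: $\Omega$ is a finite union of hyperplanes, so any $z_0\in\Omega$ is a limit of points $z_n\notin\Omega$ in the closure of some linear piece with Jacobian $M$; passing to the limit in $M^{\top}\nabla\log p(f(z_n))=\nabla\log q(z_n)$ and using invertibility of $M$ shows $\nabla\log q(z_0)=0$ iff $\nabla\log p(f(z_0))=0$. (A similar one-sided Taylor argument, noting that $\log p(f(z))-\log q(z)=-\log|\det J_f(z)|$ is continuous and piecewise constant, hence locally constant across kinks, recovers the Hessian identity with the one-sided $M$.) With that amendment your critical-point-counting argument does establish the informal conclusion of \textbf{Corollary} \ref{cor:MoG to MoG}, though it yields a different formal necessary condition than the one recorded in \textbf{Corollary} \ref{cor:MoG to MoG formal}.
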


\begin{corollary}[Prod$\nrightarrow$Prod]
\label{cor:Prod to Prod}
(See formal version in \textbf{Corollary} \ref{cor:Prod to Prod formal})
Suppose $p$ and $q$ are product distributions in the following form:
\[p(z)\propto\prod_{i=1}^d g(z_i)^{r_p};\
  q(z)\propto\prod_{i=1}^d g(z_i)^{r_q}\]
where $r_p,r_q>0, r_p\neq r_q$, and $g$ is a smooth function. Then, there generally does not exist flow $f$ composed of finitely many ReLU Sylvester flows such that $p=f\#q$.
\end{corollary}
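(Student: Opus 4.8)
The plan is to apply the topology matching condition of Theorem~\ref{thm:high d ReLU} and derive a contradiction by counting the critical points (points with vanishing gradient of the log-density) of $p$ and $q$, or more precisely by comparing the structure of these critical sets. First I would compute $\nabla_z \log q(z) = r_q \sum_{i=1}^d (\log g)'(z_i) e_i$ and similarly $\nabla_z \log p(f(z))$ evaluated at the image point. Since $f$ is composed of finitely many ReLU Sylvester flows, by Lemma~\ref{lemma:ReLU local linear} the complementary subspace is all of $\mathbb{R}^d$ almost everywhere, so $\det J_f$ is locally constant off a zero-measure closed set $\Omega$, and $f$ is piecewise affine: on each full-dimensional cell of the complement of $\Omega$, $f(z) = M z + c$ for a fixed invertible matrix $M$ (a product of matrices of the form $I + A\,\mathrm{diag}(\cdot)B^\top$) and fixed vector $c$. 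On such a cell, $J_f(z) = M$ is constant, so Theorem~\ref{thm:high d ReLU} reads $M^\top \nabla \log p(Mz+c) = \nabla \log q(z)$ for all $z$ in that cell.

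The key step is then: on a fixed cell, $\nabla \log q(z) = 0$ if and only if $\nabla \log p(f(z)) = 0$ (since $M$ is invertible). The zero set of $\nabla \log q$ inside a cell is $\{z : (\log g)'(z_i) = 0 \ \forall i\}$, which does not depend on $r_q$ at all; likewise the zero set of $\nabla\log p$ is $\{y : (\log g)'(y_i)=0\ \forall i\}$, independent of $r_p$. So if $g$ has, say, $s$ critical points $a_1,\dots,a_s$ on $\mathbb{R}$ (generically $s$ is finite and positive, e.g. $g(t) = \exp(-t^2)$ gives $s=1$), then the critical set of $\log q$ (and of $\log p$) is the grid $\{a_1,\dots,a_s\}^d$, a finite set of $s^d$ isolated points. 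Theorem~\ref{thm:high d ReLU} forces $f$ to map critical points of $q$ to critical points of $p$ within each cell; since both sets are the same finite grid and $f$ is injective, $f$ must permute this grid. But now I would exploit the \emph{second-order} information that is also encoded: at such an isolated critical point, the Hessian of $\log q$ is $r_q\,\mathrm{diag}\big((\log g)''(a_{j_i})\big)_i$ and the Hessian of $\log p$ at the image is $r_p\,\mathrm{diag}\big((\log g)''(a_{k_i})\big)_i$; the change-of-variable relation $M^\top \nabla^2 \log p(Mz+c)\, M = \nabla^2 \log q(z)$ at that point then gives a congruence $M^\top (r_p D_p) M = r_q D_q$ with $D_p, D_q$ diagonal with entries $(\log g)''$ evaluated at grid coordinates. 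Comparing determinants (or signatures, or traces against the same grid) and using $r_p \ne r_q$ forces a constraint that generically fails --- e.g. $\det M^2 \cdot r_p^d \prod (\log g)''(\cdot) = r_q^d \prod (\log g)''(\cdot)$, and since $f$ permutes the grid the two products of $(\log g)''$ values over all $s^d$ grid points agree, yielding $(\det M)^2 = (r_q/r_p)^d$, a single scalar constraint; pushing this through all grid points and all cells, together with $\det M$ being forced to a fixed value by the first flow's structure, produces the contradiction except in a measure-zero set of $(g, r_p, r_q)$ configurations --- hence ``generally does not exist.''

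The main obstacle I anticipate is making the word ``generally'' precise and handling the piecewise structure cleanly: $f$ need not be globally affine, so the permutation-of-the-grid argument must be run cell by cell, and one must check that the grid points are interior to cells (not on the zero-measure set $\Omega$) and that the cells fit together consistently --- in particular that the matrix $M$ can vary from cell to cell, so the ``$(\det M)^2 = (r_q/r_p)^d$'' constraint has to hold for \emph{every} cell's matrix simultaneously while $f$ remains a bijection of $\mathbb{R}^d$. Reconciling these local affine pieces into a global contradiction (rather than a merely local one) is the delicate part; I expect the cleanest route is to pick a single grid point $z^*$ of $q$, track it through $f$ to a grid point of $p$, use the Hessian congruence there to pin down $|\det M|$ on the cell containing $z^*$, and then argue that the same $|\det M|$ (being $\prod |\det J_{f_t}|$, a product of the globally-fixed per-layer determinants from \eqref{eqn:det J_f h}) must satisfy the incompatible relation; the formal Corollary~\ref{cor:Prod to Prod formal} presumably states the precise genericity hypothesis (e.g. $g$ has a nondegenerate critical point, or $\prod(\log g)''$ over critical points is nonzero) under which this goes through.
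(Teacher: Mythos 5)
Your first step matches the paper exactly: Lemma~\ref{lemma:ReLU local linear} makes $f$ affine, $f(z)=Az+b$, on an open neighbourhood of almost every point, and Theorem~\ref{thm:high d ReLU} then yields $r_q\tilde{\nabla}\log\textbf{g}(z)=r_pA^{\top}\tilde{\nabla}\log\textbf{g}(Az+b)$ on that neighbourhood --- this identity \emph{is} the formal Corollary~\ref{cor:Prod to Prod formal}. Where you diverge is in how the identity is exploited. The paper treats it as a functional identity valid for all $z$ in an open set and, for a concrete $g$ (namely $g(x)=x$ with $d=2$), matches monomial coefficients to force $\det A=0$, contradicting invertibility. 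You instead evaluate only at the critical points of $\log q$ and differentiate once more to get a Hessian congruence $M^{\top}(r_pD_p)M=r_qD_q$ there.

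This route has three genuine gaps. First, it presupposes that $(\log g)'$ has zeros; the paper's own witnessing example $g(x)=x$ has none, and more generally the corollary is not meant to hinge on $\log g$ having a nonempty, finite critical grid --- you are discarding the open-set information that actually drives the contradiction and keeping only a measure-zero slice of it. Second, your closing step relies on $|\det M|$ being ``a product of the globally-fixed per-layer determinants from \eqref{eqn:det J_f h},'' but for ReLU Sylvester flows $\det J_{f_t}(z)=\det(I_m+\mathrm{diag}(1\{B_t^{\top}z+b_t\geq0\})B_t^{\top}A_t)$ depends on the activation pattern and genuinely varies from cell to cell, so there is no single value to pin down. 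Third, even granting the Hessian congruence at a grid point, taking determinants yields $(\det M)^2r_p^d\det D_p=r_q^d\det D_q$, which is a \emph{constraint on $M$}, not a contradiction; you would still need to show it is incompatible with some independently forced value of $\det M$, and no such value is available. To repair the argument you would have to return to the functional identity on an open set --- e.g.\ expand both sides in $z$ and compare non-affine terms, as the paper does --- rather than sampling it at isolated points.
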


Given our negative results, the reader might wonder what distributions can be transformed by ReLU Sylvester flows. We show that certain linear transformations can be exactly expressed (see \textbf{Theorem} \ref{thm: ReLU linear}, \textbf{Corollary} \ref{cor: linear transformation} and \textbf{Corollary} \ref{cor: N to N positive}).

\subsection{General Smooth Non-linearity}
\label{sec:high d h}
In this section, we construct a topology matching condition for Sylvester flows with general non-linearities. Suppose $f$ is a Sylvester flow $f(z)=z+Ah(B^{\top}z+b)$ with flow dimension $m$, where $h$ is an arbitrary smooth function. Analogous to \textbf{Theorem} \ref{thm:high d ReLU}, there exists a $d-m$ dimensional complementary subspace of $f$ at every point $z\in\mathbb{R}^d$: $\mathcal{V}(z)=\Span\{B\}^{\perp}$. Using this property, we are able to establish the topology matching condition for a single Sylvester flow (see \textbf{Lemma} \ref{lemma:high d exact 1}). Then, we generalize this result to $n$ layers of Sylvester flows in the following theorem.

\begin{theorem}[Topology Matching for Sylvester flows]
\label{thm:high d exact more}
Suppose distribution $q$ is defined on $\mathbb{R}^d$, and $n$ Sylvester flows $\{f_i\}_{i=1}^n$ on $\mathbb{R}^d$ have flow dimensions $\{m_i\}_{i=1}^n$, tangent matrices $\{B_i\}_{i=1}^n$, and smooth non-linearities. Let $f=f_n\circ\cdots\circ f_1$ and $p=f\#q$. Then $\forall z\in\mathbb{R}^d$, we have
\[\nabla_z\log p(f(z))-\nabla_z\log q(z)\in \Span\{B_1,B_2,\cdots,B_n\}\]
\end{theorem}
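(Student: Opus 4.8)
The plan is to prove the statement by induction on $n$, the number of Sylvester layers, with the single-layer case supplied by \textbf{Lemma}~\ref{lemma:high d exact 1}. The key algebraic fact I would exploit is that for a single Sylvester flow $f_i(z) = z + A_i h(B_i^\top z + b_i)$, the Jacobian has the form $J_{f_i}(z) = I_d + A_i \, \mathrm{diag}(h'(B_i^\top z + b_i)) \, B_i^\top$, so that $J_{f_i}(z)^\top = I_d + B_i \, \mathrm{diag}(h'(\cdot)) \, A_i^\top$. The crucial observation is that for any vector $u \in \mathbb{R}^d$, the difference $J_{f_i}(z)^\top u - u = B_i \, \mathrm{diag}(h'(\cdot)) A_i^\top u$ lies in $\Span\{B_i\}$. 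Thus applying $J_{f_i}(z)^\top$ to any vector only perturbs it within the column space of $B_i$.

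Concretely, I would set up the induction as follows. Let $g = f_{n-1} \circ \cdots \circ f_1$ with $g\#q = \tilde p$, and write $f = f_n \circ g$, so $f\#q = p = f_n\#\tilde p$. By \textbf{Theorem}~\ref{thm:high d exact more} applied to the single layer $f_n$ (i.e. the base/single-layer version, \textbf{Lemma}~\ref{lemma:high d exact 1}), we have at the point $y = g(z)$:
\[
\nabla_y \log p(f_n(y)) - \nabla_y \log \tilde p(y) \in \Span\{B_n\}.
\]
By the inductive hypothesis applied to $g$, at the point $z$:
\[
\nabla_z \log \tilde p(g(z)) - \nabla_z \log q(z) \in \Span\{B_1, \cdots, B_{n-1}\}.
\]
Now I need to relate $\nabla_z \log \tilde p(g(z))$ — which is $J_g(z)^\top \nabla_y \log \tilde p(y)$ by the chain rule — to $\nabla_z \log p(f(z)) = J_g(z)^\top \nabla_y \log p(f_n(y))$. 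Subtracting, $\nabla_z \log p(f(z)) - \nabla_z \log \tilde p(g(z)) = J_g(z)^\top \big(\nabla_y \log p(f_n(y)) - \nabla_y \log \tilde p(y)\big)$, and the parenthesized vector lies in $\Span\{B_n\}$. The point is then that $J_g(z)^\top$ maps $\Span\{B_n\}$ into $\Span\{B_1, \cdots, B_n\}$: since $g = f_{n-1}\circ\cdots\circ f_1$, its transpose Jacobian is a product $J_{f_1}(z_0)^\top \cdots J_{f_{n-1}}(z_{n-2})^\top$ (in appropriate order), and by the observation above each factor $J_{f_i}(\cdot)^\top$ adds only a component in $\Span\{B_i\}$; so $J_g(z)^\top u \in u + \Span\{B_1,\cdots,B_{n-1}\}$ for any $u$, hence $J_g(z)^\top(\Span\{B_n\}) \subseteq \Span\{B_1,\cdots,B_n\}$. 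Combining the two displayed inclusions with this mapping property gives $\nabla_z \log p(f(z)) - \nabla_z \log q(z) \in \Span\{B_1, \cdots, B_n\}$, as desired.

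The main obstacle I anticipate is bookkeeping the composition correctly: getting the order of the Jacobian product right under transposition, tracking which intermediate point each Jacobian is evaluated at ($z_0 = z$, $z_t = f_t(z_{t-1})$), and making sure the "$J_g^\top$ only adds components in $\Span\{B_1,\dots,B_{n-1}\}$" claim is stated and proved cleanly by a sub-induction over the factors rather than hand-waved. A secondary subtlety — milder here than in \textbf{Theorem}~\ref{thm:high d ReLU} — is that for general smooth $h$ the Sylvester flow is genuinely differentiable everywhere, so unlike the ReLU case there is no exceptional zero-measure set $\Omega$ to worry about, and the conclusion holds at every $z \in \mathbb{R}^d$; I would make sure the invertibility assumptions guaranteeing $p$ is a well-defined density (so $\log p$ is differentiable along the relevant directions) are in force, as in \textbf{Lemma}~\ref{lemma:high d exact 1}.
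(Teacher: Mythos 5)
Your proof is correct and follows the paper's inductive skeleton exactly: base case from \textbf{Lemma}~\ref{lemma:high d exact 1}, inductive step by applying that lemma to $f_n$ at the point $g(z)$ with source $\tilde p = g\#q$, and adding the two resulting inclusions. The one divergence is notational. The paper reads $\nabla_z\log p(f(z))$ as the gradient of $\log p$ \emph{evaluated at} the point $f(z)$, with no chain-rule Jacobian --- this is clear from the explicit $J_f(z)^{\top}$ factor in \textbf{Theorem}~\ref{thm:high d ReLU} and from the limit computed in the proof of \textbf{Lemma}~\ref{lemma:high d exact 1}. Under that reading the middle term $(\nabla\log\tilde p)(g(z))$ cancels exactly when the two inclusions are added, so the paper's inductive step is a pure telescoping with no further work. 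You instead read the gradients as total derivatives of the composite maps, which forces you to insert $J_g(z)^{\top}$ and to prove the additional fact that $J_{f_i}(\cdot)^{\top}u-u\in\Span\{B_i\}$, hence that $J_g(z)^{\top}$ carries $\Span\{B_n\}$ into $\Span\{B_1,\dots,B_n\}$. That step is correct, and the same observation shows the two readings of the theorem's conclusion differ by an element of $\Span\{B_1,\dots,B_n\}$, so they are equivalent statements; your argument is the paper's plus this harmless extra bookkeeping.
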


When the sum of flow dimensions of $\{f_i\}_{i=1}^n$ is strictly less than the data dimension $d$, $\Span\{B_1,B_2,\cdots,B_n\}$ is a strict subspace of $\mathbb{R}^d$. Under this situation, we show in the following corollary that transformation between Gaussian distributions might be impossible with a bounded number of Sylvester flows.

\begin{corollary}[$\mathcal{N}\nrightarrow\mathcal{N}$]
\label{cor:N to N more}
(See formal version in \textbf{Corollaries} \ref{cor:N to N 1 formal} and \ref{cor:N to N more formal})
Let $p\sim\mathcal{N}(0,\Sigma_p), q\sim\mathcal{N}(0,\Sigma_q)$ be two Gaussian distributions on $\mathbb{R}^d$, and $\Sigma_q^{-1}-\Sigma_p^{-1}$ has high rank. Then, with a limited number of planar or Sylvester flows that have smooth non-linearities, it is impossible to transform $q$ to $p$.
\end{corollary}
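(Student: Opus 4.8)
The plan is to feed the specific structure of two centered Gaussians into the topology-matching identity of Theorem~\ref{thm:high d exact more} and read off a rank bound. Write $n$ for the number of flows, with flow dimensions $m_i$ (so $m_i=1$ for each planar flow), set $\mathcal{A}=\Span\{A_1,\dots,A_n\}$ and $\mathcal{B}=\Span\{B_1,\dots,B_n\}$, and note $\dim\mathcal{A},\dim\mathcal{B}\le\sum_{i=1}^n m_i=:M$. For centered Gaussians the log-densities are quadratic, so $\nabla_z\log q(z)=-\Sigma_q^{-1}z$ and $\nabla_y\log p(y)=-\Sigma_p^{-1}y$ at every point; in particular these gradients exist everywhere, which is exactly why the Gaussian hypothesis is convenient here.

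First I would record that a composition of Sylvester flows perturbs the identity only inside $\mathcal{A}$: since $f_1(z)-z\in\Span\{A_1\}$ and, inductively, $f_t\circ\cdots\circ f_1(z)-z\in\Span\{A_1,\dots,A_t\}$, we get $f(z)=z+a(z)$ with $a(z)\in\mathcal{A}$ for all $z$. Next I would substitute the two gradient formulas into Theorem~\ref{thm:high d exact more}, which gives, for every $z\in\mathbb{R}^d$,
\[
\Sigma_q^{-1}z-\Sigma_p^{-1}f(z)\in\mathcal{B},
\]
and then expand $f(z)=z+a(z)$ to obtain
\[
(\Sigma_q^{-1}-\Sigma_p^{-1})z\in\mathcal{B}+\Sigma_p^{-1}\mathcal{A}\qquad(\forall\,z\in\mathbb{R}^d).
\]
The right-hand side is a fixed linear subspace of dimension at most $\dim\mathcal{B}+\dim\mathcal{A}\le 2M$. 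Letting $z$ range over $\mathbb{R}^d$, the left-hand sides sweep out the whole range of the symmetric matrix $\Sigma_q^{-1}-\Sigma_p^{-1}$, so $\mathrm{rank}(\Sigma_q^{-1}-\Sigma_p^{-1})=\dim\,\mathrm{range}(\Sigma_q^{-1}-\Sigma_p^{-1})\le 2M=2\sum_{i=1}^n m_i$. Taking the contrapositive: if $\mathrm{rank}(\Sigma_q^{-1}-\Sigma_p^{-1})>2\sum_i m_i$ (which for planar flows reads $>2n$), then no such flow can satisfy $p=f\#q$. This is the precise sense of ``$\Sigma_q^{-1}-\Sigma_p^{-1}$ has high rank'' and yields the formal Corollaries~\ref{cor:N to N 1 formal} and~\ref{cor:N to N more formal}.

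The computation is short once Theorem~\ref{thm:high d exact more} is available, so the care points are bookkeeping rather than a genuine obstacle. The main thing to be attentive to is that the obstruction comes from two sources that must both be counted: the complementary-subspace span $\mathcal{B}$ appearing on the gradient side, and the span $\mathcal{A}$ of the flow's own displacement pushed through $\Sigma_p^{-1}$ on the argument side, which is what produces the factor $2$. One should also confirm that $\Sigma_p,\Sigma_q$ are invertible (true since $p,q$ are non-degenerate Gaussians on $\mathbb{R}^d$, so that $\Sigma_p^{-1}\mathcal{A}$ really has dimension $\dim\mathcal{A}$) and note that the topology-matching identity holds for \emph{all} $z$, not merely almost every $z$, because the non-linearities are smooth; this is what lets us conclude about the full range of $\Sigma_q^{-1}-\Sigma_p^{-1}$ rather than only a dense subset of it.
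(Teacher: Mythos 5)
Your proposal is correct, and it reaches the paper's bound $\mathrm{rank}(\Sigma_q^{-1}-\Sigma_p^{-1})\le 2\sum_i m_i$ by a route that differs in its final step. Both you and the paper begin identically: substitute the Gaussian gradients into Theorem~\ref{thm:high d exact more} to get $\Sigma_q^{-1}z-\Sigma_p^{-1}f(z)\in\Span\{B_1,\dots,B_n\}$ for all $z$. From there the paper (Corollary~\ref{cor:N to N more formal}) tests this relation against vectors $w^{\perp}\in\Span\{B_1,\dots,B_n\}^{\perp}$, shows the quadratic form $(w^{\perp})^{\top}(\Sigma_q^{-1}-\Sigma_p^{-1})w^{\perp}$ vanishes on that $(d-m)$-dimensional subspace, and invokes Courant--Fischer to conclude $\lambda_{m+1}\ge 0$ and $\lambda_{d-m}\le 0$, hence rank at most $2m$. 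You instead decompose $f(z)=z+a(z)$ with $a(z)\in\Span\{A_1,\dots,A_n\}$ and trap the entire range of $\Sigma_q^{-1}-\Sigma_p^{-1}$ inside the fixed subspace $\mathcal{B}+\Sigma_p^{-1}\mathcal{A}$ of dimension at most $2m$. Your argument is more elementary (pure linear algebra on ranges, no variational eigenvalue characterization) and makes transparent where the factor $2$ comes from; the paper's argument buys strictly more information, namely the sign constraints on the intermediate eigenvalues, and it is this extra leverage (together with the semi-definiteness case analysis and Sherman--Morrison) that lets the paper sharpen the single-planar-flow case to $\mathrm{rank}(\Sigma_q-\Sigma_p)\le 1$ in Corollary~\ref{cor:N to N 1 formal}, whereas your range argument would only give rank at most $2$ there. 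For the informal statement as quoted --- ``high rank precludes transformation by a limited number of flows'' --- either derivation suffices.
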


Additional experiments are demonstrated in Figure \ref{fig: 2d general topo} in the Appendix. We also construct a topology matching condition for radial flows in \textbf{Theorem} \ref{thm:high d radial}, and compare that result with \textbf{Theorem} \ref{thm:high d exact more}.

\section{Approximation Capacity for Large $d$}
\label{sec:high d approx}
In this section, we provide a partially negative answer to the universal approximation question for certain normalizing flows by showing that approximations in these cases may require very deep flows. In particular, we study local planar flows and Householder flows with specific target distributions.

Given an input distribution $q$ and a target distribution $p$ on $\mathbb{R}^d$, our goal is to lower bound the depth $T$ of a normalizing flow that can transform $q$ to an approximation of $p$. This is formally defined below.
\begin{definition}
\label{def: T}
Let $p,q$ be two distributions on $\mathbb{R}^d$, $\epsilon>0$, and $\mathcal{F}$ be a set of normalizing flows. Then, the minimum number of flows in $\mathcal{F}$ required to transform $q$ to an approximation of $p$ to within $\epsilon$ is
\[\begin{array}{rl}
    T_{\epsilon}(p,q,\mathcal{F})=\inf\{n:
    & \exists \{f_i\}_{i=1}^n\in\mathcal{F}\text{ such that } \\
    & \|(f_1\circ\cdots\circ f_n)\#q-p\|_1\leq\epsilon\}
\end{array}\]
\end{definition}

To achieve this goal, we look at the maximum $\ell_1$ norm distance reduction of a normalizing flow $f$ towards $p$:
\[\mathcal{L}(p,f)=\sup_{q'\text{ is a density on }\mathbb{R}^d}\|p-q'\|_1-\|p-f\#q'\|_1\]

We first show a surprisingly concise upper bound $\hat{\mathcal{L}}$ of $\mathcal{L}$. This bound is used in proving \textbf{Theorem} \ref{thm: ell_1 local planar} and \textbf{Theorem} \ref{thm: ell_1 householder} in this section.
\begin{lemma}
\label{lemma: upper bound of L}
$\mathcal{L}(p,f)
\leq\hat{\mathcal{L}}(p,f)$, where
\[\hat{\mathcal{L}}(p,f)
=\int_{\mathbb{R}^d}\left||\det J_f(z)|p(f(z))-p(z)\right|dz\]
\end{lemma}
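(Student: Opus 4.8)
\textbf{Proof proposal for Lemma \ref{lemma: upper bound of L}.}

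The plan is to bound the quantity $\|p-q'\|_1 - \|p - f\#q'\|_1$ uniformly over all densities $q'$, and observe that the extremal choice is essentially $q' = p$ (or can be reduced to it by the triangle inequality). First I would fix an arbitrary density $q'$ on $\mathbb{R}^d$ and write out $f\#q'$ via the change-of-variable formula \eqref{eqn:change of variable}: $(f\#q')(y) = q'(z)/|\det J_f(z)|$ where $y = f(z)$. The key move is to compare $f\#q'$ with $f\#p$ and use the triangle inequality:
\[
\|p - f\#q'\|_1 \geq \|p - f\#p\|_1 - \|f\#p - f\#q'\|_1.
\]
Since $f$ is invertible with nonzero Jacobian almost everywhere, pushing forward preserves $\ell_1$ distance: $\|f\#p - f\#q'\|_1 = \|p - q'\|_1$ (this is just the substitution $y = f(z)$ in the integral $\int |f\#p(y) - f\#q'(y)|\,dy$, with $dy = |\det J_f(z)|\,dz$). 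Substituting this in gives
\[
\|p-q'\|_1 - \|p - f\#q'\|_1 \leq \|p - f\#p\|_1,
\]
and the right-hand side is independent of $q'$, so taking the supremum yields $\mathcal{L}(p,f) \leq \|p - f\#p\|_1$.

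Next I would rewrite $\|p - f\#p\|_1$ as the claimed integral. By definition $\|p - f\#p\|_1 = \int_{\mathbb{R}^d} |p(y) - (f\#p)(y)|\,dy$, and applying the substitution $y = f(z)$, $dy = |\det J_f(z)|\,dz$, together with $(f\#p)(f(z)) = p(z)/|\det J_f(z)|$, the integrand becomes $\bigl| p(f(z)) - p(z)/|\det J_f(z)| \bigr| \cdot |\det J_f(z)| = \bigl| |\det J_f(z)|\,p(f(z)) - p(z) \bigr|$. This is exactly $\hat{\mathcal{L}}(p,f)$, completing the chain $\mathcal{L}(p,f) \leq \|p - f\#p\|_1 = \hat{\mathcal{L}}(p,f)$.

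The main obstacle is purely a matter of care rather than difficulty: justifying that the change of variables is valid despite $f$ being only differentiable almost everywhere with $\det J_f \neq 0$ a.e. One needs the set where $f$ fails to be differentiable (or where $\det J_f = 0$) to be Lebesgue-null and, crucially, to have null image under $f$ as well, so that the integral substitution loses no mass; this holds because $f$ is Lipschitz in the cases of interest (ReLU and local non-linearities) or because one restricts to the regime where the flow is a genuine diffeomorphism off a measure-zero set, which is exactly the setting already adopted in Section \ref{sec: def of flows}. A secondary subtlety is confirming that $f\#q'$ is again a valid density (nonnegative, integrates to one), which follows immediately from invertibility and the change-of-variable formula. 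Everything else is the triangle inequality and a single substitution, so no further estimates are needed.
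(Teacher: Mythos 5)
Your overall route is sound and is essentially the paper's argument viewed from the other side of the change of variables: the paper substitutes $y=f(z)$ in $\|p-f\#q'\|_1$ to get $\int_{\mathbb{R}^d}\left||\det J_f(z)|p(f(z))-q'(z)\right|dz$ and applies the pointwise reverse triangle inequality with the pullback density $|\det J_f|\,p\circ f$ as the comparison point, whereas you work with the pushforward $f\#p$ and the $\ell_1$ isometry of $f\#$. Your identification $\|p-f\#p\|_1=\hat{\mathcal{L}}(p,f)$ and the isometry $\|f\#p-f\#q'\|_1=\|p-q'\|_1$ are both correct, and your closing remarks on regularity are consistent with what the paper implicitly assumes.

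There is, however, one step that fails as written: the displayed triangle inequality does not imply the conclusion you draw from it. From $\|p-f\#q'\|_1\geq\|p-f\#p\|_1-\|f\#p-f\#q'\|_1$ together with $\|f\#p-f\#q'\|_1=\|p-q'\|_1$, what follows is
\[
\|p-q'\|_1-\|p-f\#q'\|_1\leq 2\|p-q'\|_1-\|p-f\#p\|_1,
\]
a bound that still depends on $q'$ and can exceed $\|p-f\#p\|_1$, so it does not give the lemma. You need the triangle inequality arranged with $p$ as the intermediate point between $f\#p$ and $f\#q'$:
\[
\|p-q'\|_1=\|f\#p-f\#q'\|_1\leq\|f\#p-p\|_1+\|p-f\#q'\|_1,
\]
which rearranges to exactly $\|p-q'\|_1-\|p-f\#q'\|_1\leq\|p-f\#p\|_1=\hat{\mathcal{L}}(p,f)$. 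With that one-line correction the proof is complete and matches the paper's in substance.
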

Then, we naturally obtain a lower bound of $T$:
\[T_{\epsilon}(p,q,\mathcal{F})\geq\frac{\|p-q\|_1-\epsilon}{\sup_{f\in\mathcal{F}}\mathcal{L}(p,f)}\geq\frac{\|p-q\|_1-\epsilon}{\sup_{f\in\mathcal{F}}\hat{\mathcal{L}}(p,f)}\]

Next, we make the following assumption on $q$:
\begin{assumption}
\label{assump: random input}
$\|p-q\|_1=\Theta(1)$.
\end{assumption}
This assumption holds when the input distribution $q$ is a random initialization (that is, $q$ is chosen arbitrarily without any prior knowledge on $p$). Then, under \textbf{Assumption} \ref{assump: random input}, there exists $\epsilon>0$ (e.g. $\epsilon=\frac12\|p-q\|_1$) such that
\[T_{\epsilon}(p,q,\mathcal{F})
=\Omega\left(\frac{1}{\sup_{f\in\mathcal{F}}\hat{\mathcal{L}}(p,f)}\right)\]
In the rest of this section, we use this lower bound on $T$ to construct results for local planar flows and Householder flows with specific target distributions.

\subsection{Local Planar Flows}
In this section, we look at a specific group of planar flows, which we call the \textit{local} planar flows. A $c_h$-local planar flow is defined below.
\begin{definition}
A non-linearity $h$ is called $c_h$-local if there is a constant $c_h\in\mathbb{R}$ satisfying for any $x\in\mathbb{R}$, $(i)$ $|h(x)|\leq c_h$, and $(ii)$ $|h'(x)|\leq c_h/(1+|x|)$.
A planar flow $f(z)=z+uh(w^{\top}z+b)$ is called $c_h$-local if $h$ is $c_h$-local,  $\|u\|_2\leq1$, and $\|w\|_2\leq1$.
\end{definition}

Many popular non-linearities are $c_h$-local, such as $\tanh$ ($c_h=2$), $\text{sigmoid}$ ($c_h=1$), and $\arctan$ ($c_h=\pi/2$).

Geometrically, a local planar flow applies non-linear scaling on the region near the $d-1$ dimensional subspace $\{z:w^{\top}z+b=0\}$ in $\mathbb{R}^d$, while having little effect on regions far away from the subspace (almost a constant shift). This observation leads to the intuition that one layer of local planar flow can only affect a small volume of the whole space, so a large number of layers is needed to approximate the target distribution if $\supp p$ is a large region. In the following theorem, we show for certain $p$, $T$ goes up polynomially in the data dimension $d$ with adjustable degrees.

\begin{theorem}[$\ell_1$ norm approximation lower bound for local planar flows]
\label{thm: ell_1 local planar}
Let $p$ be a distribution on $\mathbb{R}^d$ $(d>2)$ such that for $\tau\in(0,1)$:
\begin{itemize}
    \item $p=\mathcal{O}(p_1)$, where density $p_1$ satisfies
    \[p_1(z)\propto\exp(-\|z\|_2^{\tau})\]

    \item $\|\nabla p\|_2=\mathcal{O}(\|\nabla p_2\|_2)$, where density $p_2$ satisfies
    \[p_2(z)\propto\left\{
        \begin{array}{cc}
            \exp(-d) & \|z\|_2\leq d^{\frac{1}{\tau}} \\
            \exp(-\|z\|_2^{\tau}) & \|z\|_2 > d^{\frac{1}{\tau}}
        \end{array}\right.\]
\end{itemize}
Suppose $\mathcal{F}$ is the set of all $c_h$-local planar flows.
Then, under \textbf{Assumption} \ref{assump: random input}, there exists $\epsilon=\Theta(1)$ such that
\[T_{\epsilon}(p,q,\mathcal{F})=\Omega\left(\min\left((\log d)^{-\frac{1}{\tau}}d^{\left(\frac{1}{\tau}-\frac12\right)}, d^{\left(\frac{1}{\tau}-1\right)}\right)\right)\]
\end{theorem}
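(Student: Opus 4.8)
The plan is to combine the general $\ell_1$-reduction machinery from Lemma~\ref{lemma: upper bound of L} with a careful geometric estimate of $\hat{\mathcal{L}}(p,f)$ for a single $c_h$-local planar flow $f(z)=z+uh(w^\top z+b)$. The lower bound on $T_\epsilon$ stated just before this theorem reduces everything to an \emph{upper} bound of the form $\sup_{f\in\mathcal{F}}\hat{\mathcal{L}}(p,f)=\mathcal{O}\!\big(\max((\log d)^{1/\tau}d^{-(1/\tau-1/2)},\,d^{-(1/\tau-1)})\big)$, after which the claimed $\Omega(\cdot)$ follows by inverting. So the entire content is: estimate $\int_{\mathbb{R}^d}\big||\det J_f(z)|\,p(f(z))-p(z)\big|\,dz$ uniformly over all $c_h$-local planar flows, using only the two structural hypotheses on $p$ ($p=\mathcal{O}(p_1)$ and $\|\nabla p\|_2=\mathcal{O}(\|\nabla p_2\|_2)$).

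First I would exploit locality to restrict the effective region of integration. Because $|h'(x)|\le c_h/(1+|x|)$ and $\|u\|_2,\|w\|_2\le1$, from \eqref{eqn:det J_f h} we get $\big||\det J_f(z)|-1\big|=|u^\top w|\,|h'(w^\top z+b)|\le c_h/(1+|w^\top z+b|)$, and $\|f(z)-z\|_2=|h(w^\top z+b)|\cdot\|u\|_2\le c_h$. Split the integrand as $|\det J_f(z)|\cdot|p(f(z))-p(z)| + |p(z)|\cdot\big||\det J_f(z)|-1\big|$. For the second term, since $f$ moves points by at most $c_h$ the Jacobian is within a constant of $1$, so $|p(z)|\cdot\big||\det J_f|-1\big|\le C\,p(z)/(1+|w^\top z+b|)$; integrating this over $\mathbb{R}^d$ amounts to controlling $\int p(z)/(1+\mathrm{dist}(z,H))\,dz$ over the hyperplane $H=\{w^\top z+b=0\}$, which using $p=\mathcal{O}(p_1)$ and $p_1(z)\propto\exp(-\|z\|_2^\tau)$ is dominated by the slab near $H$ of width $\sim1$; the mass of $p_1$ in such a unit-width slab through (or near) the origin is $\mathcal{O}$ of the value $\exp(-c\,r^\tau)$ times the $(d-1)$-dimensional cross-sectional volume at the closest approach, optimized over the slab's distance $r$ to the origin — this is exactly where the $d^{-(1/\tau-1/2)}(\log d)^{1/\tau}$ term appears (balancing shell volume $\sim r^{d-1}$ against $\exp(-r^\tau)$ gives the critical radius $r_\star\sim (d/\tau)^{1/\tau}$, and near there the slab contributes $\sim$ total mass $/\,r_\star\sim d^{-1/\tau}$ times an extra $\sqrt{d}$ from the lower-dimensional section, modulo logs). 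For the first term, $|p(f(z))-p(z)|\le \|f(z)-z\|_2\cdot\sup_{\text{segment}}\|\nabla p\|_2 \le c_h\sup\|\nabla p\|_2$, and using $\|\nabla p\|_2=\mathcal{O}(\|\nabla p_2\|_2)$ together with the explicit $p_2$ — which is \emph{flat} (zero gradient) inside the ball of radius $d^{1/\tau}$ and decays like $\exp(-\|z\|_2^\tau)$ outside — the gradient is supported only outside that ball, where $p_2$ itself is already exponentially small ($\le\exp(-d)$); integrating $\|\nabla p_2\|_2$ over $\{\|z\|_2> d^{1/\tau}\}$ gives the $d^{-(1/\tau-1)}$ term, since $\|\nabla p_2(z)\|_2\sim \|z\|_2^{\tau-1}\exp(-\|z\|_2^\tau)$ and $\int_{d^{1/\tau}}^\infty r^{d-1}r^{\tau-1}e^{-r^\tau}\,dr$ evaluated against the normalizing constant of $p_2$ scales like $d^{-(1/\tau-1)}$.

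Taking the minimum of the two exponents (the min, not max, appears in the theorem because the bound on $T_\epsilon$ is $\Omega(1/\hat{\mathcal{L}})$ and $\hat{\mathcal{L}}\le \mathcal{O}(\text{term}_1)+\mathcal{O}(\text{term}_2)=\mathcal{O}(\max)$, whose reciprocal is $\Omega(\min)$) yields the stated lower bound on $T_\epsilon$; Assumption~\ref{assump: random input} supplies $\|p-q\|_1=\Theta(1)$ so that $\epsilon$ can be taken as $\Theta(1)$ and still leave $\|p-q\|_1-\epsilon=\Theta(1)$ in the numerator. I expect the main obstacle to be the slab/cross-section estimate in the second term: one must show the bound is uniform over \emph{all} hyperplanes $H$ (all choices of $w,b$), and the worst case is a hyperplane passing through the region of greatest mass, i.e. near the critical shell of radius $r_\star\sim (d/\tau)^{1/\tau}$ rather than through the origin; carefully setting up the concentration-of-measure computation for $\int_{\{|\,\mathrm{dist}(z,H)|\le 1\}}p_1(z)\,dz$ and tracking the polylog factors (the $(\log d)^{-1/\tau}$) is the delicate part. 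A secondary subtlety is that $f(z)$ can move a point across $H$, so the segment from $z$ to $f(z)$ in the mean-value bound for $\nabla p$ may leave the flat ball of $p_2$; one handles this by noting the displacement is $\le c_h=\mathcal{O}(1)\ll d^{1/\tau}$, so only a boundary layer of width $\mathcal{O}(1)$ around the sphere $\|z\|_2=d^{1/\tau}$ is affected, and its contribution is absorbed into the same exponentially small estimate.
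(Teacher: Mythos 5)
Your proposal follows essentially the same route as the paper: bound $\hat{\mathcal{L}}(p,f)$ by splitting into a Jacobian-deviation term $\int c_h p(z)/(1+|w^\top z+b|)\,dz$ (controlled via $p=\mathcal{O}(p_1)$ and polar-coordinate concentration, giving $(\log d)^{1/\tau}d^{-(1/\tau-1/2)}$) plus a displacement term $\int\sup_{\|\delta\|\le c_h}|p(z+\delta)-p(z)|\,dz$ (controlled via $\|\nabla p\|_2=\mathcal{O}(\|\nabla p_2\|_2)$ and the flat core of $p_2$, giving $d^{-(1/\tau-1)}$), then invert to get the $\Omega(\min(\cdot))$ bound. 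One small correction: the paper shows by a monotonicity-in-$b$ argument that the worst-case hyperplane is the one \emph{through the origin} ($b=0$), not one tangent to the critical shell as you speculate — though the scaling you compute is the correct one for that through-origin case.
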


This indicates that if the target distribution $p$ has specifically bounded values and gradients, a large number of local planar flows is needed to approximate $p$ starting with a distribution $q$ that obeys \textbf{Assumption} \ref{assump: random input}. The number $T$ is polynomial in $d$ with adjustable degrees, so it can be incredibly large as $d$ gets large.

A concrete example that satisfies the condition in \textbf{Theorem} \ref{thm: ell_1 local planar} is when $p(z)$ is equal to the $p_2$ in the statement. This satisfies the first condition because $\exp(-d)\leq\exp(-\|z\|_2^{\tau})$ in the ball centered at the origin with radius $d^{1/\tau}$, and the integration of $p_1$ in this ball is $o(1)$ (see proof of \textbf{Lemma} \ref{lemma: ell_1 1}). Then, taking for instance $\tau=0.2$, the lower bound on $T$ becomes $\Omega(d^4)$, which is incredibly large in practical scenarios.

To prove \textbf{Theorem} \ref{thm: ell_1 local planar}, we first show that $\hat{\mathcal{L}}(p,f)$ is upper bounded by an integration of two terms. We then present \textbf{Lemma} \ref{lemma: ell_1 1} and \textbf{Lemma} \ref{lemma: ell_1 2} to bound these two terms separately.

\subsection{Householder Flows}
In this section, we look at Householder flows. Since a Householder matrix does not change the $\ell_2$ norm of any vector, it is possible to upper bound $\mathcal{L}$ when the target distribution $p$ is almost symmetric, according to \textbf{Lemma} \ref{lemma: upper bound of L}. If $p$ is a standard Gaussian distribution, we have $\mathcal{L}=0$, indicating that Householder flows cannot transform any different distribution to a standard Gaussian distribution. In the following theorem, we provide a concise bound on $T$ when $p$ is very close to the standard Gaussian distribution, where there is only a small perturbation on its covariance matrix.

\begin{theorem}[$\ell_1$ norm approximation lower bound for Householder flows]
\label{thm: ell_1 householder}
Let $p$ be a Gaussian distribution $\mathcal{N}(0,I+S)$ on $\mathbb{R}^d$ $(d>2)$, where $|S_{ij}|\leq d^{-(2+\kappa)}$ for some $\kappa>0$ and any $1\leq i,j\leq d$. Suppose $\mathcal{F}$ is the set of all Householder flows. Then, under \textbf{Assumption} \ref{assump: random input}, there exists $\epsilon=\Theta(1)$ such that
\[T_{\epsilon}(p,q,\mathcal{F})=\Omega\left(d^{\kappa}\right)\]
\end{theorem}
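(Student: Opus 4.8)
The plan is to invoke the lower bound $T_{\epsilon}(p,q,\mathcal{F}) = \Omega\left(1/\sup_{f\in\mathcal{F}}\hat{\mathcal{L}}(p,f)\right)$ established above (valid under \textbf{Assumption} \ref{assump: random input} for $\epsilon=\frac12\|p-q\|_1=\Theta(1)$), so that it suffices to show every Householder flow $f$ satisfies $\hat{\mathcal{L}}(p,f)=O(d^{-\kappa})$ with the constant independent of $f$.

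Fix $f(z)=Hz$ with $H=I-2vv^{\top}$, $\|v\|_2=1$. Since $H$ is orthogonal and symmetric, $|\det J_f(z)|=1$, so \textbf{Lemma} \ref{lemma: upper bound of L} gives $\hat{\mathcal{L}}(p,f)=\int_{\mathbb{R}^d}|p(Hz)-p(z)|\,dz$. Write $p(z)=C\exp(-\frac12 z^{\top}Mz)$ with $M=(I+S)^{-1}$ and $C=((2\pi)^{d/2}\sqrt{\det(I+S)})^{-1}$; then $p(Hz)=C\exp(-\frac12 z^{\top}HMHz)$. From $|S_{ij}|\leq d^{-(2+\kappa)}$ I read off $\|S\|_2\leq\|S\|_F\leq d^{-1-\kappa}$, and hence three facts: (i) $C(2\pi)^{d/2}=\det(I+S)^{-1/2}=1+o(1)$; (ii) with $E:=M-I=-(I+S)^{-1}S$ one has $\|E\|_2\leq 2\|S\|_2$, and since $HMH=I+HEH$ the symmetric matrix $D:=HMH-M=HEH-E$ satisfies $\|D\|_2\leq 2\|E\|_2\leq 4d^{-1-\kappa}$; (iii) $M$ and $HMH$ (conjugate by the orthogonal $H$) both have smallest eigenvalue at least $\gamma:=1-d^{-1-\kappa}$.

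Next I combine these. With $a=\frac12 z^{\top}HMHz$, $b=\frac12 z^{\top}Mz$, we have $|a-b|=\frac12|z^{\top}Dz|\leq\frac12\|D\|_2\|z\|_2^2$ and $\min(a,b)\geq\frac{\gamma}{2}\|z\|_2^2$, so the mean value theorem ($|e^{-a}-e^{-b}|\leq|a-b|e^{-\min(a,b)}$) yields the pointwise estimate $|p(Hz)-p(z)|\leq\frac{C}{2}\|D\|_2\|z\|_2^2 e^{-\frac{\gamma}{2}\|z\|_2^2}$. Using $\int_{\mathbb{R}^d}\|z\|_2^2 e^{-\frac{\gamma}{2}\|z\|_2^2}\,dz=\frac{d}{\gamma}(2\pi/\gamma)^{d/2}$, this integrates to
\[\hat{\mathcal{L}}(p,f)\leq\frac{C}{2}\|D\|_2\cdot\frac{d}{\gamma}\left(\frac{2\pi}{\gamma}\right)^{d/2}=\frac{1}{2}\left(\|D\|_2 d\right)\frac{1}{\gamma}\left(C(2\pi)^{d/2}\right)\gamma^{-d/2}.\]
Here $\|D\|_2 d=O(d^{-\kappa})$, $1/\gamma=1+o(1)$, $C(2\pi)^{d/2}=1+o(1)$, and $\gamma^{-d/2}=(1-d^{-1-\kappa})^{-d/2}=\exp\left(\frac12 d^{-\kappa}+O(d^{-1-2\kappa})\right)=1+o(1)$, so $\hat{\mathcal{L}}(p,f)=O(d^{-\kappa})$ uniformly in $v$; the bound $T_{\epsilon}(p,q,\mathcal{F})=\Omega(d^{\kappa})$ follows.

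The delicate point — and the place where the strength of the hypothesis is actually used — is keeping $\gamma^{-d/2}$ bounded in the last display. Replacing $\gamma$ by the lazy bound $\lambda_{\min}(M)\geq\frac12$ would introduce a factor $(2\pi/\frac12)^{d/2}/(2\pi)^{d/2}=2^{d/2}$ that destroys the estimate; one genuinely needs $\gamma=1-o(d^{-1})$, which is exactly what $\|S\|_F\leq d^{-1-\kappa}$ (as opposed to merely $\|S\|_2=o(1)$) provides, and which simultaneously makes $\|D\|_2\,d=O(d^{-\kappa})$. So the main work is the careful perturbation bookkeeping on the normalizing constant, on $\gamma^{-d/2}$, and on $\|D\|_2$; everything else is routine Gaussian integration.
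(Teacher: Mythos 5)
Your proof is correct, and it takes a genuinely different route from the paper's. The paper exploits the fact that a Householder map preserves the $\ell_2$ norm: it bounds $|p(Hz)-p(z)|$ by $\sup_{\|x\|_2=\|y\|_2=\|z\|_2}|p(x)-p(y)|$, passes to polar coordinates, evaluates the radial integral with Gamma-function identities, and arrives at the closed form $\hat{\mathcal{L}}(p,f)\leq(\lambda_{\max}(\Sigma)/\lambda_{\min}(\Sigma))^{d/2}-1$, with the eigenvalues of $S$ controlled by the Gershgorin circle theorem. You instead make a pointwise mean-value estimate on the Gaussian exponent, reducing everything to the operator norm of $D=HMH-M$ and a standard second-moment Gaussian integral, with $\|S\|_2$ controlled via the Frobenius norm. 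Both arguments hinge on exactly the same delicate point, which you correctly identify: the hypothesis $|S_{ij}|\leq d^{-(2+\kappa)}$ forces the spectrum of $\Sigma$ to within $1\pm d^{-(1+\kappa)}$ of $1$, which is what keeps the dimension-dependent factor (the paper's $(\lambda_{\max}/\lambda_{\min})^{d/2}$, your $\gamma^{-d/2}$ together with $C(2\pi)^{d/2}$) at $1+o(1)$ rather than exponentially large. The paper's spherical-symmetry route makes it immediately transparent that $\mathcal{L}=0$ when $p$ is exactly the standard Gaussian (a remark the authors use in the main text), and is tailored to orthogonal maps; your perturbative route is slightly more generic, in that it would extend to any volume-preserving linear $f$ for which $\|f^{\top}Mf-M\|_2$ is small, at the cost of more explicit bookkeeping of the normalizing constant and of $\gamma^{-d/2}$. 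Both yield the same $\hat{\mathcal{L}}(p,f)=\mathcal{O}(d^{-\kappa})$ uniformly over $\mathcal{F}$ and hence the claimed $\Omega(d^{\kappa})$ lower bound.
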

This indicates that we need a large number of Householder flows to approximate a distribution close to the standard Gaussian distribution, starting with a distribution $q$ that obeys \textbf{Assumption} \ref{assump: random input}. The number $T$ is also polynomial in the data dimension $d$ with adjustable degrees, so it could be large as well. The bound is computed from $\hat{\mathcal{L}}$, where $|\det J_f(z)|=1$ for a Householder flow $f$.

\section{Additional Related Work}
\label{sec:morerelatedwork}
~~
\subsection{Normalizing Flows}
It was shown that transforming a simple distribution to a complicated one by composing many simple transformations can be used to solve density estimation problems \citep{tabak2010density, tabak2013family}. These transformations are called \textit{normalizing flows}. Two basic normalizing flows (planar and radial flows) were introduced \citep{rezende2015variational}. Due to their empirical success, there has been a growing body of work on other kinds of normalizing flows. Two categories of normalizing flows have been developed.

\textit{Triangular flows}.
It was proven that increasing triangular functions can transform between arbitrary distributions \citep{villani2008optimal}. Therefore, triangular flows composed of fixed classes of increasing triangular functions are expected to enjoy good expressive power. In addition, the determinant of the Jacobian matrix of an increasing triangular function is easy to compute. These two benefits have led to the development of a large family of triangular flows \citep{dinh2014nice, germain2015made, uria2016neural, kingma2016improved, dinh2016density, papamakarios2017masked, huang2018neural, jaini2019sum}. Among these flows, IAF \citep{kingma2016improved}, NAF \citep{huang2018neural} and SOS flows \citep{jaini2019sum} were shown to have the universal approximation property.

\textit{Non-triangular flows}. It is possible to calculate the determinant of the Jacobian matrix and the inverse of a well designed non-triangular function. Several flows parameterized by matrices were inspired by results from linear algebra and thus enjoy this property \citep{tomczak2016improving, Leonard2017orthogonal, ho2019flow++, berg2018sylvester}, where the last one is a matrix-form generalization of the planar flow. Moreover, a recent non-triangular flow, the iResNet \citep{behrmann2018invertible}, in the form of residual networks (ResNet) \citep{he2016deep}, was designed with an efficient $\log$-$\det$ approximator. It was further improved in residual flows with an unbiased approximator \citep{chen2019residual}. However, the expressivity of these flows still remain unknown, even though the iResNet is expressed by powerful neural networks.

\subsection{Continuous Time Flows}
It is possible, from the infinitesimal point of view, to generalize the discrete update of finite flows to continuous update of infinite flows. Infinite flows are described by a differential equation instead of a sequence of transformations in the finite flow context \citep{chen2017continuous, grathwohl2018ffjord, chen2018neural, salman2018deep, zhang2018monge}. The neural ODEs \citep{chen2018neural} is one significant work in this class, but its expressivity still lacks understanding. A counter-example was provided on the expressivity of the neural ODEs \citep{dupont2019augmented}.
However, this does not rigorously imply that neural ODEs are not universal approximators because $(i)$ the failure in exact transformation does not imply the impossibility in approximation, and $(ii)$ universal transformation does not necessarily need universal function representation.

To tackle the problem of such counter-example, additional $p$ dimensions were introduced to "augment" the neural ODEs \citep{dupont2019augmented}. By solving a $d+p$ dimensional augmented ODE and extracting the first $d$ dimensions, the expressivity of the neural ODEs is enhanced. It was further shown that the augmented neural ODEs is a universal approximator in the continuous function space when $p=1$ \citep{zhang2019approximation}. Nevertheless, in the context of normalizing flows, every transformation has to be invertible, so the change of dimension strategy, as well as its universal approximation property, does not apply to normalizing flows.

\section{Conclusions}
Normalizing flows are a class of deep generative models that offer flexible generative modeling as well as easy likelihood computation. While there has been a great deal of prior empirical work on different normalizing flow models, not much is (formally) known about their expressive power; we provide one of the first systematic studies on non-triangular flows. Our results demonstrate that one needs to be careful while designing normalizing flow models as well as their non-linearities in high dimensional space. In particular, we show that Sylvester flows, a universal approximator in one dimension, are unable to exactly transform between two (even simple) distributions unless rigorous conditions are satisfied. Additionally, a prohibitively large number of layers of planar or Householder flows are required to reduce the $\ell_1$ distance between input and output distributions under certain conditions.

There are a large number of open problems. Some unresolved problems towards expressivity of simple flows include
$(i)$ are certain combinations of tangent matrices or non-linearities useful,
$(ii)$ can normalizing flows composed of finitely many ($\geq d$) Sylvester flows with arbitrary non-linearities (or other simple flows) transform between any pair of input-output distributions in high dimensional space,
$(iii)$ are such normalizing flows universal approximators in converting distributions, and
$(iv)$ what class of distributions are easy or hard for normalizing flows composed of Sylvester flows or other simple flows to transform between. A final open problem is to look at other, more general classes of flows, and provide upper and lower bounds on their expressive power under different non-linearities.

\section*{Acknowledgements}
We thank NSF under IIS 1617157 for research support.

\bibliographystyle{apalike}
\bibliography{main}

\begin{thebibliography}{}

\bibitem[Armstrong, 2013]{armstrong2013basic}
Armstrong, M.~A. (2013).
\newblock {\em Basic topology}.
\newblock Springer Science \& Business Media.

\bibitem[Bailey and Telgarsky, 2018]{bailey2018size}
Bailey, B. and Telgarsky, M.~J. (2018).
\newblock Size-noise tradeoffs in generative networks.
\newblock In {\em Advances in Neural Information Processing Systems}, pages
  6489--6499.

\bibitem[Behrmann et~al., 2018]{behrmann2018invertible}
Behrmann, J., Duvenaud, D., and Jacobsen, J.-H. (2018).
\newblock Invertible residual networks.
\newblock {\em arXiv preprint arXiv:1811.00995}.

\bibitem[Berg et~al., 2018]{berg2018sylvester}
Berg, R. v.~d., Hasenclever, L., Tomczak, J.~M., and Welling, M. (2018).
\newblock Sylvester normalizing flows for variational inference.
\newblock {\em arXiv preprint arXiv:1803.05649}.

\bibitem[Chen et~al., 2017]{chen2017continuous}
Chen, C., Li, C., Chen, L., Wang, W., Pu, Y., and Carin, L. (2017).
\newblock Continuous-time flows for efficient inference and density estimation.
\newblock {\em arXiv preprint arXiv:1709.01179}.

\bibitem[Chen et~al., 2019]{chen2019residual}
Chen, R.~T., Behrmann, J., Duvenaud, D., and Jacobsen, J.-H. (2019).
\newblock Residual flows for invertible generative modeling.
\newblock {\em arXiv preprint arXiv:1906.02735}.

\bibitem[Chen et~al., 2018]{chen2018neural}
Chen, T.~Q., Rubanova, Y., Bettencourt, J., and Duvenaud, D.~K. (2018).
\newblock Neural ordinary differential equations.
\newblock In {\em Advances in neural information processing systems}, pages
  6571--6583.

\bibitem[Cybenko, 1989]{cybenko1989approximation}
Cybenko, G. (1989).
\newblock Approximation by superpositions of a sigmoidal function.
\newblock {\em Mathematics of control, signals and systems}, 2(4):303--314.

\bibitem[Dinh et~al., 2014]{dinh2014nice}
Dinh, L., Krueger, D., and Bengio, Y. (2014).
\newblock Nice: Non-linear independent components estimation.
\newblock {\em arXiv preprint arXiv:1410.8516}.

\bibitem[Dinh et~al., 2016]{dinh2016density}
Dinh, L., Sohl-Dickstein, J., and Bengio, S. (2016).
\newblock Density estimation using real nvp.
\newblock {\em arXiv preprint arXiv:1605.08803}.

\bibitem[Dupont et~al., 2019]{dupont2019augmented}
Dupont, E., Doucet, A., and Teh, Y.~W. (2019).
\newblock Augmented neural odes.
\newblock {\em arXiv preprint arXiv:1904.01681}.

\bibitem[Germain et~al., 2015]{germain2015made}
Germain, M., Gregor, K., Murray, I., and Larochelle, H. (2015).
\newblock Made: Masked autoencoder for distribution estimation.
\newblock In {\em International Conference on Machine Learning}, pages
  881--889.

\bibitem[Grathwohl et~al., 2018]{grathwohl2018ffjord}
Grathwohl, W., Chen, R.~T., Betterncourt, J., Sutskever, I., and Duvenaud, D.
  (2018).
\newblock Ffjord: Free-form continuous dynamics for scalable reversible
  generative models.
\newblock {\em arXiv preprint arXiv:1810.01367}.

\bibitem[Hanin, 2017]{hanin2017universal}
Hanin, B. (2017).
\newblock Universal function approximation by deep neural nets with bounded
  width and relu activations.
\newblock {\em arXiv preprint arXiv:1708.02691}.

\bibitem[Hasenclever et~al., 2017]{Leonard2017orthogonal}
Hasenclever, L., M.~Tomczak, J., van~den Berg, R., and Welling, M. (2017).
\newblock Variational inference with orthogonal normalizing flows.
\newblock In {\em Workshop on Bayesian Deep Learning (NIPS 2017)}, Long Beach,
  CA, USA.

\bibitem[He et~al., 2016]{he2016deep}
He, K., Zhang, X., Ren, S., and Sun, J. (2016).
\newblock Deep residual learning for image recognition.
\newblock In {\em Proceedings of the IEEE conference on computer vision and
  pattern recognition}, pages 770--778.

\bibitem[Ho et~al., 2019]{ho2019flow++}
Ho, J., Chen, X., Srinivas, A., Duan, Y., and Abbeel, P. (2019).
\newblock Flow++: Improving flow-based generative models with variational
  dequantization and architecture design.
\newblock {\em arXiv preprint arXiv:1902.00275}.

\bibitem[Hornik, 1991]{hornik1991approximation}
Hornik, K. (1991).
\newblock Approximation capabilities of multilayer feedforward networks.
\newblock {\em Neural networks}, 4(2):251--257.

\bibitem[Hornik et~al., 1989]{hornik1989multilayer}
Hornik, K., Stinchcombe, M., and White, H. (1989).
\newblock Multilayer feedforward networks are universal approximators.
\newblock {\em Neural networks}, 2(5):359--366.

\bibitem[Huang et~al., 2018]{huang2018neural}
Huang, C.-W., Krueger, D., Lacoste, A., and Courville, A. (2018).
\newblock Neural autoregressive flows.
\newblock {\em arXiv preprint arXiv:1804.00779}.

\bibitem[Jaini et~al., 2019]{jaini2019sum}
Jaini, P., Selby, K.~A., and Yu, Y. (2019).
\newblock Sum-of-squares polynomial flow.
\newblock {\em arXiv preprint arXiv:1905.02325}.

\bibitem[Kingma et~al., 2016]{kingma2016improved}
Kingma, D.~P., Salimans, T., Jozefowicz, R., Chen, X., Sutskever, I., and
  Welling, M. (2016).
\newblock Improved variational inference with inverse autoregressive flow.
\newblock In {\em Advances in neural information processing systems}, pages
  4743--4751.

\bibitem[Lin and Jegelka, 2018]{lin2018resnet}
Lin, H. and Jegelka, S. (2018).
\newblock Resnet with one-neuron hidden layers is a universal approximator.
\newblock In {\em Advances in Neural Information Processing Systems}, pages
  6169--6178.

\bibitem[Lu et~al., 2017]{lu2017expressive}
Lu, Z., Pu, H., Wang, F., Hu, Z., and Wang, L. (2017).
\newblock The expressive power of neural networks: A view from the width.
\newblock In {\em Advances in neural information processing systems}, pages
  6231--6239.

\bibitem[L{\"u}tkepohl, 1996]{lutkepohl1996handbook}
L{\"u}tkepohl, H. (1996).
\newblock {\em Handbook of matrices}, volume~1.
\newblock Wiley Chichester.

\bibitem[Montufar et~al., 2014]{montufar2014number}
Montufar, G.~F., Pascanu, R., Cho, K., and Bengio, Y. (2014).
\newblock On the number of linear regions of deep neural networks.
\newblock In {\em Advances in neural information processing systems}, pages
  2924--2932.

\bibitem[Mulansky and Neamtu, 1998]{mulansky1998interpolation}
Mulansky, B. and Neamtu, M. (1998).
\newblock Interpolation and approximation from convex sets.
\newblock {\em Journal of approximation theory}, 92(1):82--100.

\bibitem[Muleshkov and Nguyen, 2017]{angel2017ndpolar}
Muleshkov, A. and Nguyen, T. (2017).
\newblock Easy proof of the jacobian for the n-dimensional polar coordinates.

\bibitem[Neuman, 2013]{neuman2013inequalities}
Neuman, E. (2013).
\newblock Inequalities and bounds for the incomplete gamma function.
\newblock {\em Results in Mathematics}, 63(3-4):1209--1214.

\bibitem[Papamakarios et~al., 2017]{papamakarios2017masked}
Papamakarios, G., Pavlakou, T., and Murray, I. (2017).
\newblock Masked autoregressive flow for density estimation.
\newblock In {\em Advances in Neural Information Processing Systems}, pages
  2338--2347.

\bibitem[Raghu et~al., 2017]{raghu2017expressive}
Raghu, M., Poole, B., Kleinberg, J., Ganguli, S., and Dickstein, J.~S. (2017).
\newblock On the expressive power of deep neural networks.
\newblock In {\em Proceedings of the 34th International Conference on Machine
  Learning-Volume 70}, pages 2847--2854. JMLR. org.

\bibitem[Rezende and Mohamed, 2015]{rezende2015variational}
Rezende, D. and Mohamed, S. (2015).
\newblock Variational inference with normalizing flows.
\newblock In {\em International Conference on Machine Learning}, pages
  1530--1538.

\bibitem[Salman et~al., 2018]{salman2018deep}
Salman, H., Yadollahpour, P., Fletcher, T., and Batmanghelich, K. (2018).
\newblock Deep diffeomorphic normalizing flows.
\newblock {\em arXiv preprint arXiv:1810.03256}.

\bibitem[Sherman and Morrison, 1950]{sherman1950adjustment}
Sherman, J. and Morrison, W.~J. (1950).
\newblock Adjustment of an inverse matrix corresponding to a change in one
  element of a given matrix.
\newblock {\em The Annals of Mathematical Statistics}, 21(1):124--127.

\bibitem[Tabak and Turner, 2013]{tabak2013family}
Tabak, E.~G. and Turner, C.~V. (2013).
\newblock A family of nonparametric density estimation algorithms.
\newblock {\em Communications on Pure and Applied Mathematics}, 66(2):145--164.

\bibitem[Tabak et~al., 2010]{tabak2010density}
Tabak, E.~G., Vanden-Eijnden, E., et~al. (2010).
\newblock Density estimation by dual ascent of the log-likelihood.
\newblock {\em Communications in Mathematical Sciences}, 8(1):217--233.

\bibitem[Telgarsky, 2015]{telgarsky2015representation}
Telgarsky, M. (2015).
\newblock Representation benefits of deep feedforward networks.
\newblock {\em arXiv preprint arXiv:1509.08101}.

\bibitem[Tomczak and Welling, 2016]{tomczak2016improving}
Tomczak, J.~M. and Welling, M. (2016).
\newblock Improving variational auto-encoders using householder flow.
\newblock {\em arXiv preprint arXiv:1611.09630}.

\bibitem[Uria et~al., 2016]{uria2016neural}
Uria, B., C{\^o}t{\'e}, M.-A., Gregor, K., Murray, I., and Larochelle, H.
  (2016).
\newblock Neural autoregressive distribution estimation.
\newblock {\em The Journal of Machine Learning Research}, 17(1):7184--7220.

\bibitem[Villani, 2008]{villani2008optimal}
Villani, C. (2008).
\newblock {\em Optimal transport: old and new}, volume 338.
\newblock Springer Science \& Business Media.

\bibitem[Zhang et~al., 2019]{zhang2019approximation}
Zhang, H., Gao, X., Unterman, J., and Arodz, T. (2019).
\newblock Approximation capabilities of neural ordinary differential equations.
\newblock {\em arXiv preprint arXiv:1907.12998}.

\bibitem[Zhang et~al., 2018]{zhang2018monge}
Zhang, L., Wang, L., et~al. (2018).
\newblock Monge-ampere flow for generative modeling.
\newblock {\em arXiv preprint arXiv:1809.10188}.

\end{thebibliography}
\appendix
\onecolumn
\section{Appendix}

\subsection{Geometric Intuition of Planar and Radial FLows}
\label{sec: planad vs radial}
\begin{figure}[!h]
    \centering
    \includegraphics[trim=30 70 30 70,clip,width=0.45\textwidth]{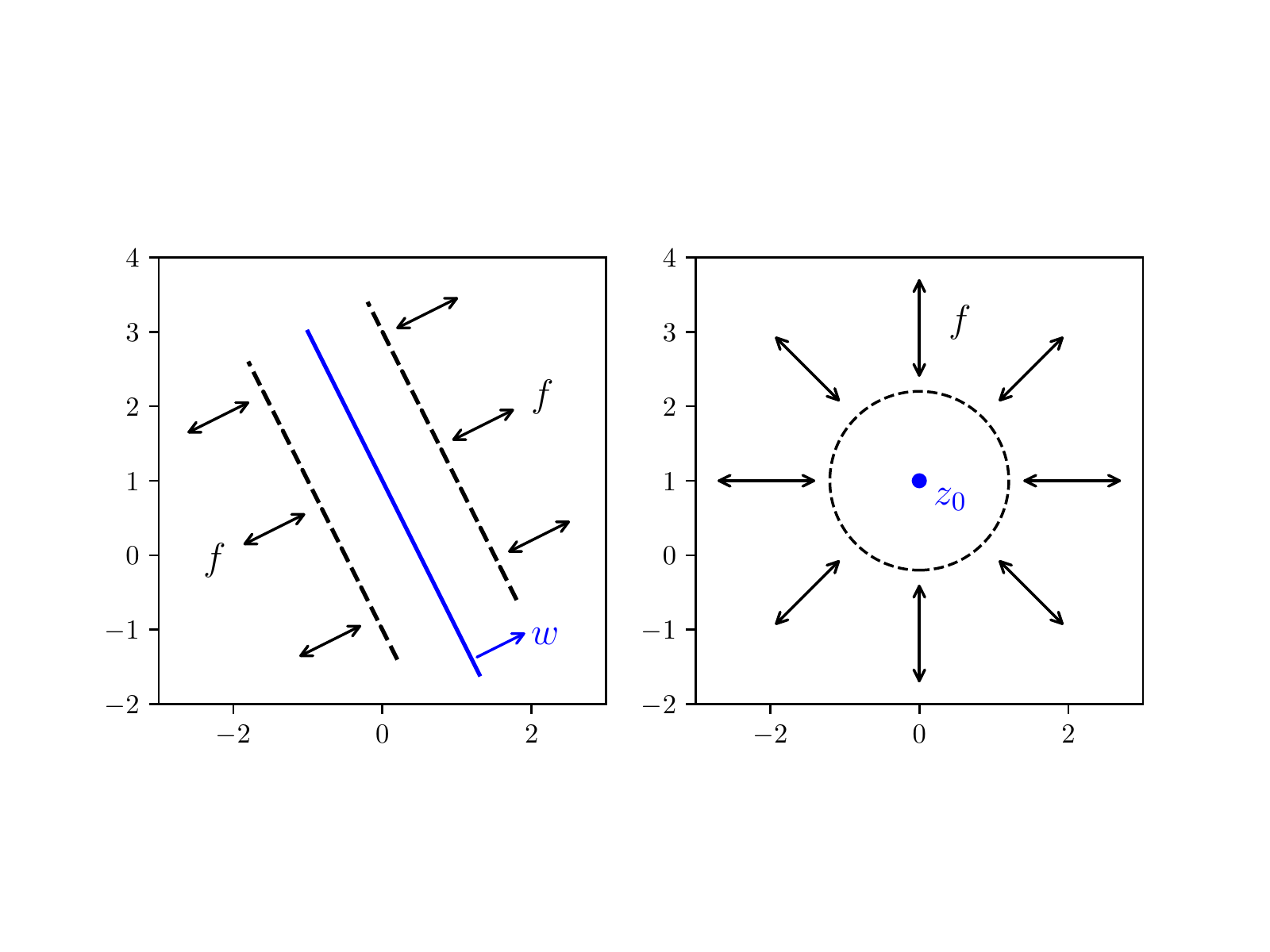}
    \caption{Geometric intuition of planar (left) versus radial (right) flows. In $\mathbb{R}^d$, a planar flow functions as a non-linear scaling transformation w.r.t. a $d-1$ dimensional subspace in the Cartesian coordinate system, while a radial flow functions as a non-linear scaling transformation w.r.t. center $z_0$ in the polar coordinate system. The bidirectional arrows mean that the scaling transformation is either expansion or compression.}
    \label{fig: planar vs radial}
\end{figure}

\subsection{Proof of Theorem \ref{thm:1d approx}}

\begin{definition}
$\Phi_p$ is defined as the cumulative function of distribution $p$:
\[\Phi_p(z)=\int_{-\infty}^{z_1}dx_1\int_{-\infty}^{z_2}dx_2\cdots\int_{-\infty}^{z_d}dx_d\ p(x)dx\]
\end{definition}

\begin{lemma}
[Possible Transformations (single flow)]
\label{lemma:1d general}
If $p$ and $q$ are densities on $\mathbb{R}$ supported on $n$ non-intersecting intervals:
\[\supp q=\bigcup_{i=1}^n \left(l_i^{(q)}, r_i^{(q)}\right),\
\supp p=\bigcup_{i=1}^n \left(l_i^{(p)}, r_i^{(p)}\right)\]
and if $\Phi_q\left(r_i^{(q)}\right)=\Phi_p\left(r_i^{(p)}\right)\ \forall 1\leq i\leq n$, then there exists a planar flow $f$ such that $f\#q=p$, $a.e.$.
\end{lemma}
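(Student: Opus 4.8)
The plan is to construct the planar flow $f$ explicitly on each interval of $\supp q$ by realizing the classical inverse-CDF coupling, piece by piece, and then to check that this piecewise construction is in fact a \emph{single} planar flow $f(z) = z + u h(wz+b)$ for a suitable smooth non-linearity $h$. The key observation is that a one-dimensional planar flow with $u=1$, $w$ absorbed into $h$, and general smooth $h$ is just $f(z) = z + h(z)$; requiring $f$ to be a valid normalizing flow amounts to $f' = 1 + h' > 0$ a.e., i.e.\ $f$ must be a strictly increasing diffeomorphism of $\mathbb{R}$. So the real content is: \emph{any} strictly increasing smooth bijection $f:\mathbb{R}\to\mathbb{R}$ can be written in planar form (take $h(z) := f(z) - z$, which is smooth, and note $f$ pushes $q$ to $p$ iff the standard transport identity holds).

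First I would reduce to the transport condition. Since $q$ and $p$ both have support equal to a union of $n$ intervals $(l_i^{(q)},r_i^{(q)})$ and $(l_i^{(p)},r_i^{(p)})$ listed in increasing order, and since the hypothesis gives $\Phi_q(r_i^{(q)}) = \Phi_p(r_i^{(p)})$ for all $i$ (and hence also $\Phi_q(l_i^{(q)}) = \Phi_p(l_i^{(p)})$, because consecutive differences of these cumulative values are the masses of the intervals, which must therefore agree), the $i$-th interval of $q$ carries exactly the same mass as the $i$-th interval of $p$. On each such interval define $f$ by the inverse-CDF map $f(z) = \Phi_p^{-1}(\Phi_q(z))$, restricted to $(l_i^{(q)},r_i^{(q)}) \to (l_i^{(p)},r_i^{(p)})$; this is well-defined and strictly increasing there because $\Phi_q$ is a strictly increasing smooth bijection from that interval onto $(\Phi_q(l_i^{(q)}),\Phi_q(r_i^{(q)}))$, and likewise for $\Phi_p$. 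Between intervals, i.e.\ on the (at most $n+1$) gaps of $\supp q$, which carry zero $q$-mass, extend $f$ by any strictly increasing smooth interpolation that matches the prescribed endpoint values and keeps $f$ globally increasing and $C^1$ (a monotone smooth spline with matching one-sided derivatives). The change-of-variable formula \eqref{eqn:change of variable} then gives $f\#q = p$ a.e.: on each interval of $\supp q$, $q_y(f(z)) = q(z)/f'(z) = q(z)\,\Phi_p'(f(z))/\Phi_q'(z) = \Phi_p'(f(z)) = p(f(z))$ by construction, and both sides vanish off $\supp p$.

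Finally, set $h(z) := f(z) - z$. Since $f$ is a $C^1$ (indeed we may arrange smooth) strictly increasing bijection of $\mathbb{R}$ with $f' > 0$, $h$ is a valid non-linearity and $f(z) = z + u\,h(wz+b)$ with $u = w = 1$, $b = 0$ is a planar flow, which is invertible with $\det J_f(z) = f'(z) > 0$ a.e. The main obstacle — and the only place care is needed — is the \emph{gap interpolation}: I must ensure that the piecewise map built from the inverse-CDF pieces and the gap-fillers is globally $C^1$ (or smooth) and strictly increasing, with no derivative blowing up or vanishing at the interval endpoints; this requires checking the one-sided derivatives of $\Phi_p^{-1}\circ\Phi_q$ at the $l_i, r_i$ and choosing the interpolants to match them, or alternatively relaxing "smooth" to "differentiable a.e.\ with positive derivative," which is all the definition of a normalizing flow in Section~\ref{sec: def of flows} actually demands. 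Handling unbounded end-gaps (the two infinite rays, when $\supp q \neq \mathbb{R}$) is a mild special case of the same interpolation argument.
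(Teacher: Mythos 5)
Your proposal is correct and follows essentially the same route as the paper's proof: the inverse-CDF map $\Phi_p^{-1}\circ\Phi_q$ on each support interval, a monotone interpolation on the gaps (the paper simply uses the affine map between corresponding gaps and translations on the two unbounded rays), and then $h(z)=f(z)-z$ (equivalently the paper's $h(z)=\frac1u f(\frac{z-b}{w})-\frac{z-b}{uw}$) to exhibit $f$ as a planar flow. Your extra care about one-sided derivatives at the interval endpoints addresses a point the paper glosses over---its proof only verifies continuity of $f$---and your suggested relaxation to ``differentiable a.e.\ with positive derivative'' is indeed all that the definition of a normalizing flow requires.
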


\begin{proof}
As a special case of \textbf{Lemma} \ref{lemma:1d general}, if two densities $\tilde{p},q$ are supported on $\mathbb{R}$, we can transform $q$ into $\tilde{p}$ with a planar flow. Notice that for any density supported on a finite union of intervals, it is possible to approximate it using densities supported on a finite union of intervals excluding infinity. Therefore, we only need to prove for the following case:
\[\supp p=\bigcup_{i=1}^n \left(l_i, r_i\right)\]

To achieve this, it is sufficient to prove that there exists a distribution $\tilde{p}$ with support equal to $\mathbb{R}$ that can approximate $p$ to within $\epsilon$ for any $\epsilon>0$. We construct $\tilde{p}$ in the following way. We first define the threshold
\[\Delta=\frac{2}{\sum_{i=1}^n\left(r_i-l_i\right)}\]
Then, the measure of the set of points $x$ with density $p(x)\geq\Delta$ is at most $1/\Delta$, and thus the measure of the set of points $x$ with density $p(x)\in(0,\Delta)$ is at least $1/\Delta$. Define
\[\gamma=\int_{x:0<p(x)<\Delta}p(x)dx\leq1\]

Now, we define $\tilde{p}(x)$ to be:
\begin{itemize}
    \item If $p(x)\geq\Delta$, then $\tilde{p}(x)=p(x)$.
    \item If $0<p(x)<\Delta$, then $\tilde{p}(x)=(1-\epsilon/2)p(x)$.
    \item If $x\in \left[r_i, l_{i+1}\right]$ for some $i$, then
        \[\tilde{p}(x)=\frac{\epsilon\gamma}{2n\left(l_{i+1}-r_i\right)}\]
    \item If $x\leq l_1$ or $x\geq r_n$, then we assign $\tilde{p}(x)$ to be a tail of Gaussian distribution such that on this halfspace $\tilde{p}(x)\leq\epsilon/2$ and the integration of it is $\frac{\epsilon\gamma}{4n}$.
\end{itemize}

It can be examined that
\[\begin{array}{rl}
    \displaystyle \|\tilde{p}\|_1 &
    \displaystyle = \int_{p(x)\geq\Delta}|\tilde{p}(x)|dx
    +\int_{0<p(x)<\Delta}|\tilde{p}(x)|dx
    +\int_{p(x)=0}|\tilde{p}(x)|dx \\
    & \displaystyle = 1-\gamma+ (1-\epsilon/2)\gamma +\sum_{i=1}^{n-1} \frac{\epsilon\gamma(l_{i+1}-r_i)}{2n(l_{i+1}-r_i)} + \frac{\epsilon\gamma}{2n}\\
    & \displaystyle = 1
\end{array}\]
\[\begin{array}{rl}
    \displaystyle \|p-\tilde{p}\|_1 &
    \displaystyle = \int_{0<p(x)<\Delta}|p(x)-\tilde{p}(x)|dx
    +\int_{p(x)=0}|p(x)-\tilde{p}(x)|dx \\
    & \displaystyle = \frac{\epsilon\gamma}{2}+\sum_{i=1}^{n-1} \frac{\epsilon\gamma(l_{i+1}-r_i)}{2n(l_{i+1}-r_i)} + \frac{\epsilon\gamma}{2n}\\
    & \displaystyle = \epsilon\gamma\leq\epsilon
\end{array}\]
Thus we finish the proof.
\end{proof}

\subsection*{Proof of Lemma \ref{lemma:1d general}}
\begin{proof}
We construct such an $f(z)$ for $z$ in different regions, and then show that this $f$ can be written as a planar flow with a continuous non-linearity. To satisfy $f\#q=p$, $a.e.$, it is equivalent to show that $\Phi_p(f(z))=\Phi_q(z)$ for any $z\in\mathbb{R}$.
\begin{itemize}
    \item If $z\in \left(l_i^{(q)}, r_i^{(q)}\right)$ for some $i$, then $f(z)=\Phi_p^{-1}\circ \Phi_q(z)$. Since $q(z)>0$ in this interval, \[\Phi_p(l_i^{(p)})=\Phi_q(l_i^{(q)})<\Phi_q(z)<\Phi_q(r_i^{(q)})=\Phi_p(r_i^{(p)})\]
    Therefore, $\Phi_p^{-1}\circ \Phi_q(z)$ exists. Since $p,q$ are densities, $\Phi_p$ and $\Phi_q$ are continuous. Notice that $\Phi_p$ is increasing in a compact neighbourhood of $\Phi_q(z)$. Therefore, $\Phi_p^{-1}$ is continuous, so $f$ is continuous.

    \item If $z\in \left[r_i^{(q)}, l_{i+1}^{(q)}\right]$ for some $i$, we let
    \[f(z)=\frac{l_{i+1}^{(p)}-r_i^{(p)}}{l_{i+1}^{(q)}-r_i^{(q)}}\left(z-r_i^{(q)}\right)+r_i^{(p)}\]
    Intuitively, $f$ linearly maps $\left[r_i^{(q)},l_{i+1}^{(q)}\right]$ to $\left[r_i^{(p)},l_{i+1}^{(p)}\right]$. Then, we have if $z\in \left[r_i^{(q)}, l_{i+1}^{(q)}\right]$
    \[\Phi_p(f(z))=\Phi_p\left(r_i^{(p)}\right)=\Phi_q\left(r_i^{(q)}\right)=\Phi_q(z)\]
    To keep the continuity of $f$, we show that the boundary conditions are also satisfied:
    \[f\left(r_i^{(q)}\right)=r_i^{(p)}, f\left(l_{i+1}^{(q)}\right)=l_{i+1}^{(p)}\]

    \item If $z\geq r_n^{(q)}$, then $f(z)=z-r_n^{(q)}+r_n^{(p)}$ satisfies $\Phi_p(f(z))=\Phi_q(z)=1$ and $f$ is continuous. If $z\leq l_1^{(q)}$, then $f(z)=z-l_1^{(q)}+l_1^{(p)}$ satisfies $\Phi_p(f(z))=\Phi_q(z)=0$ and $f$ is continuous.
\end{itemize}
Now, we obtain an $f$ that is continuous on $\mathbb{R}$ and satisfies $f\#q=p$. Finally, if we set \[h(z)=\frac1u f\left(\frac{z-b}{w}\right)-\frac{z-b}{uw}\]
for any $u(\neq0), w(\neq0)$ and $ b$, then we can see that $f$ can be written as a planar flow: $f(z)=z+uh(wz+b)$.
\end{proof}

\subsection{Proof of Theorem \ref{thm: 1d ReLU universal approx}}

\begin{definition}[Piecewise Distributions in $\mathcal{C}$]
\label{def:PWC}
Let $\mathcal{C}_0$ be the set of distributions with continuous densities.
Suppose $\mathcal{C}\subset\mathcal{C}_0$, then we define $\mathcal{PW}(n,\mathcal{C})$ to be the set of all distributions $p$ on $\mathbb{R}$ satisfying: there exists real numbers $-\infty=t_0<t_1<\cdots<t_{n-1}<\infty$
such that for any $i=0,\cdots,n-1$, on the $i$-th interval ($(-\infty,t_1)$ if $i=0$, $[t_{n-1},\infty)$ if $i=n-1$, $[t_i,t_{i+1})$ otherwise) $p$ is equal to some distribution $p_i\in\mathcal{C}$. For conciseness, we say $p$ is described by $\{p_i,t_i\}_{i=0}^{n-1}$.
We define $\mathcal{PW}(n)=\mathcal{PW}(n,\mathcal{C}_0)$. If $n'>n$, then $\mathcal{PW}(n)\subset\mathcal{PW}(n')$.
\end{definition}

\begin{definition}[Piecewise Gaussian Distributions]
\label{def:PWG}
Let $\mathcal{G}$ be the set of Gaussian distributions $\{\mathcal{N}(\mu,\sigma^2): \mu\in\mathbb{R},\ \sigma>0\}$. We define the set of piecewise Gaussian distributions to be $\mathcal{PW}(n, \mathcal{G})$.
\end{definition}

\begin{definition}[Tail-consistency]
\label{def:tail}
Suppose $p\in\mathcal{PW}(n)$ is described by $\{p_i,t_i\}_{i=0}^{n-1}$. We say $p$ is tail-consistent w.r.t. $t_k$ if
\[\sum_{i=1}^k\int_{t_{i-1}}^{t_i}p_i(z)dz+\int_{t_k}^{\infty}p_{k+1}(z)dz=1\]
If $p$ is tail-consistent w.r.t. $t_k$ for any $k=1,\cdots,n-1$, we say $p$ is tail-consistent.
\end{definition}

\begin{lemma}[Possible Transformations (single flow)]
\label{lemma: 1d ReLU PWG exact single}
Let any two distributions $p,q\in\mathcal{PW}(n,\mathcal{G})$ satisfying: $p$ can be described by $\{p_i,t_i\}_{i=0}^{n-1}$ and $q$ can be described by $\{q_i,t_i\}_{i=0}^{n-1}$, where $p_i=q_i$ for $i<n-1$ (that is, the only difference is $p_{n-1}\neq q_{n-1}$). Then there exists a ReLU planar flow $f$ such that $f\#q=p$.
\end{lemma}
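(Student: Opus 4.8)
The plan is to build $f$ as a single ReLU planar flow that is the identity on $(-\infty,t_{n-1})$ and a positively‑scaled affine map on $[t_{n-1},\infty)$, and to verify that this one affine map on the tail simultaneously carries the Gaussian tail of $q$ onto that of $p$ while joining continuously at the breakpoint $t_{n-1}$.

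First I would extract the compatibility consequence of the hypothesis. Since $p$ and $q$ are probability densities that agree pointwise on $(-\infty,t_{n-1})$, their tails carry equal mass: $\int_{t_{n-1}}^{\infty} q_{n-1} = \int_{t_{n-1}}^{\infty} p_{n-1}$. Writing $q_{n-1}=\mathcal{N}(\mu_q,\sigma_q^2)$ and $p_{n-1}=\mathcal{N}(\mu_p,\sigma_p^2)$ and using that the standard Gaussian survival function is strictly monotone, this mass equality is equivalent to the single scalar identity $(t_{n-1}-\mu_q)/\sigma_q=(t_{n-1}-\mu_p)/\sigma_p$. This identity is the crux of the argument: an affine map that matches one Gaussian to another on a half‑line \emph{and} fixes the endpoint $t_{n-1}$ is a priori an over‑determined system (two free parameters, three constraints), and it is exactly this identity — which comes for free from $p,q$ being densities that coincide on the first $n-1$ pieces — that makes the system consistent, so that a single ReLU planar flow suffices. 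I expect this to be the main conceptual obstacle; the rest is bookkeeping.

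Next I would set $\alpha=\sigma_p/\sigma_q>0$ and define $f(z)=z$ for $z<t_{n-1}$ and $f(z)=z+(\alpha-1)(z-t_{n-1})$ for $z\ge t_{n-1}$, i.e. $f(z)=z+(\alpha-1)\,\mathrm{ReLU}(z-t_{n-1})$. Applying the change‑of‑variables formula \eqref{eqn:change of variable} on $[t_{n-1},\infty)$ shows the pushed‑forward density there is $\mathcal{N}\!\left(\,\cdot\,;\ \alpha\mu_q+(1-\alpha)t_{n-1},\ \alpha^2\sigma_q^2\right)$: the variance equals $\sigma_p^2$ by the choice of $\alpha$, and the mean collapses to $\mu_p$ upon substituting the compatibility identity from the previous step. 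Since $f$ is the identity on $(-\infty,t_{n-1})$ and maps $[t_{n-1},\infty)$ bijectively onto itself, combining the two pieces gives $f\#q=p$ $a.e.$

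Finally I would dispatch the conditions needed for $f$ to be a legitimate ReLU planar flow: it is continuous at $t_{n-1}$ (both one‑sided limits equal $t_{n-1}$), strictly increasing since $\alpha>0$, and it matches the template $z+u\,\mathrm{ReLU}(wz+b)$ by taking $(u,w,b)=(1,\ \alpha-1,\ -(\alpha-1)t_{n-1})$ when $\alpha>1$ and $(u,w,b)=(-1,\ 1-\alpha,\ -(1-\alpha)t_{n-1})$ when $\alpha<1$; in both cases $uw=\alpha-1>-1$, the invertibility condition from Section~\ref{sec:1d ReLU}. The degenerate case $\alpha=1$ does not arise, since combined with the compatibility identity it would force $p_{n-1}=q_{n-1}$, contrary to assumption.
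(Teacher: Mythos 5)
Your proposal is correct and follows essentially the same route as the paper's proof: both construct the flow that is the identity below $t_{n-1}$ and scales the tail by $\sigma_p/\sigma_q$ anchored at $t_{n-1}$ (the paper's parameters $u=sgn(\hat{\sigma}-\sigma_n)$, $w=|1-\hat{\sigma}/\sigma_n|$, $b=-wt_{n-1}$ give exactly your $uw=\alpha-1$), and both recover the correct mean from the equality of tail masses forced by the shared first $n-1$ pieces. Your version merely makes that mass identity explicit as the standardized-endpoint equation before substituting, whereas the paper deduces $\tilde{\mu}=\hat{\mu}$ afterwards from equality of two Gaussian tail integrals with the same variance.
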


\begin{lemma}[Possible Transformations (flows)]
\label{lemma: 1d ReLU PWG exact}
$\forall p\in\mathcal{PW}(n,\mathcal{G})$, if $p$ is tail-consistent, then there exists $n-1$ ReLU planar flows $\{f_t\}_{t=1}^{n-1}$ and a Gaussian distribution $q_\mathcal{N}$ such that $(f_{n-1}\circ\cdots\circ f_1)\#q_\mathcal{N}=p$.
\end{lemma}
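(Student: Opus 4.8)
The plan is to build the target piecewise Gaussian distribution $p$ from a single Gaussian one "piece at a time", working from left to right, using \textbf{Lemma}~\ref{lemma: 1d ReLU PWG exact single} at each step. Concretely, suppose $p\in\mathcal{PW}(n,\mathcal{G})$ is tail-consistent and described by $\{p_i,t_i\}_{i=0}^{n-1}$. I would define a sequence of intermediate distributions $r^{(0)},r^{(1)},\dots,r^{(n-1)}$ where $r^{(k)}$ agrees with $p$ on the first $k$ pieces (i.e.\ on $(-\infty,t_1),[t_1,t_2),\dots,[t_{k-1},t_k)$) and equals a single Gaussian piece on the remaining region $[t_k,\infty)$. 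The first one, $r^{(0)}$, is a single Gaussian distribution on all of $\mathbb{R}$ — this will be our $q_\mathcal{N}$ — and the last one, $r^{(n-1)}$, is exactly $p$. At step $k$ (for $k=1,\dots,n-1$) the distributions $r^{(k-1)}$ and $r^{(k)}$ differ only in their last Gaussian piece: $r^{(k-1)}$ uses the single tail Gaussian $p_k$-prolonged, while $r^{(k)}$ splits that region at $t_k$, keeping $p_k$ on $[t_{k-1},t_k)$ and switching to (the prolongation of) $p_{k+1}$ on $[t_k,\infty)$. Hence \textbf{Lemma}~\ref{lemma: 1d ReLU PWG exact single} applies and yields a ReLU planar flow $f_k$ with $f_k\#r^{(k-1)}=r^{(k)}$; composing gives $(f_{n-1}\circ\cdots\circ f_1)\#q_\mathcal{N}=p$.

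The key point that makes this work — and where tail-consistency is essential — is that each $r^{(k)}$ must actually be a valid probability distribution (integrate to $1$) and must genuinely lie in $\mathcal{PW}(n,\mathcal{G})$ with the same breakpoints, so that the single-flow lemma is applicable with $p_i=q_i$ for $i<n-1$ after appropriate relabeling. The integral of $r^{(k)}$ over the first $k$ pieces is $\sum_{i=1}^k\int_{t_{i-1}}^{t_i}p_i$, and the remaining mass on $[t_k,\infty)$ must be supplied by a single Gaussian piece whose prolongation to $[t_k,\infty)$ has integral exactly $1-\sum_{i=1}^k\int_{t_{i-1}}^{t_i}p_i$. Tail-consistency of $p$ w.r.t.\ $t_k$ says precisely that this leftover mass equals $\int_{t_k}^\infty p_{k+1}(z)\,dz$, i.e.\ the Gaussian we need on the tail of $r^{(k)}$ is exactly $p_{k+1}$ — which is a bona fide Gaussian density — so $r^{(k)}$ is well-defined, normalized, and has its final piece equal to the Gaussian $p_{k+1}$. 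For $k=n-1$ this gives $r^{(n-1)}=p$ directly.

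The steps, in order: (1) invoke tail-consistency to define, for each $k=0,\dots,n-1$, the intermediate distribution $r^{(k)}$ with first $k$ pieces equal to those of $p$ and last piece the Gaussian $p_{k+1}$ prolonged over $[t_k,\infty)$ (with $r^{(0)}$ the single Gaussian $p_1$ over $\mathbb{R}$, which exists and is normalized because tail-consistency w.r.t.\ the "0-th" breakpoint is just $\int_{-\infty}^\infty p_1 = 1$ when $p_1$ is a genuine Gaussian; one checks $p_1$ is indeed a Gaussian density since $p\in\mathcal{PW}(n,\mathcal{G})$ means its leftmost piece is a Gaussian); (2) check $r^{(n-1)}=p$; (3) for each $k$, verify $r^{(k-1)}$ and $r^{(k)}$ satisfy the hypotheses of \textbf{Lemma}~\ref{lemma: 1d ReLU PWG exact single} — both in $\mathcal{PW}(n,\mathcal{G})$, sharing breakpoints, differing only in the last piece — and obtain a ReLU planar flow $f_k$ with $f_k\#r^{(k-1)}=r^{(k)}$; (4) compose the $f_k$ and conclude. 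I expect the main obstacle to be purely bookkeeping: stating the intermediate distributions so that at each stage the "difference is only in the last piece" hypothesis of the single-flow lemma holds literally (which may require viewing $r^{(k-1)}$ as an element of $\mathcal{PW}(k+1,\mathcal{G})\subset\mathcal{PW}(n,\mathcal{G})$ with trivial extra breakpoints, or reindexing), and confirming that the normalization constraint forced on the tail Gaussian at each step is consistent with tail-consistency rather than over-determined. Once the normalization matches — which is exactly the content of \textbf{Definition}~\ref{def:tail} — the argument is a straightforward induction.
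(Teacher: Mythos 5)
Your proposal is correct and is essentially the paper's proof: the paper argues by induction on $n$, and your intermediate distributions $r^{(k)}$ are exactly the unrolled inductive sequence (each $r^{(k-1)}$ viewed with a trivial extra breakpoint at $t_k$ so that \textbf{Lemma}~\ref{lemma: 1d ReLU PWG exact single} applies verbatim), with tail-consistency used in the same way to guarantee each intermediate is a normalized density.
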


\begin{lemma}
\label{lemma: 1d ReLU approx PWC}
Given any piecewise constant distribution $q_{pwc}$ supported on a finite union of compact intervals, $\forall\epsilon>0$, there exists a tail-consistent piecewise Gaussian distribution $q_{pwg}$ such that $\|q_{pwg}-q_{pwc}\|_1\leq\epsilon$.
\end{lemma}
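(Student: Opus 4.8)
The plan is to construct $q_{pwg}$ explicitly, one Gaussian piece at a time from left to right, choosing each new piece so that tail-consistency (Definition \ref{def:tail}) is preserved automatically and so that its mass on the corresponding subinterval equals the mass $q_{pwc}$ places there. First I would put $q_{pwc}$ in a normal form: write it as a finite sum of positive constants $c_l$ times indicators of disjoint compact intervals $[\alpha_l,\beta_l]$, set $A=\min_l\alpha_l$ and $B=\max_l\beta_l$, and replace the value $0$ on the finitely many gaps inside $[A,B]$ by a tiny constant $\eta>0$ (then renormalize). This perturbs $q_{pwc}$ in $\ell_1$ by $O(\eta\cdot(\text{total gap length}))$, which is negligible, so I may assume $q_{pwc}$ is a positive piecewise-constant density supported on the single interval $[A,B]$.

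Next, fix small numbers $\mu_0,\mu_n>0$ (the masses the two extreme pieces will ``waste'' outside $[A,B]$), a threshold $\beta\in(0,\tfrac12)$, and a mesh size $\delta>0$, all to be chosen at the end in terms of $\epsilon$. Partition $[A,B]$ into subintervals $[s_{l-1},s_l)$, $l=1,\dots,M$, each of width $w_l\le\delta$ and with all breakpoints of $q_{pwc}$ among the nodes, so $q_{pwc}$ equals a constant $c_l$ on the $l$-th subinterval. Put $a_l=(1-\mu_0-\mu_n)c_lw_l$, so $\sum_l a_l=1-\mu_0-\mu_n$, and define running tail masses $R_0=1-\mu_0$, $R_l=R_{l-1}-a_l$, which decrease to $R_M=\mu_n$, so each $R_l\in(0,1)$. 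Now define the pieces: the leftmost piece $G_0$ on $(-\infty,A)$ is any Gaussian with $\int_{-\infty}^A G_0=\mu_0$ (hence $\int_A^\infty G_0=R_0$); for $1\le l\le M$, let $G_l$ be the unique Gaussian with $\int_{s_{l-1}}^\infty G_l=R_{l-1}$ and $\int_{s_l}^\infty G_l=R_l$, equivalently $\int_{s_{l-1}}^{s_l}G_l=a_l$ — writing $\bar\Phi=1-\Phi$ for the standard normal tail and $u_{l-1}=\bar\Phi^{-1}(R_{l-1})<u_l=\bar\Phi^{-1}(R_l)$, this Gaussian has $\sigma_{G_l}=w_l/(u_l-u_{l-1})$ and $\mu_{G_l}=s_{l-1}-u_{l-1}\sigma_{G_l}$; finally $G_{M+1}$ on $[B,\infty)$ is any Gaussian with $\int_B^\infty G_{M+1}=R_M=\mu_n$. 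The result is a $q_{pwg}\in\mathcal{PW}(M+2,\mathcal{G})$ (Definition \ref{def:PWG}). Since at every internal breakpoint $s_l$ the two adjacent pieces have equal prolonged tail mass on $[s_l,\infty)$ and $G_0$ is a genuine density, a telescoping sum shows $q_{pwg}$ integrates to $1$ and is tail-consistent in the sense of Definition \ref{def:tail} (as pictured in Figure \ref{fig: tail consistent pwg}).

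It then remains to bound $\|q_{pwg}-q_{pwc}\|_1$. I would split $\mathbb{R}$ into the two outer tails $(-\infty,A)\cup(B,\infty)$, the ``good'' subintervals (those with $R_{l-1},R_l\in[\beta,1-\beta]$), and the ``bad'' subintervals ($R_{l-1}>1-\beta$, which occur only near $A$, or $R_l<\beta$, only near $B$). The outer tails, together with the renormalization and gap-filling steps, contribute only $O(\mu_0+\mu_n+\eta)$. On a good subinterval $(x-\mu_{G_l})/\sigma_{G_l}\in[-V,V]$ with $V:=\bar\Phi^{-1}(\beta)$, and a mean-value estimate on $\bar\Phi^{-1}$ gives $\sigma_{G_l}\ge w_l\,\phi(V)/a_l$ ($\phi$ the standard normal density), whence $|G_l'|\le V/(\sqrt{2\pi}\sigma_{G_l}^2)$, so the oscillation of $G_l$ across the subinterval is $O_\beta(a_l^2/w_l)$, hence at most $O_{\beta,\max c}(\delta)$; since the average of $G_l$ on the subinterval equals $(1-\mu_0-\mu_n)c_l$, within $(\mu_0+\mu_n)\max c$ of $c_l$, summing over good subintervals gives a contribution $O_{\beta,\max c}\!\big((B-A)\delta\big)+O\big((B-A)(\mu_0+\mu_n)\big)$. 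For bad subintervals I would use only the triangle inequality, $\int_{s_{l-1}}^{s_l}|G_l-c_l|\le a_l+c_lw_l\le 3a_l$; the bad-near-$A$ subintervals form an initial run on which $\sum_{i\le l}a_i<\beta+O(\delta)$ (and symmetrically near $B$), so they contribute only $O(\beta)+O(\delta)$. Choosing $\beta$ small, then $\mu_0,\mu_n,\eta$ small, then $\delta$ small makes the sum of all these terms at most $\epsilon$.

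The main obstacle is precisely the behavior at the two ends of the support: tail-consistency forces $R_l\to1$ near $A$ and $R_l\to0$ near $B$, driving $u_{l-1},u_l$ to $\mp\infty$ and making the corresponding Gaussian pieces extremely narrow and sharply peaked, so that matching their mass does \emph{not} make them pointwise close to the (possibly moderate) target constant there. The resolution — tolerating a crude $\ell_1$ bound on those pieces while observing that tail-consistency simultaneously forces them to carry only $O(\beta)$ of the total mass — is the crux of the argument; the closed forms for the $G_l$ and the mean-value estimates on the good subintervals are routine.
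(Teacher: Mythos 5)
Your construction is correct, but it pins down each Gaussian piece differently from the paper, and this changes where the work goes. The paper reserves mass $\epsilon/3$ in each of the two outer tails and then determines the piece on $[t,t+\delta_i)$ by two conditions at the \emph{left} endpoint only: the tail-consistency constraint on $\int_t^{\infty}\mathcal{N}(x;\mu,\sigma^2)dx$, and the pointwise match $\mathcal{N}(t;\mu,\sigma^2)=\alpha$. Because the reserved tails keep the remaining mass inside $[\epsilon/3,1-\epsilon/3]$, the standardized endpoint $c=(t-\mu)/\sigma$ is uniformly bounded, hence $\sigma=\exp(-c^2/2)/(\sqrt{2\pi}\alpha)$ is uniformly bounded below, the Lipschitz constant of every piece is uniformly bounded, and each interval contributes $\mathcal{O}(\delta^2)$ with no case analysis at the ends of the support. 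You instead impose the tail condition at \emph{both} endpoints, so each piece carries exactly the mass $a_l$ and tail-consistency plus $\int q_{pwg}=1$ hold exactly by telescoping -- your bookkeeping here is actually tighter than the paper's, whose simultaneous requirements $\mathcal{N}(t)=\alpha$ and exact tail-matching leave the normalization only approximate -- but the price is the good/bad decomposition near $A$ and $B$. Note, however, that your reserved masses $\mu_0,\mu_n$ already confine every $R_l$ to $[\mu_n,1-\mu_0]$, so all $|u_l|$ are bounded by a constant depending only on $\mu_0,\mu_n$; taking $V$ to be that constant makes every subinterval ``good'' and would let you drop the $\beta$-threshold and the bad-run argument entirely -- this is essentially the mechanism the paper exploits with its $\epsilon/3$ tails. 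Your $\eta$-filling of the gaps is also a cleaner treatment of the zero-density regions than the paper's device of sending $\sigma\rightarrow\infty$ when $\alpha=0$.
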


\begin{proof}
According to \citep{lin2018resnet}, piecewise constant functions supported on a finite union of compact intervals can approximate any Lebesgue-integrable function, so do densities supported on a finite union of intervals. Therefore, there exists such piecewise constant distribution $q_{pwc}$ such that $\|q_{pwc}-p\|_1\leq\epsilon/2$. According to \textbf{Lemma} \ref{lemma: 1d ReLU approx PWC}, there exists a tail-consistent piecewise Gaussian distribution $q_{pwg}$ such that $\|q_{pwg}-q_{pwc}\|_1\leq\epsilon/2$. According to \textbf{Lemma} \ref{lemma: 1d ReLU PWG exact}, there exists a flow $f$ composed of finitely many ReLU planar flows and a Gaussian distribution $q_{\mathcal{N}}$ such that $q_{pwg}=f\#q_{\mathcal{N}}$. As a result, we have $\|f\#q_{\mathcal{N}}-p\|_1\leq\epsilon$.
\end{proof}

\subsection*{Proof of Lemma \ref{lemma: 1d ReLU PWG exact single}}
\begin{proof}
By assumption, $p(y)=q(y)$ if $y<t_{n-1}$. Now, we assume on $[t_{n-1},\infty)$, $q\sim\mathcal{N}(\mu_n,\sigma_n^2)$, and $p\sim\mathcal{N}(\hat{\mu},\hat{\sigma}^2)$. Let $f$ be a ReLU planar flow with parameters $u,w$ and $b$, where $u=sgn(\hat{\sigma}-\sigma_n),w=|1-\hat{\sigma}/\sigma_n|, b=-wt_{n-1}$. Then, for any $y\in\mathbb{R}$,
\[f^{-1}(y)=\left\{
\begin{array}{cc}
    y & wy+b<0\\
    \frac{y-ub}{1+uw} & wy+b\geq0
\end{array}
\right.
=\left\{
\begin{array}{cc}
    y & y<t_{n-1}\\
    \frac{y-ub}{1+uw} & y\geq t_{n-1}
\end{array}
\right.\]
According to \eqref{eqn:change of variable} and \eqref{eqn:det J_f ReLU}, if $y< t_{n-1}$, $(f\#q)(y)=q(y)$. If $y\geq t_{n-1}$,
\[(f\#q)(y)=\frac{q\left(\frac{y-ub}{1+uw}\right)}{1+uw}
=\frac{\mathcal{N}\left(\frac{y-ub}{1+uw};\mu_n,\sigma_n^2\right)}{1+uw}\]
Thus, on $[t_{n-1},\infty)$,
\[f\#q\sim\mathcal{N}(ub+(1+uw)\mu_n,(1+uw)^2\sigma_n^2)
=\mathcal{N}((1+uw)\mu_n-uwt_{n-1},(1+uw)^2\sigma_n^2)\]
Since $uw=\frac{\hat{\sigma}}{\sigma_n}-1$, $f\#q\sim\mathcal{N}(\tilde{\mu},\hat{\sigma}^2)$ for some $\tilde{\mu}$ on $[t_{n-1},\infty)$. Notice that
\[\int_{t_{n-1}}^{\infty}\mathcal{N}(y;\tilde{\mu},\hat{\sigma}^2)dy
=1-\sum_{i=0}^{n-2}\int_{t_{i}}^{t_{i+1}}q_i(y)dy
=1-\sum_{i=0}^{n-2}\int_{t_{i}}^{t_{i+1}}p_i(y)dy
=\int_{t_{n-1}}^{\infty}\mathcal{N}(y;\hat{\mu},\hat{\sigma}^2)dy\]
we know that $\tilde{\mu}=\hat{\mu}$. Thus, the ReLU flow with the above $u,w$ and $b$ transforms the right-most piece of the input distribution $q$ to the desired target $p$ without changing the other pieces.
\end{proof}

\subsection*{Proof of Lemma \ref{lemma: 1d ReLU PWG exact}}
\begin{proof}
We prove by induction. For $n=1$, the result is obvious since any Gaussian distribution can be chosen as input. Suppose we are able to generate any tail-consistent distribution in $\mathcal{PW}(n-1,\mathcal{G})$. Given the target distribution $p\in\mathcal{PW}(n,\mathcal{G})$ described by $\{p_i,t_i\}_{i=0}^{n-1}$, where
\[p_i(z)=\mathcal{N}(z;\mu_i,\sigma_i^2),\ i=0,\cdots,n-1\]
we first generate an intermediate distribution $q_{int}\in\mathcal{PW}(n-1,\mathcal{G})$ described by $\{q_i,t_i\}_{i=0}^{n-2}$, where
\[q_i=p_i,\ i=0,\cdots,n-2\]
Since $p$ is tail-consistent, $q_{int}$ integrates to 1 on $\mathbb{R}$, so it is a probability distribution. Notice that $q_{int}$ can be viewed as an element in $\mathcal{PW}(n,\mathcal{G})$ described by $\{q_i,t_i\}_{i=0}^{n-1}$, where $q_{n-1}=q_{n-2}$.
Then, according to \textbf{Lemma} \ref{lemma: 1d ReLU PWG exact single}, we can apply one more layer of ReLU flow to transform $q_{int}$ into the desired distribution $p$.
\end{proof}

\subsection*{Proof of Lemma \ref{lemma: 1d ReLU approx PWC}}
\begin{proof}
Suppose the target distribution $q_{pwc}$ has a compact support $\subset[t_-,t_+]$, where $q_{pwc}(t_-)$ and $q_{pwc}(t_+)$ are strictly positive. We construct $q_{pwg}$ as follows. First , we let
\[\int_{-\infty}^{t_-}q_{pwg}(x)dx=\int_{t_+}^{\infty}q_{pwg}(x)dx=\frac{\epsilon}{3}\]
This can be done by setting $q_{pwg}=\mathcal{N}(\mu_-,\sigma_-^2)$ on $(-\infty, t_-)$ where $(t_--\mu_-)/\sigma_-=\Phi_{\mathcal{N}(0,1)}^{-1}(\epsilon/3)$, and $q_{pwg}=\mathcal{N}(\mu_+,\sigma_+^2)$ on $(t_+,\infty)$ where $(t_+-\mu_+)/\sigma_+=\Phi_{\mathcal{N}(0,1)}^{-1}(1-\epsilon/3)$.

On $[t_-,t_+]$, suppose $q_{pwc}$ is a piecewise constant function on $n$ intervals of $\delta_i$ width, where $\delta/2<\delta_i<\delta$ for $1\leq i\leq n$, and $\delta$ is an arbitrarily small positive value. Then, the number of intervals $n$ is $\Theta(1/\delta)$.

Now, we look at the $i$-th interval, where $1\leq i\leq n$. Suppose $q_{pwc}(x)=\alpha$ for $x\in[t,t+\delta_i)$. Then, a valid tail-consistent piecewise Gaussian piece on this interval has the form $\mathcal{N}(\mu,\sigma^2)$ with
\[\int_{t}^{\infty}\mathcal{N}(x;\mu,\sigma^2)dx=\left(1-\frac23\epsilon\right)\int_{t}^{\infty}q_{pwc}(x)dx\]
This guarantees that $q_{pwg}$ is tail-consistent and integrates to 1 on $\mathbb{R}$. The solution of $\mu$ and $\sigma$ is given by $(t-\mu)/\sigma=c$ for some constant $c$ such that $|c|\leq\Phi_{\mathcal{N}(0,1)}^{-1}(\epsilon/3)$. Now, we show that $\mathcal{N}(x;\mu,\sigma^2)$ approximates $\alpha$ in $\ell_1$ norm on $[t,t+\delta_i)$. If $\alpha=0$, by letting $\sigma\rightarrow\infty$ we are able to approximate 0 to within any precision. Thus, we only discuss cases where $\alpha>0$. We assign $\mathcal{N}(t;\mu,\sigma^2)=\alpha$. The solution is given by
\[\sigma=\frac{\exp(-\frac{c^2}{2})}{\sqrt{2\pi}\alpha},\ \mu=t-c\sigma\]
One can check that the Lipschitz constant of the Gaussian distribution is $\frac{1}{\sqrt{2\pi e}\sigma^2}$. Thus, the $\ell_1$ norm of the difference between $\mathcal{N}(\mu,\sigma^2)$ and $\alpha$ on $[t,t+\delta_i)$ is bounded by
\[\int_{t}^{t+\delta_i}\left|\mathcal{N}(x;\mu,\sigma^2)-\alpha\right|dx
\leq\frac{\delta^2}{2\sqrt{2\pi e}\sigma^2}
=\sqrt{\frac{\pi}{2}}\alpha^2\exp\left(c^2-\frac12\right)\delta^2\]
Since we have finite subdivisions, $\alpha$ can be seen as an $\mathcal{O}(1)$ constant. Combining with the bound on $c$, we have
\[\int_{t}^{t+\delta_i}\left|\mathcal{N}(x;\mu,\sigma^2)-q_{pwc}(x)\right|dx\leq \sqrt{\frac{\pi}{2}}\left(\sup_{x\in\mathbb{R}}q_{pwc}(x)^2\right)\exp\left(\Phi_{\mathcal{N}(0,1)}^{-1}(\epsilon/3)^2-\frac12\right)\delta^2\]
Since there are $n\leq 2/\delta$ intervals, we know that
\[\int_{t_-}^{t_+}\left|\mathcal{N}(x;\mu,\sigma^2)-q_{pwc}(x)\right|dx\leq \sqrt{2\pi}\left(\sup_{x\in\mathbb{R}}q_{pwc}(x)^2\right)\exp\left(\Phi_{\mathcal{N}(0,1)}^{-1}(\epsilon/3)^2-\frac12\right)\delta\]
Since $\delta$ is arbitrary, we can assign
\[\delta=\frac{\epsilon}{3\sqrt{2\pi}\left(\sup_{x\in\mathbb{R}}q_{pwc}(x)^2\right)\exp\left(\Phi_{\mathcal{N}(0,1)}^{-1}(\epsilon/3)^2-\frac12\right)}\]
Then,
\[\int_{-\infty}^{\infty}|q_{pwg}(x)-q_{pwc}(x)|dx=\left(\int_{-\infty}^{t_-}+\int_{t_-}^{t_+}+\int_{t_+}^{\infty}\right)|q_{pwg}(x)-q_{pwc}(x)|dx\leq\frac{\epsilon}{3}+\frac{\epsilon}{3}+\frac{\epsilon}{3}=\epsilon\]
As a result, $\forall \epsilon>0$, there exists a tail-consistent distribution $q_{pwg}\in\mathcal{PW}(n+2,\mathcal{G})$ satisfying $\|q_{pwg}-q_{pwc}\|_1\leq\epsilon$, where $n=\mathcal{O}\left(\exp\left(\Phi_{\mathcal{N}(0,1)}^{-1}(\epsilon/3)^2\right)/\epsilon\right)$.
\end{proof}

\subsection{Proof of Theorem \ref{thm:high d ReLU}}

\begin{lemma}
\label{lemma:ReLU local linear}
Let $\{f_i\}_{i=1}^n$ be $n$ ReLU Sylvester flows on $\mathbb{R}^d$ and $f=f_n\circ\cdots\circ f_1$. Then, there exists a zero-measure closed set $\Omega\subset\mathbb{R}^d$ such that $\forall x\in\mathbb{R}^d\setminus\Omega$, there exists an open neighbourhood of $x$ called $\Gamma_x$, such that $J_f(z)$ is equal to a constant matrix for $z\in\Gamma_x$.
\end{lemma}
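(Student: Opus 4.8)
The plan is to proceed by induction on the number of layers $n$, tracking at each stage the hyperplane arrangement induced by the ReLU activations. For a \emph{single} ReLU Sylvester flow $f_1(z) = z + A_1\,\mathrm{ReLU}(B_1^\top z + b_1)$ with flow dimension $m_1$, each coordinate of the argument $B_1^\top z + b_1$ vanishes on an affine hyperplane $H_j = \{z : (B_1^\top z + b_1)_j = 0\}$, $j = 1,\dots,m_1$. Off the finite union $\Omega_1 = \bigcup_{j=1}^{m_1} H_j$ (a closed set of Lebesgue measure zero), each indicator $1\{(B_1^\top z + b_1)_j \geq 0\}$ is locally constant, so on the open cell of the arrangement containing any $x \notin \Omega_1$ the map $f_1$ agrees with a fixed affine map $z \mapsto z + A_1 D z + c$, hence $J_{f_1}$ is the constant matrix $I + A_1 D B_1^\top$ on an open neighbourhood $\Gamma^{(1)}_x$ of $x$, where $D = \mathrm{diag}(1\{(B_1^\top x + b_1)_j \geq 0\})$. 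This is exactly \eqref{eqn:det J_f ReLU} read componentwise.

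For the inductive step, suppose the claim holds for $g = f_{n-1}\circ\cdots\circ f_1$ with bad set $\Omega'$, so that off $\Omega'$ the map $g$ is locally affine with locally constant Jacobian. I would then compose with $f_n(y) = y + A_n\,\mathrm{ReLU}(B_n^\top y + b_n)$. The activation of $f_n$ switches on the preimage $g^{-1}\big(\bigcup_j \{y : (B_n^\top y + b_n)_j = 0\}\big)$; since $g$ is locally affine off the measure-zero set $\Omega'$, this preimage is, locally, a finite union of affine pieces (intersected with the cells of $g$'s own arrangement), and in particular has measure zero. Adjoining it to $\Omega'$ together with $\Omega'$ itself gives the desired closed zero-measure set $\Omega$ for $f = f_n \circ g$. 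Off $\Omega$, on a small enough open neighbourhood $\Gamma_x$, $g$ is a single affine map and the ReLU of $f_n$ is frozen to a fixed selection of coordinates, so $f = f_n\circ g$ is a composition of two fixed affine maps, hence affine with constant Jacobian $J_{f_n}\,J_g$ on $\Gamma_x$; the chain rule \eqref{eqn: likelihood computation chain rule} gives the determinant statement used later. Invertibility of each ReLU Sylvester layer — guaranteed under the parameter bounds noted after \eqref{eqn:det J_f ReLU} — ensures $g$ is a homeomorphism, so preimages of closed measure-zero sets under $g$ stay closed, and "locally affine" compositions behave as claimed.

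The main obstacle is the bookkeeping in the inductive step: one must argue carefully that the preimage under $g$ of the layer-$n$ hyperplane arrangement is genuinely measure zero and closed, rather than something pathological. The subtlety is that $g$ is only \emph{piecewise} affine — its behaviour changes across the lower-dimensional set $\Omega'$ — so $g^{-1}$ of a hyperplane is a countable (indeed, since there are finitely many cells locally, finite) union of affine slices plus possibly parts of $\Omega'$; I would handle this by working cell-by-cell in the arrangement defined by the first $n-1$ layers, on each of which $g$ is a genuine affine bijection onto its image, so that the preimage of a hyperplane within that cell is either empty or a hyperplane slice of the cell, hence measure zero, and the total bad set is a finite union of closed measure-zero sets and therefore closed and measure zero. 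Everything else — local constancy of the Jacobian, the affine form on each cell — is then immediate from the single-layer computation.
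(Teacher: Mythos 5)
Your proof is correct and follows essentially the same approach as the paper: off the union of the ReLU switching sets, each layer's indicator is locally constant, so $f$ is locally affine and the chain rule gives a locally constant Jacobian. In fact you are more careful than the paper's own (very terse) proof, which writes $\Omega=\bigcup_{i=1}^n\{z:B_i^{\top}z+b_i=0\}$ without making explicit that for $i>1$ the relevant set is the \emph{preimage} of that hyperplane arrangement under $f_{i-1}\circ\cdots\circ f_1$; your cell-by-cell argument, using that $g$ restricted to each cell is an invertible affine map (so preimages of hyperplanes are hyperplane slices, hence closed and of measure zero), is exactly the bookkeeping needed to make that step rigorous.
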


\begin{proof}
According to \textbf{Lemma} \ref{lemma:ReLU local linear}, there exists a zero-measure closed set $\Omega\subset\mathbb{R}^d$ such that $\forall z\in\mathbb{R}^d\setminus\Omega$, $J_f(z)$ is constant in an open neighbourhood of $z$. By the change-of-variable formula in \eqref{eqn:change of variable log}, for small $\alpha\in\mathbb{R}$ and any direction $\delta\in\mathbb{R}^d$,
\[\log p(f(z+\alpha\delta))-\log q(z+\alpha\delta)
=\log p(f(z))-\log q(z)\]
Next, we expand the Taylor series of $f(z+\alpha\delta)$ for small $\alpha$:
\[f(z+\alpha\delta) = f(z) + \alpha J_f(z)\delta + \mathcal{O}(\alpha^2)\]
Therefore,
\[\log p(f(z) + \alpha J_f(z)\delta + \mathcal{O}(\alpha^2))-\log p(f(z))
=\log q(z+\alpha\delta)-\log q(z)\]
By multiplying $1/\alpha$ on both sides and taking $\alpha\rightarrow0$, we finish the proof.
\end{proof}

\begin{remark}
\textbf{Theorem} \ref{thm:high d ReLU} can be extended to any Sylvester flow with $h''=0$ almost everywhere.
\end{remark}

\begin{remark}
\textbf{Theorem} \ref{thm:high d ReLU} can be extended to Householder flows \citep{tomczak2016improving}.
\end{remark}

\begin{remark}
An example of directional derivative is illustrated in Figure \ref{fig: directional derivative}.

\begin{figure}[!h]
    \centering
    \includegraphics[trim=10 10 10 10, clip, width=0.4\textwidth]{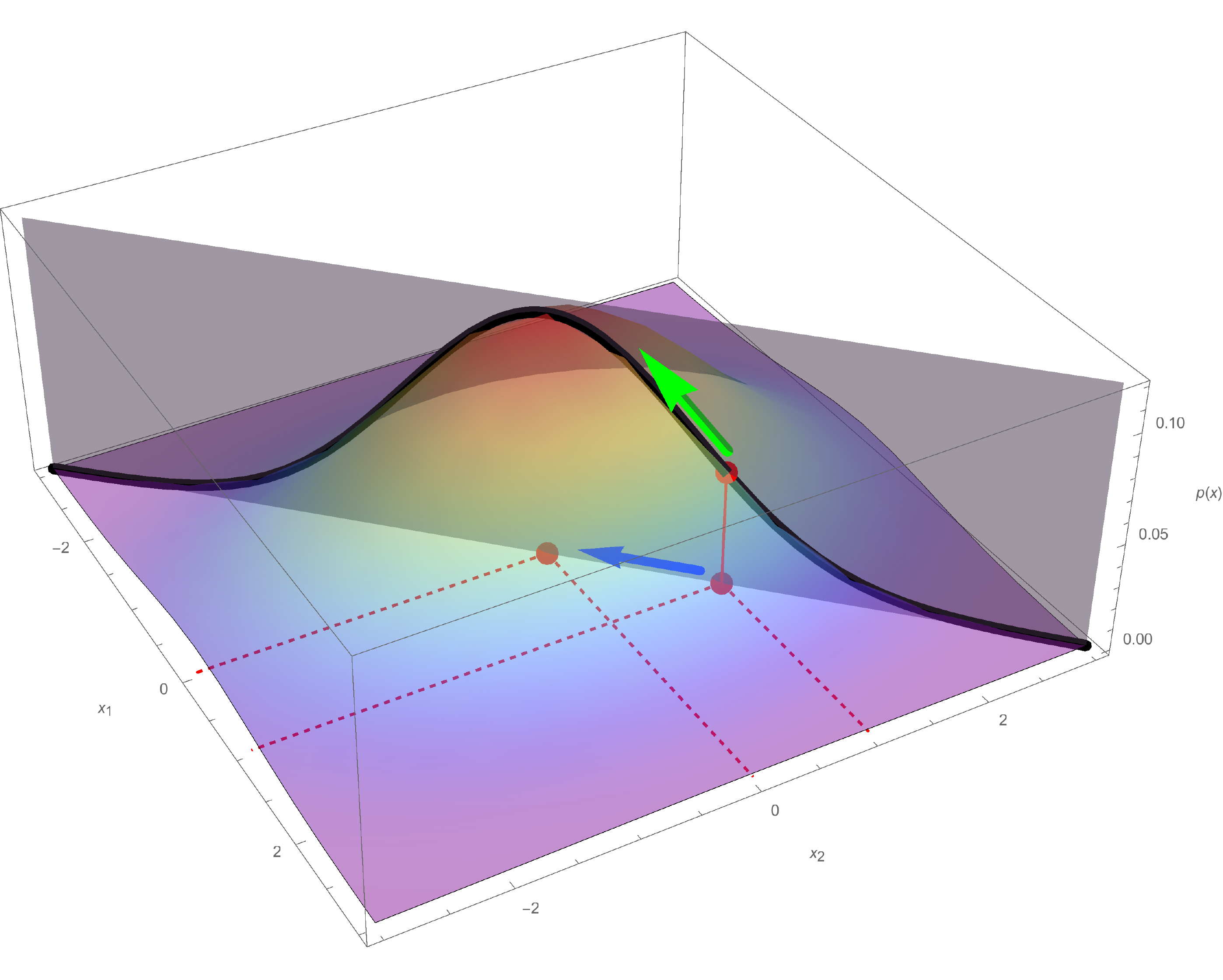}
    \caption{Directional derivative (green arrow) of a two-dimensional Gaussian distribution at point $z=(1,1)$ and direction $\delta=(-1,-1)$ (blue arrow).}
    \label{fig: directional derivative}
\end{figure}
\end{remark}

\subsection*{Proof of Lemma \ref{lemma:ReLU local linear}}

\begin{proof}
Suppose the $i$th Sylvester flow $f_i$ has parameters $A_i,B_i,b_i$ for $i=1,\cdots,n$. Notice that when $B_i^{\top}z+b_i\neq0$, there exists an open set $\mathcal{B}_z$ containing $z$ such that $\forall y\in\mathcal{B}_z$, the signs of $B_i^{\top}y+b_i$ is identical to those of $B_i^{\top}z+b_i$. Therefore, $J_{f_i}(y)$ is equal to a constant matrix in $\mathcal{B}_z$. Then, the statement straightly follows from the chain rule of Jacobian matrix, where
\[\Omega=\bigcup_{i=1}^n \{z:B_i^{\top}z+b_i=0\}\]
\end{proof}

\subsection{Formal Version of Corollary \ref{cor:MoG to MoG}}

\begin{corollary}[MoG$\nrightarrow$MoG]
\label{cor:MoG to MoG formal}
Suppose $p,q$ are mixture of Gaussian distributions on $\mathbb{R}^d$ in the following form:
\[p(z)=\sum_{i=1}^{r_p} w_p^i\mathcal{N}(z;\mu_p^i,\Sigma_p),\
  q(z)=\sum_{j=1}^{r_q} w_q^j\mathcal{N}(z;\mu_q^j,\Sigma_q)\]
If a flow $f$ composed of finitely many ReLU Sylvester flows satisfies $p=f\#q$, then for almost every point $x\in\mathbb{R}^d$, it has an open neighbourhood $\Gamma_x$ such that $\forall z\in \Gamma_x$,
\[\begin{array}{rl}
&\displaystyle \frac{\sum_{i,j=1}^{r_p}w_p^i\mathcal{N}(Az+b;\mu_p^i,\Sigma_p)\mathcal{N}(Az+b;\mu_p^j,\Sigma_p)A^{\top}\Sigma_p^{-1}\mu_p^i(\mu_p^i-\mu_p^j)^{\top}\Sigma_p^{-1}A}
{\left(\sum_{j=1}^{r_p}w_p^j\mathcal{N}(Az+b;\mu_p^j,\Sigma_p)\right)^2} \\
-&\displaystyle \frac{\sum_{i,j=1}^{r_q}w_q^i\mathcal{N}(z;\mu_q^i,\Sigma_q)\mathcal{N}(z;\mu_q^j,\Sigma_q)\Sigma_q^{-1}\mu_q^i(\mu_q^i-\mu_q^j)^{\top}\Sigma_q^{-1}}
{\left(\sum_{j=1}^{r_q}w_q^j\mathcal{N}(z;\mu_q^j,\Sigma_q)\right)^2}
\end{array}\]
is a constant function in $z$ on $\Gamma_x$ for some $A\in\mathbb{R}^{d\times d}$ and $b\in\mathbb{R}^d$.
\end{corollary}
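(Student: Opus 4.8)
The plan is to apply Theorem~\ref{thm:high d ReLU} directly to the two mixture-of-Gaussian densities and then carry out the gradient computation explicitly. First I would recall from Lemma~\ref{lemma:ReLU local linear} that for a flow $f$ composed of finitely many ReLU Sylvester flows, there is a zero-measure closed set $\Omega$ outside of which $J_f(z)$ is locally constant; so for almost every $x\in\mathbb{R}^d$ there is an open neighbourhood $\Gamma_x$ on which $f$ is \emph{affine}, i.e. $f(z)=Az+b$ for a fixed matrix $A\in\mathbb{R}^{d\times d}$ and vector $b\in\mathbb{R}^d$ (depending on the cell containing $x$). On this neighbourhood, $\det J_f$ is a nonzero constant, and Theorem~\ref{thm:high d ReLU} gives $J_f(z)^\top\nabla_z\log p(f(z))=\nabla_z\log q(z)$, which on $\Gamma_x$ becomes $A^\top\nabla\log p(Az+b)=\nabla\log q(z)$ pointwise.

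Next I would compute $\nabla_w\log\big(\sum_k w_k\mathcal{N}(w;\mu_k,\Sigma)\big)$ for a generic argument $w$. Writing $N_k(w)=\mathcal{N}(w;\mu_k,\Sigma)$ and using $\nabla_w N_k(w)=-N_k(w)\Sigma^{-1}(w-\mu_k)$, the log-gradient is $\frac{\sum_k w_k N_k(w)\,\big(-\Sigma^{-1}(w-\mu_k)\big)}{\sum_j w_j N_j(w)}$, which after pulling out the common $-\Sigma^{-1}w$ term simplifies to $-\Sigma^{-1}w+\Sigma^{-1}\frac{\sum_k w_k N_k(w)\mu_k}{\sum_j w_j N_j(w)}$. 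Plugging $w=Az+b$ for $p$ and $w=z$ for $q$, the identity $A^\top\nabla\log p(Az+b)=\nabla\log q(z)$ becomes, after cancelling the linear-in-$z$ pieces $-A^\top\Sigma_p^{-1}(Az+b)$ versus $-\Sigma_q^{-1}z$ (which are themselves affine in $z$ and hence contribute a fixed affine term), that the difference of the two ``weighted-mean'' vector fields is affine in $z$ on $\Gamma_x$. To get the stated \emph{symmetric-matrix} form one differentiates this vector identity once more in $z$: the Jacobian (in $z$) of $\Sigma_q^{-1}\frac{\sum_i w_q^i N_i(z)\mu_q^i}{\sum_j w_q^j N_j(z)}$ is, by the quotient rule and $\nabla_z N_i(z)=-N_i(z)\Sigma_q^{-1}(z-\mu_q^i)$, exactly $\Sigma_q^{-1}\Big[\text{cross terms}\Big]\Sigma_q^{-1}$ where the bracket is $\frac{\sum_{i,j} w_q^i w_q^j N_i N_j \,\mu_q^i(\mu_q^j-\mu_q^i)^\top}{(\sum_j w_q^j N_j)^2}$ up to a symmetric shift; the analogous computation on the $p$-side produces the $A^\top\Sigma_p^{-1}[\cdots]\Sigma_p^{-1}A$ term with argument $Az+b$, and the difference of the two Jacobians must be constant in $z$ (the constant being the derivative of the affine part). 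Matching this against the displayed expression — keeping careful track of which $\mu$ multiplies which $(\mu^i-\mu^j)^\top$ — yields the claimed identity; any leftover symmetric additive term (coming from the ``$-N_i(z)\Sigma_q^{-1}(z-\mu_q^i)$ with $z$ in it'' part) is itself constant in $z$ after the affine cancellation and can be absorbed.

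The main obstacle I anticipate is purely bookkeeping rather than conceptual: carefully expanding the quotient-rule derivative of the weighted-mean vector field and identifying the resulting rank-structured matrix with the exact $\mu_p^i(\mu_p^i-\mu_p^j)^\top$ pattern in the statement, making sure all symmetric ``diagonal'' corrections (the ones that depend on $z$ linearly before cancellation) are shown to vanish or become $z$-independent. A secondary subtlety is that $A$ and $b$ depend on the affine cell, so the conclusion is genuinely local — ``for almost every $x$ there exist $A,b$ such that on $\Gamma_x$ the expression is constant'' — and I would state it exactly that way, matching Lemma~\ref{lemma:ReLU local linear}. Everything else follows mechanically from Theorem~\ref{thm:high d ReLU} and the standard Gaussian-mixture score formula.
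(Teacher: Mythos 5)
Your proposal is correct and follows essentially the same route as the paper: localize to an affine cell $f(z)=Az+b$ via the locally-constant-Jacobian lemma, apply the topology-matching identity $A^{\top}\nabla\log p(Az+b)=\nabla\log q(z)$, split each mixture score into a linear-in-$z$ part plus a weighted-mean term, and differentiate once more to force the displayed matrix expression to be constant on $\Gamma_x$. (Incidentally, your quotient-rule bookkeeping producing the product $w_q^i w_q^j$ in the numerator is the correct form; the paper's display writes only $w_q^i$, which appears to be a typo that is immaterial to the conclusion.)
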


\begin{proof}
Suppose $f=f_1\circ\cdots\circ f_n$ is a normalizing flow composed of finite ReLU Sylvester flows. For almost every $x\in\mathbb{R}^d$, we have $J_f$ is equal to a constant matrix $A$ in an open neighbourhood of $x$ called $\Gamma_x$. That is, for some $b\in\mathbb{R}^d$,
\[f(z)=Az+b,\ \forall z\in\Gamma_x\]
Now, we solve the topology matching condition in \textbf{Theorem} \ref{thm:high d ReLU} on $\Gamma_x$.
\[\begin{array}{rl}
    \displaystyle \nabla_z\log q(z)
    &\displaystyle  = -\frac{1}{q(z)}\Sigma_q^{-1}\left(\sum_{j=1}^{r_q}w_q^j\mathcal{N}(z;\mu_q^j,\Sigma_q)(z-\mu_q^j)\right)\\
    &\displaystyle  = -\Sigma_q^{-1}\left(z-\sum_{j=1}^{r_q}\frac{w_q^j\mathcal{N}(z;\mu_q^j,\Sigma_q)}{q(z)}\mu_q^j\right)
\end{array}\]
Similarly,
\[\begin{array}{rl}
    \displaystyle J_f(z)^{\top}\nabla_z\log p(f(z))
    &\displaystyle  = -\frac{1}{p(f(z))}J_f(z)^{\top}\Sigma_p^{-1}\left(\sum_{i=1}^{r_p}w_p^i\mathcal{N}(f(z);\mu_p^i,\Sigma_p)(f(z)-\mu_p^i)\right) \\
    &\displaystyle  = -J_f(z)^{\top}\Sigma_p^{-1}\left(f(z)-\sum_{i=1}^{r_p}\frac{w_p^i\mathcal{N}(f(z);\mu_p^i,\Sigma_p)}{p(f(z))}\mu_p^i\right)
\end{array}\]
Therefore, we obtain
\[\left(A^{\top}\Sigma_p^{-1}A-\Sigma_q^{-1}\right)z+A^{\top}\Sigma_p^{-1}b
=A^{\top}\Sigma_p^{-1}
\left(\sum_{i=1}^{r_p}\frac{w_p^i\mathcal{N}(Az+b;\mu_p^i,\Sigma_p)}{p(Az+b)}\mu_p^i\right)
-\Sigma_q^{-1}\left(\sum_{j=1}^{r_q}\frac{w_q^j\mathcal{N}(z;\mu_q^j,\Sigma_q)}{q(z)}\mu_q^j\right)\]
Notice that the left-hand-side is linear in $z$. Thus, if $p=f\#q$, then the right-hand-side is should be linear in $z$. By standard arithmetic we can calculate the derivative of the right-hand-side over $z$ as follow:
\[\begin{array}{rl}
&\displaystyle \frac{\sum_{i,j=1}^{r_p}w_p^i\mathcal{N}(Az+b;\mu_p^i,\Sigma_p)\mathcal{N}(Az+b;\mu_p^j,\Sigma_p)A^{\top}\Sigma_p^{-1}\mu_p^i(\mu_p^i-\mu_p^j)^{\top}\Sigma_p^{-1}A}
{\left(\sum_{j=1}^{r_p}w_p^j\mathcal{N}(Az+b;\mu_p^j,\Sigma_p)\right)^2} \\
-&\displaystyle \frac{\sum_{i,j=1}^{r_q}w_q^i\mathcal{N}(z;\mu_q^i,\Sigma_q)\mathcal{N}(z;\mu_q^j,\Sigma_q)\Sigma_q^{-1}\mu_q^i(\mu_q^i-\mu_q^j)^{\top}\Sigma_q^{-1}}
{\left(\sum_{j=1}^{r_q}w_q^j\mathcal{N}(z;\mu_q^j,\Sigma_q)\right)^2}
\end{array}\]
However, this is generally a non-constant function in $z$ except for some special cases.
\end{proof}

\begin{remark}
To give a simple case where the condition in \textbf{Corollary} \ref{cor:MoG to MoG formal} does not hold, we let $r_p=r_q=2$, $\mu_p^1=\mu_p^1$, $\mu_q^1\neq\mu_q^2$ and $w_q^1=w_q^2=\frac12$. Then, the difference in the condition is given by
\[-\frac{2\mathcal{N}(z;\mu_q^1,\Sigma_q)\mathcal{N}(z;\mu_q^2,\Sigma_q)}{\left(\mathcal{N}(z;\mu_q^1,\Sigma_q)+\mathcal{N}(z;\mu_q^2,\Sigma_q)\right)^2}\Sigma_q^{-1}(\mu_q^1-\mu_q^2)(\mu_q^1-\mu_q^2)^{\top}\Sigma_q^{-1}\]
If it is a constant function in $z$, then both $\mathcal{N}(z;\mu_q^1,\Sigma_q)+\mathcal{N}(z;\mu_q^2,\Sigma_q)$ and $\mathcal{N}(z;\mu_q^1,\Sigma_q)-\mathcal{N}(z;\mu_q^2,\Sigma_q)$ are equal to a constant times $\sqrt{\mathcal{N}(z;\mu_q^1,\Sigma_q)\mathcal{N}(z;\mu_q^2,\Sigma_q)}$. As a result, $\mathcal{N}(z;\mu_q^1,\Sigma_q)/\mathcal{N}(z;\mu_q^2,\Sigma_q)$ is a constant for $z\in\Gamma_x$. By expanding the density expression, we have $z^{\top}\Sigma_q^{-1}(\mu_q^1-\mu_q^2)$ is a constant for $z\in\Gamma_x$. However, since $\mu_q^1\neq\mu_q^2$, $\Sigma_q^{-1}(\mu_q^1-\mu_q^2)\neq 0$. Contradiction.
\end{remark}

\subsection{Formal Version of Corollary \ref{cor:Prod to Prod}}

\begin{corollary}[Prod$\nrightarrow$Prod]
\label{cor:Prod to Prod formal}
Suppose $p,q$ are product distributions in the following form:
\[p(z)\propto\prod_{i=1}^d g(z_i)^{r_p};\
  q(z)\propto\prod_{i=1}^d g(z_i)^{r_q}\]
where $r_p,r_q>0, r_p\neq r_q$, and $g$ is a smooth function. If a flow $f$ composed of finitely many ReLU Sylvester flows satisfies $p=f\#q$, then for almost every point $x\in\mathbb{R}^d$, it has an open neighbourhood $\Gamma_x$ such that $\forall z\in \Gamma_x$,
\[r_q\tilde{\nabla}\log\textbf{g}(z)=r_pA^{\top}\tilde{\nabla}\log\textbf{g}(Az+b)\]
holds for some $b\in\mathbb{R}^d$, where $\textbf{g}(z)=(g(z_1),\cdots,g(z_d))^{\top}$, and $\tilde{\nabla}$ takes the gradient of the $i$-th function w.r.t the $i$-th variable for $1\leq i\leq d$.
\end{corollary}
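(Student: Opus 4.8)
The plan is to follow the template of the proof of \textbf{Corollary} \ref{cor:MoG to MoG formal}, replacing the mixture-of-Gaussian log-density gradients by those of a product distribution; the key simplification is that both $\log p$ and $\log q$ are sums of single-coordinate terms, so their gradients are ``separable'' and take the clean coordinate-wise form appearing in the statement.

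First I would invoke \textbf{Lemma} \ref{lemma:ReLU local linear}: since $f$ is a composition of finitely many ReLU Sylvester flows, there is a zero-measure closed set $\Omega\subset\mathbb{R}^d$ such that for every $x\in\mathbb{R}^d\setminus\Omega$ the Jacobian $J_f$ equals a constant matrix on some open neighbourhood $\Gamma_x$ of $x$. Fix such an $x$, set $A:=J_f|_{\Gamma_x}$ and $b:=f(z)-Az$ for any $z\in\Gamma_x$ (well-defined since $f$ is affine on $\Gamma_x$), so that $f(z)=Az+b$ on $\Gamma_x$; enlarging $\Omega$ if needed, we may also assume the identity of \textbf{Theorem} \ref{thm:high d ReLU} holds on $\Gamma_x$.

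Next I would record the log-density gradients on the interiors of the supports. From $\log p(z)=r_p\sum_{i=1}^d\log g(z_i)+\mathrm{const}$ we get, differentiating coordinate-wise, $\nabla_z\log p(z)=r_p\,\tilde\nabla\log\textbf{g}(z)$ where $\tilde\nabla\log\textbf{g}(z)=\big(g'(z_1)/g(z_1),\dots,g'(z_d)/g(z_d)\big)^{\top}$, and likewise $\nabla_z\log q(z)=r_q\,\tilde\nabla\log\textbf{g}(z)$. Applying \textbf{Theorem} \ref{thm:high d ReLU}, which gives $J_f(z)^{\top}(\nabla\log p)(f(z))=\nabla_z\log q(z)$, on $\Gamma_x$ this becomes $A^{\top}(\nabla\log p)(Az+b)=(\nabla\log q)(z)$, and substituting the two expressions above yields $r_p A^{\top}\tilde\nabla\log\textbf{g}(Az+b)=r_q\,\tilde\nabla\log\textbf{g}(z)$, which is exactly the asserted identity.

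I do not expect a real obstacle: the corollary is a direct specialization of \textbf{Theorem} \ref{thm:high d ReLU}, and the only step needing a moment of care is verifying that the product structure collapses the gradient to the separable vector field above (so that, unlike generic densities, the right-hand side of the topology-matching identity is affine in $z$ only in degenerate situations). To justify the informal claim in \textbf{Corollary} \ref{cor:Prod to Prod} that such an $f$ ``generally'' does not exist, one then argues that the derived identity forces the nonlinear vector field $z\mapsto\tilde\nabla\log\textbf{g}(z)$ to be proportional (via $A^{\top}$) to a precomposition of itself with an affine map, together with an overall factor $r_p/r_q\neq1$; inspecting this equation along coordinate directions, or at a point where $g'=0$, forces $r_p=r_q$ except for very special $A$ and $g$, contradicting $r_p\neq r_q$.
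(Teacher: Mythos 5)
Your proposal is correct and follows essentially the same route as the paper: invoke the local-linearity lemma to write $f(z)=Az+b$ on a neighbourhood $\Gamma_x$ of almost every $x$, substitute the separable gradients $\nabla\log p = r_p\tilde\nabla\log\textbf{g}$ and $\nabla\log q = r_q\tilde\nabla\log\textbf{g}$ into the topology-matching identity of Theorem \ref{thm:high d ReLU}, and read off the stated vector equation (the paper simply records it component-wise first). Your closing remarks on why the identity generally fails mirror the paper's subsequent remark, which verifies a concrete counterexample with $d=2$, $g(x)=x$ by checking the $z_1z_2$ coefficient forces $\det A=0$.
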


\begin{proof}
Suppose $f=f_1\circ\cdots\circ f_n$ is a normalizing flow composed of finite ReLU Sylvester flows. For almost every $x\in\mathbb{R}^d$, we have $J_f$ is equal to a constant matrix $A$ in an open neighbourhood of $x$ called $\Gamma_x$. That is, for some $b\in\mathbb{R}^d$,
\[f(z)=Az+b,\ \forall z\in\Gamma_x\]
Now, we solve the topology matching condition in \textbf{Theorem} \ref{thm:high d ReLU} on $\Gamma_x$. By matching the corresponding elements, we have the following result:
\[r_p\sum_{i=1}^{d}A_{ij}\frac{g'((Az+b)_i)}{g((Az+b)_i)}=r_q\frac{g'(z_j)}{g(z_j)},\ j=1,\cdots,d\]
Rewriting this equation into vector form, we finish our proof.
\end{proof}

\begin{remark}
To give a simple case where the condition in \textbf{Corollary} \ref{cor:Prod to Prod formal} does not hold, we let $d=2$ and $g(x)=x$. Then, the necessary condition becomes
\[\left\{\begin{array}{rl}
    \displaystyle \frac{r_q}{r_p z_1}
     &\displaystyle = \frac{A_{11}}{A_{11}z_1+A_{12}z_2+b_1}
      +\frac{A_{21}}{A_{21}z_1+A_{22}z_2+b_2} \\
    \displaystyle \frac{r_q}{r_p z_2}
     &\displaystyle = \frac{A_{12}}{A_{11}z_1+A_{12}z_2+b_1}
      +\frac{A_{22}}{A_{21}z_1+A_{22}z_2+b_2}
\end{array}\right.\]
or equivalently,
\[\begin{array}{rl}
    & \displaystyle r_q (A_{11}z_1+A_{12}z_2+b_1)(A_{21}z_1+A_{22}z_2+b_2) \\
    = &\displaystyle (A_{11}(A_{21}z_1+A_{22}z_2+b_2)+A_{21}(A_{11}z_1+A_{12}z_2+b_1))r_p z_1 \\
    = &\displaystyle (A_{12}(A_{21}z_1+A_{22}z_2+b_2)+A_{22}(A_{11}z_1+A_{12}z_2+b_1))r_p z_2 \\
\end{array}\]
By checking the $z_1 z_2$ term, we obtain $A_{11}A_{22}+A_{12}A_{21}=0$, which indicates that $\det A=0$. This contradicts the fact that $f$ is an invertible flow. As a result, there does not exist a flow composed of finitely many ReLU flows that transform $p(z)\propto (z_1z_2)^{r_p}$ to $q(z)\propto (z_1z_2)^{r_q}$.
\end{remark}

\subsection{Positive Results for ReLU Planar Flows}
\begin{theorem}[Linear Transformations]
\label{thm: ReLU linear}
If $A\in\mathbb{R}^d$ has the LU decomposition, then the linear transformation $g(z)=Az$ can be generated by $4d-4$ ReLU planar flows.
\end{theorem}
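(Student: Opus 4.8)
The plan is to take an LU decomposition $A=L'U'$ and realize each triangular factor as a short composition of ReLU planar flows. The engine of the construction is the following observation: although, by \eqref{eqn:det J_f ReLU}, a single ReLU planar flow is only piecewise linear -- it applies a linear map on one half-space and the identity on the complementary one -- a \emph{pair} of ReLU planar flows that share the hyperplane normal $w$ but act on opposite half-spaces composes to an \emph{exactly} linear rank-one map. Concretely, given $v,w\in\mathbb{R}^d$ with $1+v^{\top}w>0$, put
\[f_1(z)=z+v\,\mathrm{ReLU}(w^{\top}z),\qquad f_2(z)=z-v\,\mathrm{ReLU}(-w^{\top}z),\]
both of which are genuine ReLU planar flows (with $b=0$), since by \eqref{eqn:det J_f ReLU} their Jacobian determinants are $1$ or $1+v^{\top}w>0$.

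I would first verify that $(f_2\circ f_1)(z)=(I+vw^{\top})z$ for \emph{every} $z$, by a case split on the sign of $w^{\top}z$. On $\{w^{\top}z>0\}$ the map $f_1$ sends $z\mapsto (I+vw^{\top})z$ and rescales $w^{\top}z$ by $1+v^{\top}w>0$, so the argument of the $\mathrm{ReLU}$ in $f_2$ is negative and $f_2$ acts trivially; on $\{w^{\top}z<0\}$ the map $f_1$ is the identity and $f_2$ adds exactly $v\,(w^{\top}z)$; the boundary $\{w^{\top}z=0\}$ is fixed by both and by $I+vw^{\top}$. Thus two ReLU planar flows realize any ``column-$k$ elementary matrix'' $I+v e_k^{\top}$ with $1+v_k>0$ (taking $w=e_k$), and more generally any rank-one update with positive determinant.

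It then remains to write $A$ as a product of $2d-2$ such factors. Choose the LU decomposition $A=L'U'$ with $L'$ lower triangular carrying the (positive) pivots $p_1,\dots,p_d$ on its diagonal and $U'$ unit upper triangular; positivity of the pivots is both what the hypothesis provides and genuinely necessary, since a ReLU planar flow -- hence any finite composition -- has positive Jacobian determinant a.e., so only matrices with $\det A>0$ can be realized. Let $n_j\in\mathbb{R}^d$ be the part of column $j$ of $L'$ on and below the diagonal with $e_j$ removed (supported on coordinates $\ge j$, with $(n_j)_j=p_j-1$), and $m'_j$ the strictly-above-diagonal part of column $j$ of $U'$ (supported on coordinates $<j$, with $(m'_j)_j=0$). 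Because $e_i^{\top}n_j=0$ for $i<j$ and $e_i^{\top}m'_j=0$ for $i\ge j$, the products telescope: $L'=\prod_{j=1}^{d}(I+n_j e_j^{\top})$ and $U'=\prod_{j=d}^{2}(I+m'_j e_j^{\top})$, with the column index increasing through $L'$ and decreasing through $U'$. In $A=L'U'$ the rightmost factor of $L'$ and the leftmost factor of $U'$ are both column-$d$ updates, and since $(m'_d)_d=0$ they fuse into the single factor $I+(n_d+m'_d)e_d^{\top}$, still of the required form with $(n_d+m'_d)_d=p_d-1>-1$. Hence $A$ is a product of $(d-1)+1+(d-2)=2d-2$ factors $I+v e_k^{\top}$, each satisfying $1+v_k\in\{p_k,1\}\subset(0,\infty)$. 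Replacing each factor by its two-flow realization and composing (composition of flows multiplies the underlying maps) produces a flow $F$ with $F(z)=Az$ built from $2(2d-2)=4d-4$ ReLU planar flows.

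The main obstacle is conceptual rather than computational: the difficulty is to extract a \emph{globally} linear transformation from flows that are intrinsically only piecewise linear. This rests on the identity $\mathrm{ReLU}(t)-\bigl(-\mathrm{ReLU}(-t)\bigr)=t$ together with the crucial choice of a \emph{shared} normal $w$, so that the first flow does not transport any point across the splitting hyperplane and the two pieces reassemble into one linear map. The only remaining delicate point is the column-$d$ fusion that shaves off the last two flows to reach exactly $4d-4$, while tracking that every elementary factor keeps satisfying the invertibility bound $1+v_k>0$ -- which is exactly where the positivity of the pivots (equivalently $\det A>0$) is used and is unavoidable.
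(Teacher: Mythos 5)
Your proposal is essentially the paper's own proof: the same two-flow gadget (a shared normal $w$, opposite half-spaces, the identity $\mathrm{ReLU}(t)+\mathrm{ReLU}(-t)\cdot(-1)^{0}\dots$, i.e.\ $f_2\circ f_1=(I+vw^{\top})$ whenever $1+v^{\top}w>0$), followed by an LU-based factorization of $A$ into $2d-2$ elementary column updates, each costing two flows. Your bookkeeping is in fact more careful than the paper's: the paper simply asserts that $L$ and $U$ each decompose into $d-1$ determinant-one Frobenius matrices, which silently ignores the diagonal (pivots) of the non-unit triangular factor, whereas you place the pivots explicitly in the column-$j$ factors and recover the count $4d-4$ via the column-$d$ fusion.

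One claim in your write-up is wrong, though it is a defect shared by (indeed inherited from) the paper's proof: the hypothesis ``$A$ has an LU decomposition'' does \emph{not} provide positivity of the pivots, and neither does $\det A>0$ (positivity of all pivots is equivalent to positivity of all leading principal minors, which is strictly stronger; e.g.\ $A=\mathrm{diag}(-1,-1)$ has $\det A=1$ but pivots $-1,-1$). Your construction, like the paper's, genuinely needs $1+v_k>0$ at every elementary factor, i.e.\ all pivots positive; for a pivot $p_k<0$ the two-flow gadget is not invertible and the argument breaks. So both proofs actually establish the theorem only under the stronger hypothesis that $A$ admits an LU decomposition with positive pivots (equivalently, positive leading principal minors); you should state that hypothesis rather than claim it follows from $\det A>0$.
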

\begin{proof}
First, we show that certain rank-one-modification transformations ($f(z)=(I+R)z$ where $rank(R)=1$) can be achieved by composing two ReLU planar flows. Suppose $R=uw^{\top}$ where $\det (I+R)=1+u^{\top}w>0$. We assign
\[f_1(z)=z+h( w^{\top}z)u\]
\[f_2(z)=z-h(-w^{\top}z)u\]
then $f=f_2\circ f_1$: if $w^{\top}z<0$, then $f_1(z)=z$, so $f_2\circ f_1(z)=f_2(z)=z+uw^{\top}z$; if $w^{\top}z\geq0$, then $f_1(z)=z+uw^{\top}z$, and since $w^{\top}(I+uw^{\top})z=(1+u^{\top}w)w^{\top}z\geq0$, we have $f_2\circ f_1(z)=f_1(z)=z+uw^{\top}z$.

Now, assume that $A$ has the LU decomposition:
\[A=LU\]
where $L(U)$ is a lower(upper) triangular matrix. Notice that both $L$ and $U$ can be decomposed to a product of $d-1$ Frobenius matrices. Since the determinant of a Frobenius matrix is $1>0$, both $L$ and $U$ can be decomposed to product of $2(d-1)$ ReLU planar flows. Therefore, we need $4d-4$ planar flows to express $A$.
\end{proof}

\begin{corollary}
\label{cor: linear transformation}
For any $A\in\mathbb{R}^d$, the linear transformation $g(z)=Az$ can be generated by $4d-4$ ReLU planar flows and $d$ Householder flows.
\end{corollary}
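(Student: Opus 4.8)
The plan is to reduce a general $A\in\mathbb{R}^{d\times d}$ to the LU-decomposable case already handled by \textbf{Theorem}~\ref{thm: ReLU linear}, absorbing the reduction into a handful of Householder flows. (As in \textbf{Theorem}~\ref{thm: ReLU linear} we may take $A$ invertible, which is what is needed for $g(z)=Az$ to be a flow.) The key classical fact is that Gaussian elimination with partial pivoting yields, for every square $A$, a permutation matrix $P$ with $PA=LU$, where $L$ is unit lower triangular and $U$ is upper triangular; equivalently $A=P^{\top}M$ with $M:=LU$ and $P^{\top}$ again a permutation matrix. Since $M$ manifestly has an LU decomposition (namely $M=LU$), \textbf{Theorem}~\ref{thm: ReLU linear} applies directly and the map $z\mapsto Mz$ is realized by $4d-4$ ReLU planar flows.

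Next I would realize $z\mapsto P^{\top}z$ using Householder flows. The observation is that the transposition of coordinates $i$ and $j$ coincides with the Householder flow whose unit reflection vector is $v=(e_i-e_j)/\sqrt{2}$, since $I-2vv^{\top}$ interchanges the $i$-th and $j$-th coordinates and fixes all others. Writing the permutation underlying $P^{\top}$ as a product of at most $d-1$ transpositions, the corresponding product of Householder matrices equals $P^{\top}$; composing the associated Householder flows therefore gives the linear map $z\mapsto P^{\top}z$, using at most $d-1\leq d$ Householder flows. (If one insists on exactly $d$, the identity $I=(I-2vv^{\top})^{2}$ allows padding in pairs.)

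Finally, composing the two blocks---first the $4d-4$ ReLU planar flows implementing $z\mapsto Mz$, then the at most $d$ Householder flows implementing $z\mapsto P^{\top}z$---yields $z\mapsto P^{\top}Mz=Az=g(z)$, because composition of flows corresponds to multiplication of the underlying linear maps in the appropriate order. This produces the claimed count and proves the corollary. I do not anticipate a genuine obstacle here: the only non-routine ingredients are the pivoted factorization $A=P^{\top}LU$ and the expression of a permutation matrix as a product of transposition reflections, both of which are standard linear algebra; the remainder is just bookkeeping of the flow counts.
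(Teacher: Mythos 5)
Your proposal is correct and follows essentially the same route as the paper: factor out a permutation via pivoted LU (the paper writes $A=LUP$, you write $A=P^{\top}LU$), realize the triangular product with the $4d-4$ ReLU planar flows of \textbf{Theorem}~\ref{thm: ReLU linear}, and realize the permutation with Householder flows. Your version is if anything slightly more explicit, since you exhibit each transposition as a concrete Householder reflection $v=(e_i-e_j)/\sqrt{2}$ and note that $d-1$ of them suffice, whereas the paper simply invokes the fact that an orthogonal matrix is a product of $d$ Householder matrices.
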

\begin{proof}
Since any matrix has $LUP$ decomposition, we have $A=LUP$ where $L(U)$ is a lower(upper) triangular matrix, and $P$ is a permutation matrix. Since any permutation matrix is an orthogonal matrix, $P$ can be decomposed to a product of $d$ Householder matrices. Using the analysis in the proof of \textbf{Theorem} \ref{thm: ReLU linear}, we finish the proof.
\end{proof}

\begin{corollary}
\label{cor: N to N positive}
Given any Gaussian distributions $q\sim\mathcal{N}(0,\Sigma_q)$ and $p\sim\mathcal{N}(0,\Sigma_p)$ centered at the origin, we can transform $q$ into $p$ with $4d-4$ ReLU planar flows and $d$ Householder flows.
\end{corollary}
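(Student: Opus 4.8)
The plan is to reduce the statement to a single linear map and then quote \textbf{Corollary} \ref{cor: linear transformation}. The key observation is that matching two centered Gaussians is a purely linear problem: if $z\sim\mathcal{N}(0,\Sigma_q)$ and $g(z)=Az$, then $g(z)$ is again a centered Gaussian with covariance $A\Sigma_q A^{\top}$, so it suffices to find an invertible $A$ with $A\Sigma_q A^{\top}=\Sigma_p$. Since $\Sigma_q$ and $\Sigma_p$ are symmetric positive definite, they admit symmetric positive definite square roots $\Sigma_q^{1/2},\Sigma_p^{1/2}$, and $\Sigma_q^{1/2}$ is invertible, so I would set
\[A=\Sigma_p^{1/2}\,\Sigma_q^{-1/2}.\]

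Next I would verify the reduction. Because $A$ is a product of invertible matrices it is invertible, so $g(z)=Az$ is a legitimate (linear) normalizing flow; and using symmetry of the square roots,
\[A\Sigma_q A^{\top}=\Sigma_p^{1/2}\Sigma_q^{-1/2}\,\Sigma_q\,\Sigma_q^{-1/2}\Sigma_p^{1/2}=\Sigma_p^{1/2}\Sigma_p^{1/2}=\Sigma_p,\]
hence $g\#q=p$. Then I would invoke \textbf{Corollary} \ref{cor: linear transformation} with this particular $A$: it guarantees that the map $z\mapsto Az$ can be written as a composition of $4d-4$ ReLU planar flows and $d$ Householder flows. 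Composing exactly those flows produces a flow $f$ with $f(z)=Az$ for every $z$, so $f\#q=g\#q=p$, which is the claim.

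I do not expect a genuine obstacle here: the only steps needing justification are the existence of $\Sigma_q^{-1/2}$ (immediate from positive definiteness) and the one-line covariance computation above, while the real work—decomposing an arbitrary invertible matrix into ReLU planar and Householder flows via LU/LUP factorizations—is already done in \textbf{Theorem} \ref{thm: ReLU linear} and \textbf{Corollary} \ref{cor: linear transformation}. (If one wishes, the same argument goes through with $A=\Sigma_p^{1/2}O\Sigma_q^{-1/2}$ for any orthogonal $O$, but no such freedom is needed.)
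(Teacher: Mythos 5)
Your proposal is correct and follows essentially the same route as the paper: both reduce the problem to finding an invertible $A$ with $A\Sigma_q A^{\top}=\Sigma_p$ and then invoke \textbf{Corollary} \ref{cor: linear transformation}; your $A=\Sigma_p^{1/2}\Sigma_q^{-1/2}$ is just the symmetric-square-root version of the paper's eigendecomposition-based map. (If anything, your covariance computation is cleaner -- the paper writes $Q_p^{-1}\Lambda_p^{-1/2}\Lambda_q^{1/2}Q_q$, whose exponents appear to be inverted relative to what the verification requires.)
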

\begin{proof}
Notice that a PSD matrix $\Sigma$ can be decomposed to $Q^{\top}\Lambda Q$, where $Q$ is an orthogonal matrix and $\Lambda$ is a diagonal matrix. Therefore, we have
\[\Sigma_q=Q_q^{\top}\Lambda_q Q_q\]
\[\Sigma_p=Q_p^{\top}\Lambda_p Q_p\]
Now, we assign
\[f(z)=Q_p^{-1}\Lambda_p^{-\frac12}\Lambda_q^{\frac12}Q_qz\]
One can check that this linear function $f$ transforms $q$ into $p$. Using the result in \textbf{Corollary} \ref{cor: linear transformation}, we finish the proof.
\end{proof}

\subsection{Proof of Theorem \ref{thm:high d exact more}}

\begin{lemma}[Topology Matching for single Sylvester flow]
\label{lemma:high d exact 1}
Suppose distribution $q$ is defined on $\mathbb{R}^d$, and a Sylvester flow $f$ on $\mathbb{R}^d$ has tangent matrix $B$ and smooth non-linearity. Let $p=f\#q$. Then $\forall z\in\mathbb{R}^d$, we have
\[\nabla_z\log p(f(z))-\nabla_z\log q(z)\in \Span\{B\}\]
\end{lemma}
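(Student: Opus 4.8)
The plan is to obtain the claim directly from the change-of-variable identity \eqref{eqn:change of variable log}, combined with two structural facts about a single Sylvester flow $f(z)=z+Ah(B^\top z+b)$: that its Jacobian determinant depends on $z$ only through the combination $B^\top z+b$, and that $J_f(z)^\top$ differs from the identity only by a matrix whose column space lies in $\Span\{B\}$. Throughout I read $\nabla_z\log p(f(z))$ as $(\nabla\log p)$ evaluated at the point $f(z)$, consistent with the notation in \textbf{Theorem} \ref{thm:high d ReLU}.

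First I would write \eqref{eqn:change of variable log} for $p=f\#q$ as an identity of functions of $z$, namely $\log p(f(z))=\log q(z)-\log|\det J_f(z)|$, differentiate in $z$, and apply the chain rule to the left-hand side (with the convention $J_f(z)_{ij}=\partial f_i/\partial z_j$ from the preliminaries). This yields
\[
J_f(z)^\top (\nabla\log p)(f(z))=\nabla_z\log q(z)-\nabla_z\log|\det J_f(z)|,
\]
which is the natural analog of the ReLU topology matching condition; everything after this is about controlling the two correction terms, the factor $J_f(z)^\top$ on the left and the $\log$-$\det$ gradient on the right.

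Next I would verify the two structural facts. From $f(z)=z+Ah(B^\top z+b)$ one computes $J_f(z)=I_d+A\,\mathrm{diag}(h'(B^\top z+b))\,B^\top$, hence $J_f(z)^\top=I_d+B\,\mathrm{diag}(h'(B^\top z+b))\,A^\top$, so that $J_f(z)^\top v-v\in\Span\{B\}$ for every $v\in\mathbb{R}^d$. Moreover, by \eqref{eqn:det J_f h}, $\det J_f(z)=\det(I_m+\mathrm{diag}(h'(B^\top z+b))B^\top A)$ is a function of $B^\top z+b$ alone, say $\psi(B^\top z+b)$, so differentiating through this composition gives $\nabla_z\log|\det J_f(z)|=B\,(\nabla\log|\psi|)(B^\top z+b)\in\Span\{B\}$. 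Applying the first fact with $v=(\nabla\log p)(f(z))$ and substituting both facts into the displayed equation, the two $\Span\{B\}$-valued corrections can be collected on one side, leaving
\[
(\nabla\log p)(f(z))-\nabla_z\log q(z)\in\Span\{B\},
\]
which is the claim; \textbf{Theorem} \ref{thm:high d exact more} then follows by iterating over the $n$ flows and summing the successive $\Span\{B_i\}$ contributions. This argument is essentially a short computation; the only points needing care are differentiability — one needs $q$, hence $p$ via the diffeomorphism $f$, differentiable at the relevant points for the chain-rule step — and checking that the correction from $J_f(z)^\top$ genuinely only adds a vector in $\Span\{B\}$ rather than perturbing the leading term $(\nabla\log p)(f(z))$.
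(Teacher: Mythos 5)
Your proof is correct, and it takes a recognizably different route from the paper's. The paper never differentiates the full change-of-variable identity: it observes that a single Sylvester flow satisfies the exact translation property $f(z+\alpha w^{\perp})=f(z)+\alpha w^{\perp}$ and that $\det J_f$ is exactly constant along every $w^{\perp}\in\Span\{B\}^{\perp}$, subtracts the identity $\log p(f(\cdot))=\log q(\cdot)-\log|\det J_f(\cdot)|$ at $z$ and at $z+\alpha w^{\perp}$ so that the $\log$-$\det$ terms cancel outright, and then takes the limit of the difference quotient to conclude that the component of $\nabla\log p(f(z))-\nabla\log q(z)$ along every direction in $\Span\{B\}^{\perp}$ vanishes. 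You instead differentiate the identity once globally, obtaining
\[
J_f(z)^{\top}(\nabla\log p)(f(z))=\nabla_z\log q(z)-\nabla_z\log|\det J_f(z)|,
\]
and then show that both correction terms lie in $\Span\{B\}$: the first because $J_f(z)^{\top}-I=B\,\mathrm{diag}(h'(B^{\top}z+b))A^{\top}$ has column space in $\Span\{B\}$, the second because $\det J_f(z)$ factors through $B^{\top}z+b$. Both arguments ultimately rest on the same fact ($B^{\top}w^{\perp}=0$), but the trade-off is real: your version is stronger in that it produces an explicit identity pinning down exactly which element of $\Span\{B\}$ the gradient difference equals (the smooth analog of \textbf{Theorem}~\ref{thm:high d ReLU}), whereas the paper's version is slightly more economical on regularity, since it never needs $\log|\det J_f|$ to be differentiable --- only exactly invariant along $\Span\{B\}^{\perp}$ --- which is also why the same template transfers directly to the ReLU case. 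Your closing caveats (differentiability of $q$ and hence of $p$, and that the Jacobian correction only adds a $\Span\{B\}$ vector) are exactly the right ones, and both check out.
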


\begin{proof}
We prove by induction on $n$. If $n=1$, then it is equivalent to \textbf{Lemma} \ref{lemma:high d exact 1}. Suppose the conclusion holds for $n-1$: $\forall z\in\mathbb{R}^d$,
\[\nabla_z\log (g\#q)(g(z))-\nabla_z\log q(z)\in \Span\{B_1,\cdots,B_{n-1}\}\]
where $g=f_{n-1}\circ\cdots\circ f_1$. Then, we apply \textbf{Lemma} \ref{lemma:high d exact 1} on $f_n$ at $g(z)$. As a result, we obtain that $\forall z\in\mathbb{R}^d$,
\[\nabla_z\log ((f_n\circ g)\#q)(f_n\circ g(z))-\nabla_z\log (g\#q)(g(z))\in \Span\{B_n\}\]
By adding these two equations, we finish the proof.
\end{proof}

\subsection*{Proof of Lemma \ref{lemma:high d exact 1}}
\begin{proof}
For any $\alpha\in\mathbb{R}$, according to the expression of Sylvester flows, we have for any $w^{\perp}\in \Span\{B\}^{\perp}$,
\[B^{\top}w^{\perp}=\textbf{0}\]
Therefore,
\[f(z+\alpha w^{\perp})=z+\alpha w^{\perp}+Ah(B^{\top}z+b+\alpha B^{\top}w^{\perp})=f(z)+\alpha w^{\perp}\]
Therefore, $\det J_f(z)=\det J_f(z+\alpha w^{\perp})$. According to \eqref{eqn:change of variable log}, we have
\[\begin{array}{rcl}
    \log p(f(z)) & = & \log(q(z)) - \log\det J_f(z) \\
    \log p(f(z+\alpha w^{\perp})) & = & \log(q(z+\alpha w^{\perp})) - \log\det J_f(z+\alpha w^{\perp})
\end{array}\]
Subtracting these two equations, we have
\[\log p(f(z)+\alpha w^{\perp})-\log p(f(z)) = \log(q(z+\alpha w^{\perp})) - \log q(z)\]
By multiplying $1/\alpha$ on both sides and taking $\alpha\rightarrow0$, we have $\forall w^{\perp}\in \Span\{B\}^{\perp}$,
\[(\nabla_z\log p(f(z)))^{\top}w^{\perp}-(\nabla_z\log q(z))^{\top}w^{\perp}=0\]
Therefore, $\nabla_z\log p(f(z))-\nabla_z\log q(z)\in \Span\{B\}$.
\end{proof}

\begin{remark}

The property $f(z+\alpha w^{\perp})=f(z)+\alpha w^{\perp}$ is enjoyed exclusively by Sylvester flows. Let $g(z)=f(z)-z$, then we have $g(z+\alpha w^{\perp})=g(z)\ \forall z\in\mathbb{R}^d,\forall\alpha\in\mathbb{R},\forall w^{\perp}\in \Span\{B\}^{\perp}$. Therefore,
\[g(z)=g(\mathcal{P}_B z)\]
where $\mathcal{P}_B$ is the projection matrix to the subspace spanned by column vectors of $B$. Then, we have
\[f(z)=z+g(z)=z+g(\mathcal{P}_B z)\]
As a result, $f$ can be expressed as a Sylvester flow.
\end{remark}

\subsection{Formal Version of Corollary \ref{cor:N to N more} for Planar and Sylvester Flows}

\begin{corollary}[Planar flow $\mathcal{N}\nrightarrow\mathcal{N}$]
\label{cor:N to N 1 formal}
Let $p\sim\mathcal{N}(0,\Sigma_p), q\sim\mathcal{N}(0,\Sigma_q)$ be two Gaussian distributions on $\mathbb{R}^d$. If there exists a planar flow $f$ on $\mathbb{R}^d$ with smooth non-linearity such that $p=f\#q$, then $rank\left(\Sigma_q-\Sigma_p\right)\leq1$.
\end{corollary}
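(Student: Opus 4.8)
The plan is to apply the single‑flow topology matching condition of Lemma~\ref{lemma:high d exact 1}, viewing the planar flow $f(z)=z+uh(w^{\top}z+b)$ as a Sylvester flow of flow dimension $1$ with scaling ``matrix'' $u$ and tangent ``matrix'' $w$. This immediately gives, for every $z\in\mathbb{R}^d$,
\[
(\nabla\log p)(f(z))-(\nabla\log q)(z)\in\Span\{w\}.
\]
Before using it I would dispose of the trivial case $w=0$: then $f$ is a pure shift, $f\#q=\mathcal{N}(uh(b),\Sigma_q)$, which forces $\Sigma_p=\Sigma_q$ and hence $\mathrm{rank}(\Sigma_q-\Sigma_p)=0$. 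So assume $w\neq 0$ henceforth.

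Next I would substitute the Gaussian score functions. Since $p,q$ are centered, $(\nabla\log q)(z)=-\Sigma_q^{-1}z$ and $(\nabla\log p)(y)=-\Sigma_p^{-1}y$, so evaluating at $y=f(z)=z+uh(w^{\top}z+b)$ turns the matching condition into
\[
Mz-v\,h(w^{\top}z+b)\in\Span\{w\}\qquad\text{for all }z\in\mathbb{R}^d,
\]
where $M:=\Sigma_q^{-1}-\Sigma_p^{-1}$ and $v:=\Sigma_p^{-1}u$. Applying the orthogonal projection $P$ onto the hyperplane $w^{\perp}$ annihilates the right–hand side, leaving the identity $PMz=(Pv)\,h(w^{\top}z+b)$ on all of $\mathbb{R}^d$.

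The key step is to read off rank information from this identity by restricting $z$ to a line in the $w$–direction, $z=z_0+tw$. Then the left–hand side $PMz_0+t\,PMw$ is affine in $t$, while the right–hand side equals $h\!\left(w^{\top}z_0+b+\|w\|_2^2\,t\right)$ times the fixed vector $Pv$; as $t$ ranges over $\mathbb{R}$ the argument of $h$ ranges over all of $\mathbb{R}$ (this is where $w\neq 0$ is used). If $Pv\neq 0$, taking the inner product of this vector identity with $Pv$ would force $h$ to be an affine function of its argument, contradicting that $h$ is a genuine non‑linearity; hence $Pv=0$. The identity then collapses to $PMz=0$ for all $z$, i.e. $PM=0$, which means every column of $M$ lies in $\ker P=\Span\{w\}$. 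Thus $\mathrm{range}(M)\subseteq\Span\{w\}$, so $\mathrm{rank}(M)\leq 1$. Finally, writing $\Sigma_q^{-1}-\Sigma_p^{-1}=\Sigma_q^{-1}(\Sigma_p-\Sigma_q)\Sigma_p^{-1}$ and using that $\Sigma_q^{-1},\Sigma_p^{-1}$ are invertible gives $\mathrm{rank}(\Sigma_q-\Sigma_p)=\mathrm{rank}(M)\leq 1$, as claimed.

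I expect the line–restriction step to be the main obstacle: one has to make the (innocuous‑looking) appeal to ``non‑linearity'' precise, namely that $h$ is not an affine function — an affine $h$ would make $f$ a linear map that genuinely can send one centered Gaussian to another whose covariance differs by a rank‑$2$ matrix — and one has to keep track of the $w\neq 0$ hypothesis so that the argument of $h$ truly sweeps all of $\mathbb{R}$ along the chosen line.
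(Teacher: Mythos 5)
Your proof is correct, and although it launches from the same starting point as the paper --- \textbf{Lemma} \ref{lemma:high d exact 1} applied to the planar flow viewed as a flow-dimension-one Sylvester flow, which gives $(\Sigma_q^{-1}-\Sigma_p^{-1})z-\Sigma_p^{-1}u\,h(w^{\top}z+b)\in\Span\{w\}$ for all $z$ --- the way you exploit this identity is genuinely different. The paper only evaluates the scalar form of the condition at the special points $z=0$ and $z=w^{\perp}$, which yields $(w^{\perp})^{\top}Mw^{\perp}=0$ for $M=\Sigma_q^{-1}-\Sigma_p^{-1}$ and all $w^{\perp}\perp w$, i.e.\ it controls only the block $PMP$ (with $P$ the projection onto the hyperplane $w^{\perp}$); it then concludes $\mathrm{rank}(M)\leq1$ via a sign/PSD case analysis and passes to $\Sigma_p-\Sigma_q$ by Sherman--Morrison. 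You instead keep the full projected vector identity $PMz=(Pv)\,h(w^{\top}z+b)$ and sweep $z$ along lines in the $w$-direction, so that either $h$ is affine or $Pv=0$ and hence $PM=0$, giving $\mathrm{range}(M)\subseteq\Span\{w\}$ directly. Your route buys two things. First, it also pins down the cross terms $w^{\top}Mw^{\perp}$, which the paper's case analysis does not explicitly control (a symmetric $M$ whose quadratic form vanishes on $w^{\perp}$ and is positive at $w$ need not be PSD unless those cross terms vanish), so your argument is the more airtight of the two. Second, it surfaces the hypothesis that $h$ is not affine, which is genuinely necessary for the statement: for $h(x)=x$ the flow is the linear map $I+uw^{\top}$, which sends $\mathcal{N}(0,\Sigma_q)$ to $\mathcal{N}(0,\Sigma_p)$ with $\Sigma_p-\Sigma_q=uw^{\top}\Sigma_q+\Sigma_q wu^{\top}+(w^{\top}\Sigma_q w)uu^{\top}$, generically of rank two. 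Your handling of the degenerate case $w=0$ and the final rank conversion via $\Sigma_q^{-1}-\Sigma_p^{-1}=\Sigma_q^{-1}(\Sigma_p-\Sigma_q)\Sigma_p^{-1}$ are both fine.
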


\begin{proof}
If there exists a planar flow $f(z)=z+uh(w^{\top}z+b)$ transforming $q$ into $p$, then according to \textbf{Lemma} \ref{lemma:high d exact 1}, we have $\forall z\in\mathbb{R}^d,\forall w^{\perp}\in \Span\{w\}^{\perp}$,
\[z^{\top}\Sigma_q^{-1}w^{\perp}=f(z)^{\top}\Sigma_p^{-1}w^{\perp}\]
or equivalently,
\[z^{\top}\left(\Sigma_q^{-1}-\Sigma_p^{-1}\right)w^{\perp}=h(w^{\top}z+b)u^{\top}\Sigma_p^{-1}w^{\perp}\]

First, by setting $z=0$, we obtain
\[h(b)u^{\top}\Sigma_p^{-1}w^{\perp}=0,\ \forall w^{\perp}\in \Span\{w\}^{\perp}\]
Then, by setting $z=w^{\perp}$ and using the above equation, we obtain
\[(w^{\perp})^{\top}\left(\Sigma_q^{-1}-\Sigma_p^{-1}\right)w^{\perp}=h(b)u^{\top}\Sigma_p^{-1}w^{\perp}=0,\ \forall w^{\perp}\in \Span\{w\}^{\perp}\]

\begin{itemize}
    \item If $w^{\top}\left(\Sigma_q^{-1}-\Sigma_p^{-1}\right)w=0$, then $\Sigma_p=\Sigma_q$.

    \item If $w^{\top}\left(\Sigma_q^{-1}-\Sigma_p^{-1}\right)w>0$, then $\Sigma_q^{-1}-\Sigma_p^{-1}$ is PSD, and can be factorized as $Q^{\top}\Lambda Q$, where $Q$ is orthogonal and $\Lambda$ is diagonal. As a result,
    \[\Lambda^{\frac12}Qw^{\perp}=0,\ \forall w^{\perp}\in \Span\{w\}^{\perp}\]
    This indicates that $rank\left(\Lambda^{\frac12}\right)=1$, or $rank\left(\Sigma_q^{-1}-\Sigma_p^{-1}\right)=1$.

    \item If $w^{\top}\left(\Sigma_q^{-1}-\Sigma_p^{-1}\right)w<0$, we do the same analysis to $\Sigma_p^{-1}-\Sigma_q^{-1}$ and obtain the same result as above.
\end{itemize}

Therefore, if $rank(\Sigma_q^{-1}-\Sigma_p^{-1})>1$, there does not exist such planar flow that transforms $q$ into $p$. Suppose $rank\left(\Sigma_q^{-1}-\Sigma_p^{-1}\right)=1$, Since covariance matrices are symmetric, we have $\Sigma_q^{-1}-\Sigma_p^{-1}=\pm\tilde{v}\tilde{v}^{\top}$. Therefore, $\Sigma_q=(\Sigma_p^{-1}\pm\tilde{v}\tilde{v}^{\top})^{-1
}$ for some $\tilde{v}\in\mathbb{R}^d$. According to the Sherman$-$Morrison formula \citep{sherman1950adjustment}, we obtain
\[\Sigma_q=\Sigma_p-\frac{\pm\Sigma_p\tilde{v}\tilde{v}^{\top}\Sigma_p}{1\pm\tilde{v}^{\top}\Sigma_p\tilde{v}}\]
By assigning $v=\frac{\Sigma_p\tilde{v}}{\sqrt{1\pm\tilde{v}^{\top}\Sigma_p\tilde{v}}}$, we obtain $\Sigma_p-\Sigma_q=\pm vv^{\top}$.
\end{proof}

\begin{corollary}[Sylvester flow $\mathcal{N}\nrightarrow\mathcal{N}$]
\label{cor:N to N more formal}
Let $p\sim\mathcal{N}(0,\Sigma_p), q\sim\mathcal{N}(0,\Sigma_q)$ be two Gaussian distributions on $\mathbb{R}^d$, and $A=\Sigma_q^{-1}-\Sigma_p^{-1}$ with eigenvalues $\lambda_1\leq\cdots\leq\lambda_d$. Suppose a flow $f$ on $\mathbb{R}^d$ composed of $n$ Sylvester flows with flow dimensions $\{m_i\}_{i=1}^n$ and smooth non-linearities satisfies $p=f\#q$. If $m=\sum_{i=1}^n m_i<d$, then we have $\lambda_{m+1}\geq0$, $\lambda_{d-m}\leq0$. As a result, $rank(A)\leq 2m$.
\end{corollary}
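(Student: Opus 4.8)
The plan is to specialize the topology-matching identity of \textbf{Theorem} \ref{thm:high d exact more} to the Gaussian setting and then read off a rank bound by spectral linear algebra. Write $W=\Span\{B_1,\cdots,B_n\}$, so that $\dim W\le\sum_{i=1}^n m_i=m<d$ and hence $\dim W^{\perp}\ge d-m\ge 1$. For a zero-mean Gaussian $\mathcal{N}(0,\Sigma)$ we have $\nabla_x\log\mathcal{N}(x;0,\Sigma)=-\Sigma^{-1}x$, so \textbf{Theorem} \ref{thm:high d exact more} applied to $q\sim\mathcal{N}(0,\Sigma_q)$ and $p=f\#q\sim\mathcal{N}(0,\Sigma_p)$ gives $\Sigma_q^{-1}z-\Sigma_p^{-1}f(z)\in W$ for every $z$. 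Dotting with any $w^{\perp}\in W^{\perp}$ (and using $J_f(z)w^{\perp}=w^{\perp}$ on $W^{\perp}$, exactly as in the single-flow proof of \textbf{Lemma} \ref{lemma:high d exact 1}) yields the key identity
\[
z^{\top}\Sigma_q^{-1}w^{\perp}=f(z)^{\top}\Sigma_p^{-1}w^{\perp},\qquad\forall z\in\mathbb{R}^d,\ \forall w^{\perp}\in W^{\perp},
\]
just as in the planar case treated in \textbf{Corollary} \ref{cor:N to N 1 formal}.

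Next I would determine how $f$ acts in the directions of $W^{\perp}$. Since each $B_i$ has columns in $W$, the computation in the proof of \textbf{Lemma} \ref{lemma:high d exact 1} shows $f_i(y+\alpha w^{\perp})=f_i(y)+\alpha w^{\perp}$ for all $y$, all $\alpha\in\mathbb{R}$, and all $w^{\perp}\in W^{\perp}$; composing the $f_i$'s gives $f(z+\alpha w^{\perp})=f(z)+\alpha w^{\perp}$, and taking $z=0,\ \alpha=1$ gives $f(w^{\perp})=f(0)+w^{\perp}$ for every $w^{\perp}\in W^{\perp}$. Substituting $z=w_1^{\perp}\in W^{\perp}$ into the key identity and using this,
\[
(w_1^{\perp})^{\top}\Sigma_q^{-1}w_2^{\perp}=(f(0)+w_1^{\perp})^{\top}\Sigma_p^{-1}w_2^{\perp}=f(0)^{\top}\Sigma_p^{-1}w_2^{\perp}+(w_1^{\perp})^{\top}\Sigma_p^{-1}w_2^{\perp}
\]
for all $w_1^{\perp},w_2^{\perp}\in W^{\perp}$; choosing $w_1^{\perp}=0$ forces $f(0)^{\top}\Sigma_p^{-1}w_2^{\perp}=0$, and therefore $(w_1^{\perp})^{\top}A\,w_2^{\perp}=0$ for all $w_1^{\perp},w_2^{\perp}\in W^{\perp}$ with $A=\Sigma_q^{-1}-\Sigma_p^{-1}$. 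Since $A$ is symmetric, this says the bilinear form of $A$ vanishes identically on the subspace $W^{\perp}$, whose dimension is at least $d-m$.

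Finally I would convert ``$A$ vanishes on a $(d-m)$-dimensional subspace'' into the eigenvalue claim via the Courant--Fischer min-max theorem: picking any $(d-m)$-dimensional $V\subseteq W^{\perp}$, both $\min_{0\neq v\in V}v^{\top}Av/\|v\|_2^2$ and $\max_{0\neq v\in V}v^{\top}Av/\|v\|_2^2$ equal $0$, so $\lambda_{m+1}=\max_{\dim V=d-m}\min_{0\neq v\in V}v^{\top}Av/\|v\|_2^2\ge0$ and $\lambda_{d-m}=\min_{\dim V=d-m}\max_{0\neq v\in V}v^{\top}Av/\|v\|_2^2\le0$. Consequently at most $m$ eigenvalues of $A$ are negative (these can only be $\lambda_1,\cdots,\lambda_m$) and at most $m$ are positive (only $\lambda_{d-m+1},\cdots,\lambda_d$), so $A$ has at most $2m$ nonzero eigenvalues, i.e. $rank(A)\le 2m$. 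I expect the only genuinely delicate point to be the second paragraph: correctly tracking the affine offset $f(0)$ so that the topology-matching condition collapses to the clean statement $A\equiv 0$ on $W^{\perp}$ (which is exactly why both $z=w_1^{\perp}$ and $z=0$ are needed); everything else is bookkeeping on top of \textbf{Theorem} \ref{thm:high d exact more} and standard spectral linear algebra.
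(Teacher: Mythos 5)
Your proof is correct and follows essentially the same route as the paper's: derive $(w^{\perp})^{\top}(\Sigma_q^{-1}-\Sigma_p^{-1})w^{\perp}=0$ on $\Span\{B_1,\cdots,B_n\}^{\perp}$ from the topology-matching condition, then apply Courant--Fischer to bound $\lambda_{m+1}$ and $\lambda_{d-m}$ and count nonzero eigenvalues. The only difference is that you spell out the reduction to the vanishing bilinear form for a general composed Sylvester flow (via $f(z+\alpha w^{\perp})=f(z)+\alpha w^{\perp}$ and the substitutions $z=0$, $z=w_1^{\perp}$), whereas the paper delegates this step to the proof of its planar-flow corollary; your version is, if anything, slightly more complete.
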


\begin{proof}
Since $m<d$, $U^*=\Span\{B_1,\cdots,B_n\}^{\perp}$ is a subspace of $\mathbb{R}^d$ with dimension at least $d-m$. According to the proof of \textbf{Corollary} \ref{cor:N to N 1 formal}, we have
\[(w^{\perp})^{\top}\left(\Sigma_q^{-1}-\Sigma_p^{-1}\right)w^{\perp}=0,\
\forall w^{\perp}\in U^*\]
Let $A=\Sigma_q^{-1}-\Sigma_p^{-1}$ with eigenvalues $\lambda_1\leq\cdots\leq\lambda_d$. According to the Courant-Fischer theorem (Chapter 5.2.2. (4), \citep{lutkepohl1996handbook}),
\[\begin{array}{rl}
    \lambda_{d-m} &
    \displaystyle \leq \lambda_{\dim{U^*}}\\
    & \displaystyle =\min_{\dim{W}=\dim{U^*}}\max_{x\in W,x\neq0} \frac{x^{\top}Ax}{x^{\top}x} \\
     & \displaystyle \leq \max_{x\in U^*,x\neq0} \frac{x^{\top}Ax}{x^{\top}x} \\
     & = 0
\end{array}\]
\[\begin{array}{rl}
    \lambda_{m+1} & \displaystyle \geq \lambda_{d+1-\dim{U^*}}\\
    & \displaystyle =\max_{\dim{W}=\dim{U^*}}\min_{x\in W,x\neq0} \frac{x^{\top}Ax}{x^{\top}x} \\
     & \displaystyle \geq \min_{x\in U^*,x\neq0} \frac{x^{\top}Ax}{x^{\top}x} \\
     & = 0
\end{array}\]
When $m+1\leq d-m$ (or $m<d/2$), we can infer that $\lambda_i=0$ for $m+1\leq i\leq d-m$. Therefore, $A$ has at least $d-2m$ zero eigenvalues. This indicates that $rank(A)\leq 2m$.
\end{proof}

\subsection{Comparison with Radial Flows}
In this section, we present the connection and difference between Sylvester and radial flows from geometric insights. First, we present the topology matching condition for a single radial flow in the following theorem.

\begin{theorem}[Topology Matching for single radial flow]
\label{thm:high d radial}
Suppose distribution $q$ is defined on $\mathbb{R}^d$, and a radial flow $f$ on $\mathbb{R}^d$ has smoothing factor $a\in\mathbb{R}^+$, scaling factor $b\in\mathbb{R}$, and center $z_0\in\mathbb{R}^d$. Let $p=f\#q$. Then $\forall z\in\mathbb{R}^d\setminus\{z_0\}$, we have
\[\left(1+\frac{b}{a+\|z-z_0\|_2}\right)\nabla_z\log p(f(z))-\nabla_z\log q(z)\]
is parallel to $z-z_0$.
\end{theorem}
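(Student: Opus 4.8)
The plan is to mimic the structure of the proof of Lemma~\ref{lemma:high d exact 1}, exploiting the fact that a radial flow, like a Sylvester flow, leaves a large family of directions essentially ``rigid'' in a controlled way. Concretely, fix $z\neq z_0$ and let $r=\|z-z_0\|_2$. The radial flow acts as $f(z)=z_0+(1+\frac{b}{a+r})(z-z_0)$, i.e.\ a pure radial rescaling about $z_0$. The key geometric observation is that if $w^{\perp}$ is any direction orthogonal to $z-z_0$, then moving infinitesimally in the $w^{\perp}$ direction does not change $\|z-z_0\|_2$ to first order, hence does not change the local scaling factor $1+\frac{b}{a+r}$ to first order. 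So $f$ restricted to the hyperplane through $z$ normal to $z-z_0$ behaves, up to first order, like a fixed linear map $z'\mapsto z_0+(1+\frac{b}{a+r})(z'-z_0)$, and in particular $J_f(z)w^{\perp}$ is parallel to $w^{\perp}$ (with proportionality constant $1+\frac{b}{a+r}$) for such $w^{\perp}$, while $\det J_f(z)$ is constant to first order along $w^{\perp}$.

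First I would make these two facts precise: (i) $\nabla_z\|z-z_0\|_2 \perp w^{\perp}$, so $\det J_f(z+\alpha w^{\perp})=\det J_f(z)+\mathcal{O}(\alpha^2)$; and (ii) $f(z+\alpha w^{\perp}) = f(z) + \alpha\,(1+\frac{b}{a+r})\,w^{\perp} + \mathcal{O}(\alpha^2)$. Both follow from a one-line Taylor expansion of $\|z+\alpha w^{\perp}-z_0\|_2 = r + \mathcal{O}(\alpha^2)$. Then I would write the change-of-variable identity \eqref{eqn:change of variable log} at $z$ and at $z+\alpha w^{\perp}$, subtract, and use (i) to cancel the Jacobian terms up to $\mathcal{O}(\alpha^2)$, obtaining
\[\log p\!\left(f(z)+\alpha\Big(1+\tfrac{b}{a+r}\Big)w^{\perp}\right)-\log p(f(z)) = \log q(z+\alpha w^{\perp})-\log q(z)+\mathcal{O}(\alpha^2).\]
Dividing by $\alpha$ and letting $\alpha\to0$ gives, for every $w^{\perp}\perp (z-z_0)$,
\[\Big(1+\tfrac{b}{a+\|z-z_0\|_2}\Big)\,\nabla_z\log p(f(z))^{\top}w^{\perp} = \nabla_z\log q(z)^{\top}w^{\perp},\]
i.e.\ the vector $(1+\frac{b}{a+\|z-z_0\|_2})\nabla_z\log p(f(z))-\nabla_z\log q(z)$ is orthogonal to the entire hyperplane $(z-z_0)^{\perp}$, hence parallel to $z-z_0$, which is exactly the claim.

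The main obstacle is getting the first-order behaviour of $J_f$ and of $\det J_f$ along $w^{\perp}$ exactly right: unlike the Sylvester case, where $f(z+\alpha w^{\perp})=f(z)+\alpha w^{\perp}$ holds \emph{exactly}, here the scaling factor genuinely depends on position, so one only has equality up to $\mathcal{O}(\alpha^2)$, and one must be careful that the $\mathcal{O}(\alpha^2)$ error in the Jacobian determinant really is quadratic (it is, because $r\mapsto 1+\frac{b}{a+r}$ is smooth and $r$ is stationary along $w^{\perp}$). A secondary point to check is that $\det J_f(z)\neq 0$ and $f$ is differentiable at $z\neq z_0$ so that \eqref{eqn:change of variable log} applies pointwise — this is fine away from $z_0$ under the standard invertibility condition on radial flows. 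Once these technicalities are handled, the argument is a direct parallel of Lemma~\ref{lemma:high d exact 1}.
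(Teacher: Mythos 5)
Your proposal is correct and follows essentially the same route as the paper's proof: both establish that along any direction $w^{\perp}\perp(z-z_0)$ the determinant $\det J_f$ is stationary to first order and $f(z+\alpha w^{\perp})-f(z)=\alpha\bigl(1+\tfrac{b}{a+\|z-z_0\|_2}\bigr)w^{\perp}+\mathcal{O}(\alpha^2)$, then subtract the change-of-variable identities, divide by $\alpha$, and let $\alpha\to0$. The only cosmetic difference is that you obtain these first-order facts from the Taylor expansion of $\|z+\alpha w^{\perp}-z_0\|_2$, whereas the paper first writes out $J_f$ and $\det J_f$ explicitly before making the same observation.
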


Though similar to the condition presented in \textbf{Lemma} \ref{lemma:high d exact 1} in the high level sketch, there are two notable differences in \textbf{Theorem} \ref{thm:high d radial}: $(i)$ there is the additional term $\left(1+\frac{b}{a+\|z-z_0\|_2}\right)$ in the condition, and $(ii)$ the complementary subspace $\mathcal{V}$ for planar flows is invariant in $z$, while for radial flows $\mathcal{V}(z)=\Span\{z-z_0\}^{\perp}$ is dependent on $z$. Next, we show that a radial flow cannot transform between Gaussian distributions with different covariance matrices, an even stronger result than \textbf{Corollary} \ref{cor:N to N 1 formal}.

\begin{corollary}[$\mathcal{N}\nrightarrow\mathcal{N}$]
\label{cor:N to N 1 radial}
Let $p\sim\mathcal{N}(0,\Sigma_p), q\sim\mathcal{N}(0,\Sigma_q)$ be two Gaussian distributions on $\mathbb{R}^d$. If there exists a radial flow $f$ on $\mathbb{R}^d$ such that $p=f\#q$, then $\Sigma_q=\Sigma_p$.
\end{corollary}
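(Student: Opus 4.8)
The plan is to feed the two centered Gaussians into the topology-matching condition of \textbf{Theorem}~\ref{thm:high d radial}, squeeze out strong restrictions on the center $z_0$ and on $\Sigma_p,\Sigma_q$, and then close the argument using the change-of-variable formula itself. If $b=0$ the radial flow is the identity, so $p=q$ and we are done; assume from now on $b\neq 0$ (and recall $d>1$ throughout this section).

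First I would substitute the Gaussian log-gradients $\nabla_z\log q(z)=-\Sigma_q^{-1}z$ and $\nabla_z\log p(f(z))=-\Sigma_p^{-1}f(z)$ into the condition of \textbf{Theorem}~\ref{thm:high d radial}. Writing $r=\|z-z_0\|_2$ and $\gamma(r)=1+\frac{b}{a+r}$ (so that $f(z)=z+(\gamma(r)-1)(z-z_0)$) and putting $z=z_0+w$ with $\|w\|_2=r$, the condition collapses to
\[\big(\Sigma_q^{-1}-\gamma(r)\Sigma_p^{-1}\big)z_0+\big(\Sigma_q^{-1}-\gamma(r)^2\Sigma_p^{-1}\big)w\ \parallel\ w\qquad\big(\|w\|_2=r,\ r>0\big).\]
Using this for $w$ and for $-w$ and adding shows that $(\Sigma_q^{-1}-\gamma(r)\Sigma_p^{-1})z_0$ is parallel to every $w$ on the sphere of radius $r$ in $\mathbb{R}^d$; since $d\ge 2$, such a vector must vanish. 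What is left, $(\Sigma_q^{-1}-\gamma(r)^2\Sigma_p^{-1})w\parallel w$ for all such $w$ (hence for all $w\neq 0$ by scaling), forces the symmetric matrix $\Sigma_q^{-1}-\gamma(r)^2\Sigma_p^{-1}$ to be a scalar multiple $c(r)I$ of the identity. Because $b\neq 0$, $\gamma$ is non-constant, so I may choose distinct large $r_1,r_2$, for which $\gamma(r_1),\gamma(r_2)$ are distinct, positive, and have distinct squares: subtracting the two instances of $(\Sigma_q^{-1}-\gamma(r)\Sigma_p^{-1})z_0=0$ gives $\Sigma_p^{-1}z_0=0$, i.e.\ $z_0=0$, and subtracting the two instances of $\Sigma_q^{-1}-\gamma(r)^2\Sigma_p^{-1}=c(r)I$ gives $\Sigma_p^{-1}\propto I$, hence also $\Sigma_q^{-1}\propto I$. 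At this point $z_0=0$ and $\Sigma_p=s_pI$, $\Sigma_q=s_qI$ for some $s_p,s_q>0$.

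It remains to exclude $s_p\neq s_q$, and here the topology-matching condition is of no further help — it becomes vacuous once both covariances are scalar — so I would return to the change-of-variable formula. With $z_0=0$ we have $f(z)=\gamma(\|z\|_2)z$; let $\rho(r)=\gamma(r)r$, which is a strictly increasing bijection of $[0,\infty)$ since $f$ is an invertible flow. As $f$ commutes with rotations, $f\#q$ is rotationally symmetric, as is $p$, so the two agree iff they assign equal mass to every centered ball; and the mass $f\#q$ assigns to $\{\|y\|_2\le R\}$ equals the mass $q$ assigns to $\{\|z\|_2\le\rho^{-1}(R)\}$ (the $f$-preimage of the former being the latter). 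Putting $R=\rho(r)$ and using that the radial c.d.f.\ of $\mathcal{N}(0,sI)$ is $G(\,\cdot\,/\sqrt{s})$ for a fixed strictly increasing $G$, we get $r/\sqrt{s_q}=\rho(r)/\sqrt{s_p}$ for all $r>0$, i.e.\ $\gamma(r)=\rho(r)/r=\sqrt{s_p/s_q}$ is constant in $r$ — impossible since $\gamma(r)=1+b/(a+r)$ with $b\neq 0$. (Equivalently, take logs in \eqref{eqn:change of variable log}, use $\rho(r)^2=r^2+2br+\mathcal{O}(1)$, $\gamma(r)\to 1$, $\rho'(r)\to 1$ as $r\to\infty$, and match the $r^2$- and $r$-coefficients.) Hence $b=0$ after all, $f$ is the identity, $p=q$, and in particular $\Sigma_q=\Sigma_p$.

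The step I expect to be the real obstacle is this last one. The topology-matching machinery cannot see the scale of a spherical Gaussian, so one is forced back to the raw change-of-variable identity (equivalently, to the pushforward of the radial marginal) to kill the remaining one-parameter freedom, and making the rotational-symmetry reduction and the limiting argument precise is where the substance lies. The preceding algebra — inserting the Gaussian gradients and running the $\pm w$ trick together with the ``every vector an eigenvector'' fact — is routine but has to be carried out carefully to reach $z_0=0$ and $\Sigma_p,\Sigma_q\propto I$.
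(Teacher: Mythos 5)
Your proposal is correct and follows essentially the same route as the paper: substituting the Gaussian log-gradients into the radial topology-matching condition, running the $\pm w$ trick to force $z_0=0$ and $\Sigma_p,\Sigma_q\propto I$, and then returning to the global change-of-variable identity to rule out $b\neq 0$. Your final step is phrased via matching radial c.d.f.'s rather than the paper's asymptotic expansion of the log change-of-variable equation as $\|z\|_2\to\infty$, but as you yourself note these are equivalent, so this is only a cosmetic difference.
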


\subsection*{Proof of Theorem \ref{thm:high d radial}}
\begin{proof}
By standard algebra, it can be shown that the Jacobian of $f$ is given by
\[J_f(z)=\left(1+\frac{b}{a\|z-z_0\|_2}\right)I-\frac{b (z-z_0)(z-z_0)^{\top}}{(a+\|z-z_0\|_2)^2\|z-z_0\|_2}\]
Therefore, its determinant is
\[\det J_f(z)=\left(1+\frac{b}{a+\|z-z_0\|_2}\right)^d-\left(1+\frac{b}{a+\|z-z_0\|_2}\right)^{d-1}\frac{b \|z-z_0\|_2}{(a+\|z-z_0\|_2)^2}\]
Notice that if $\|z-z_0\|_2$ does not change then $\det J_f(z)$ remains the same. Therefore, for any $z\neq z_0$, any direction $w^{\perp}\in \Span\{z-z_0\}^{\perp}$ and small positive real number $r$, we have
\[\det J_f(z+r w^{\perp})-\det J_f(z)=\mathcal{O}(r^2)\]
\[f(z+r w^{\perp})-f(z)=\left(1+\frac{b}{a+\|z-z_0\|_2}\right)r w^{\perp}+\mathcal{O}(r^2)\]
By the change-of-variable formula in \eqref{eqn:change of variable}
\[p(f(z))=\frac{q(z)}{|\det J_f(z)|},\ p(f(z+r w^{\perp}))=\frac{q(z+r w^{\perp})}{|\det J_f(z+r w^{\perp})|}\]
For $r$ small, $q(z)$ is continuous and positive in $B_r(z)$, so $p(f(z+rw^{\perp}))/q(z)=\mathcal{O}(1)$. Therefore,
\[\frac{q(z+rw^{\perp})}{q(z)}=\frac{p(f(z+rw^{\perp}))}{p(f(z))}+\mathcal{O}(r^2)\]
By taking the logarithm, multiplying $1/r$ , and letting $r\rightarrow0$ on both sides, we have that
\[\left(1+\frac{b}{a+\|z-z_0\|_2}\right)(\nabla_z\log p(f(z)))^{\top}w^{\perp}=(\nabla_z\log q(z))^{\top}w^{\perp}\]
Therefore,
\[\left(1+\frac{b}{a+\|z-z_0\|_2}\right)\nabla_z\log p(f(z))-\nabla_z\log q(z)\]
is parallel to $z-z_0$.
\end{proof}

\subsection*{Proof of Corollary \ref{cor:N to N 1 radial}}
\begin{proof}
Let the radial flow be $f(z)=z+\frac{b}{a+\|z-z_0\|_2}(z-z_0)$ with $b\neq0$. For conciseness, we write $v_x=1+\frac{b}{a+\|x\|_2}$ for any $x\in\mathbb{R}^d$. Now, we assign $x=z-z_0$ and solve the topology matching condition in \textbf{Theorem} \ref{thm:high d radial}. By standard algebra, we obtain for any $x\in\mathbb{R}^d\setminus\{0\}$, if $x^{\top}w^{\perp}=0$, then
\[\left(v_x(z_0+v_x x)^{\top}\Sigma_p^{-1}-(x+z_0)^{\top}\Sigma_q^{-1}\right)w^{\perp}=0\]
This indicates that
\[(v_x^2\Sigma_p^{-1}-\Sigma_q^{-1})x+(v_x\Sigma_p^{-1}-\Sigma_q^{-1})z_0\]
is parallel to $x$ (or equal to 0). By applying the same analysis to $-x$, we have
\[(v_x^2\Sigma_p^{-1}-\Sigma_q^{-1})(-x)+(v_x\Sigma_p^{-1}-\Sigma_q^{-1})z_0\]
is parallel to $x$ (or equal to 0). Adding these two vectors, we have $(v_x\Sigma_p^{-1}-\Sigma_q^{-1})z_0$ is parallel to $x$ (or equal to 0) for any $x\in\mathbb{R}^d\setminus\{0\}$. As a result, $z_0$ is the origin, and $(v_x^2\Sigma_p^{-1}-\Sigma_q^{-1})x$ is parallel to $x$. The only possibility to this claim is that $v_x^2\Sigma_p^{-1}-\Sigma_q^{-1}$ is a multiple of the identity matrix for any $x\in\mathbb{R}^d\setminus\{0\}$. Since $v_x$ varies as $x$ changes, both $\Sigma_p$ and $\Sigma_q$ are multiple of the identity matrix: $\Sigma_p=\kappa_p I,\ \Sigma_q=\kappa_q I$.

Next, we apply the results above to the change-of-variable equation in \eqref{eqn:change of variable log} of radial flow. By standard algebra, we have for any $z\in\mathbb{R}^d\setminus\{0\}$,
\[\frac12\log\kappa_p+\frac{v_z^2z^{\top}z}{2\kappa_p}=\frac12\log\kappa_q+\frac{z^{\top}z}{2\kappa_q}+\log|\det J_f(z)|\]
Notice that as $\|z\|_2\rightarrow\infty$, the left-hand-side is equal to $\frac{\|z\|_2^2}{2\kappa_p}+\frac{b}{\kappa_p}\|z\|_2+o(\|z\|_2)$, while the right-hand-side is equal to $\frac{\|z\|_2^2}{2\kappa_q}+\mathcal{O}(1)$. Then, $b$ must be $0$, which means that $f$ is the identity map, and $p,q$ are identical.
\end{proof}

\subsection{Proof of Lemma \ref{lemma: upper bound of L}}
\begin{proof}
According to \eqref{eqn:change of variable}, for any distribution $q'$ on $\mathbb{R}^d$, we have $(f\#q')(f(z))=q'(z)/|\det J_{f}(z)|$. By letting $y=f(z), z=f^{-1}(y)$, we have
\[
\begin{array}{rl}
    \displaystyle \int_{\mathbb{R}^d}|p(y)-(f\#q')(y)|dy
    & \displaystyle =\int_{\mathbb{R}^d}\left|p(y)-\frac{q'(z)}{|\det J_{f}(z)|}\right|dy\\
    & \displaystyle =\int_{\mathbb{R}^d}\left|p(f(z))-\frac{q'(z)}{|\det J_{f}(z)|}\right| |\det J_{f}(z)|dz\\
    & \displaystyle =\int_{\mathbb{R}^d}\left||\det J_{f}(z)|p(f(z))-q'(z)\right|dz
\end{array}\]
By the triangular inequality, we have
\[\int_{\mathbb{R}^d}|p(z)-q'(z)|dz-\int_{\mathbb{R}^d}\left||\det J_{f}(z)|p(f(z))-q'(z)\right|dz
\leq \int_{\mathbb{R}^d}\left||\det J_{f}(z)|p(f(z))-p(z)\right|dz\]
By taking the supremum over $q'$, we finish the proof.
\end{proof}

\subsection{Proof of Theorem \ref{thm: ell_1 local planar}}
\begin{lemma}
\label{lemma: ell_1 1}
Let $f(z)=z+uh(w^{\top}z+b)$ be a $c_h$-local planar flow. If $p(z)\propto\exp(-\|z\|_2^{\tau})$, then
\[\int_{\mathbb{R}^d}\frac{\|w\|_2p(z)}{1+|w^{\top}z+b|} dz=\mathcal{O}\left((\log d)^{\frac{1}{\tau}}d^{-\left(\frac{1}{\tau}-\frac12\right)}\right)\]
\end{lemma}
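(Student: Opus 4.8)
The plan is to reduce the integral, by a rotation and a rescaling, to a one-dimensional marginal computation, and then split the domain according to the distance to the flow hyperplane $\{w^{\top}z+b=0\}$.

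First I would normalize the geometry. Write $s=\|w\|_2\in(0,1]$ (if $w=0$ the integrand vanishes). Since $p=p_1$ is spherically symmetric, an orthogonal change of variables aligning $w$ with the first axis turns the integral into $C_d\int_{\mathbb{R}^d}\frac{s\,e^{-\|z\|_2^{\tau}}}{1+|s z_1+b|}\,dz$, where $C_d=1/Z_d$ and $Z_k:=\int_{\mathbb{R}^k}e^{-\|v\|_2^{\tau}}\,dv$. Substituting $t=s z_1$ absorbs the factor $s$ into $dt$, and because $s\le 1$ we have $(t/s)^2+\|z'\|_2^2\ge t^2+\|z'\|_2^2$, so $e^{-((t/s)^2+\|z'\|_2^2)^{\tau/2}}\le e^{-\|v\|_2^{\tau}}$ with $v=(t,z')\in\mathbb{R}^d$. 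This yields the clean upper bound $C_d\int_{\mathbb{R}^d}\frac{e^{-\|v\|_2^{\tau}}}{1+|v_1+b|}\,dv$, in which $u$, $h$ and $s$ have all disappeared.

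Next I would split $\mathbb{R}^d$ at a threshold $T>0$. On $\{|v_1+b|>T\}$ the denominator exceeds $1+T$, so that part contributes at most $C_d\cdot\frac{1}{1+T}\int_{\mathbb{R}^d}e^{-\|v\|_2^{\tau}}\,dv=\frac{1}{1+T}$. On the slab $\{|v_1+b|\le T\}$ I integrate out $v'=(v_2,\dots,v_d)$ first: the inner integral equals $\psi(v_1):=\int_{\mathbb{R}^{d-1}}e^{-(v_1^2+\|v'\|_2^2)^{\tau/2}}\,dv'$, which is maximized at $v_1=0$ because the integrand decreases in $v_1^2$. Bounding $\psi(v_1)\le\psi(0)=Z_{d-1}$ and substituting $u=v_1+b$ leaves $\psi(0)\int_{-T}^{T}\frac{du}{1+|u|}=2\psi(0)\ln(1+T)$ — the logarithm, which is the source of the eventual $\log d$ factor, comes exactly from this elementary integral. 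Altogether the integral is at most $\frac{1}{1+T}+2\,\frac{Z_{d-1}}{Z_d}\,\ln(1+T)$.

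Finally I would estimate $Z_{d-1}/Z_d$ and optimize $T$. From $Z_k=\frac{2\pi^{k/2}}{\tau\,\Gamma(k/2)}\,\Gamma(k/\tau)$ one gets $\frac{Z_{d-1}}{Z_d}=\frac{\Gamma(d/2)\,\Gamma((d-1)/\tau)}{\sqrt{\pi}\,\Gamma((d-1)/2)\,\Gamma(d/\tau)}$; Wendel's inequality gives $\Gamma(d/2)/\Gamma((d-1)/2)=\Theta(\sqrt{d})$ and Stirling gives $\Gamma((d-1)/\tau)/\Gamma(d/\tau)=\Theta(d^{-1/\tau})$ for fixed $\tau$, so $\frac{Z_{d-1}}{Z_d}=\mathcal{O}(d^{1/2-1/\tau})$ (only the upper bound is needed). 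Choosing $T=d^{1/\tau-1/2}$ balances the two terms and gives the bound $\mathcal{O}(d^{1/2-1/\tau}\log d)$; since $\tau\in(0,1)$ means $1/\tau>1$, for $d\ge 3$ we have $\log d\le(\log d)^{1/\tau}$, so this is $\mathcal{O}\big((\log d)^{1/\tau}d^{-(1/\tau-1/2)}\big)$, as claimed. The one step that is not essentially a single line is the $\Gamma$-ratio estimate $Z_{d-1}/Z_d=\mathcal{O}(d^{1/2-1/\tau})$ — this is where the exponent $1/2-1/\tau$ is born; the rest is a rotation, a monotone substitution, and one dyadic split. (Incidentally this route yields the slightly sharper logarithmic factor $\log d$ rather than $(\log d)^{1/\tau}$.)
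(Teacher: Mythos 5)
Your proof is correct, and it takes a genuinely different route from the paper's. The paper first argues (via a derivative-in-$b$ computation and symmetry) that the worst case is $\|w\|_2=1$, $b=0$, then passes to polar coordinates and splits \emph{radially} at $r=d$, bounding the angular integral $\int_0^{\pi}\frac{\sin^{d-2}\theta}{1+|r\cos\theta|}\,d\theta$ by $\Theta(d^{-1/2})$ for $r\le d$ and by $2\log(1+r)/r$ for $r>d$, and finally invokes incomplete-Gamma estimates. You instead absorb $\|w\|_2$ by the rescaling $t=\|w\|_2 z_1$ (using monotonicity of $e^{-\|\cdot\|^{\tau}}$ to keep an upper bound), handle arbitrary $b$ uniformly rather than arguing $b=0$ is extremal, and split by a \emph{slab} $\{|v_1+b|\le T\}$ around the flow hyperplane, reducing everything to the one ratio $Z_{d-1}/Z_d=\Theta(d^{1/2-1/\tau})$ plus the elementary $\int_{-T}^{T}\frac{du}{1+|u|}=2\log(1+T)$. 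Your route is more elementary (no incomplete Gamma function, no optimization over $b$), makes the geometric origin of each factor transparent ($\sqrt{d}\cdot d^{-1/\tau}$ from the marginal thickness of the slab, $\log$ from the Cauchy-type kernel), and in fact yields the slightly sharper factor $\log d$ in place of $(\log d)^{1/\tau}$, which of course still implies the stated bound since $1/\tau>1$. The one point worth writing out carefully in a final version is the Gamma-ratio asymptotics $\Gamma((d-1)/\tau)/\Gamma(d/\tau)=\Theta(d^{-1/\tau})$ for fixed $\tau\in(0,1)$, but this is standard (Wendel/Stirling) and you flag it appropriately.
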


\begin{lemma}
\label{lemma: ell_1 2}
Let $f(z)=z+uh(w^{\top}z+b)$ be a $c_h$-local planar flow. If
\[p(z)\propto\left\{
    \begin{array}{cc}
        \exp(-d) & \|z\|_2\leq d^{\frac{1}{\tau}} \\
        \exp(-\|z\|_2^{\tau}) & \|z\|_2 > d^{\frac{1}{\tau}}
    \end{array}\right.\]
then
\[\int_{\mathbb{R}^d}\left(\Delta_{c_h}p\vert_z\right) dz=\mathcal{O}\left(d^{-\left(\frac{1}{\tau}-1\right)}\right)\]
\end{lemma}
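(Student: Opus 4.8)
The plan is to exploit the very special structure of $p$. Write $p = g/Z$ with $g(z) = e^{-d}$ on the ball $\mathcal{B} := \{z : \|z\|_2 \le R\}$, $R := d^{1/\tau}$, and $g(z) = e^{-\|z\|_2^\tau}$ outside $\mathcal{B}$. Since $R^\tau = d$, the function $p$ attains its maximum value $\tfrac1Z e^{-d}$ exactly on $\mathcal{B}$ and is smooth and radially decreasing outside it. For a $c_h$-local planar flow $f(z) = z + u h(w^\top z + b)$ we have $\|f(z) - z\|_2 = \|u\|_2\,|h(w^\top z + b)| \le c_h$, so $|p(f(z)) - p(z)| \le \sup_{\|z'-z\|_2 \le c_h} |p(z')-p(z)| = \Delta_{c_h}p|_z$, and it suffices to bound $\int_{\mathbb{R}^d}\Delta_{c_h}p|_z\,dz$. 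The only fact I will need about the normalizing constant is the cheap lower bound $Z \ge e^{-d} V_d R^d$ coming just from the contribution of $\mathcal{B}$, where $V_d$ is the volume of the unit ball; in particular no upper bound on $Z$ is required anywhere.

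Next I would partition $\mathbb{R}^d$ into three shells by $\|z\|_2$. (i) On $\{\|z\|_2 \le R - c_h\}$ the whole $c_h$-ball around $z$ lies inside $\mathcal{B}$, where $p$ is constant, so $\Delta_{c_h}p|_z = 0$ and this region contributes nothing. (ii) On the boundary shell $\{R - c_h < \|z\|_2 < R + c_h\}$ I use the crude bound $\Delta_{c_h}p|_z \le \tfrac1Z e^{-d}$ (valid since $0 \le p \le \tfrac1Z e^{-d}$ everywhere) together with the volume estimate $V_d[(R+c_h)^d - (R-c_h)^d] \le 2 c_h\, d\,(R+c_h)^{d-1} V_d$. (iii) On $\{\|z\|_2 \ge R + c_h\}$, every point of the segment from $z$ to a neighbour $z'$ with $\|z'-z\|_2 \le c_h$ has norm $\ge R$, so $p$ is $C^1$ there and the mean value theorem gives $\Delta_{c_h}p|_z \le c_h \sup_{\|z'-z\|_2 \le c_h} \|\nabla p(z')\|_2$, where $\nabla p(z') = -\tfrac{\tau}{Z}\|z'\|_2^{\tau-2} z'\, e^{-\|z'\|_2^\tau}$.

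The two non-trivial estimates then follow from elementary inequalities, all hinging on $\tau < 1$. For shell (iii): because $\tau - 1 < 0$ and $\|z'\|_2 \ge R$, we have $\|z'\|_2^{\tau-1} \le R^{\tau-1} = d^{1-1/\tau}$; using $\|z'\|_2 \ge \|z\|_2 - c_h$ in the exponential and passing to spherical coordinates with the shift $s = r - c_h$, the integral becomes $\tfrac{c_h\tau d^{1-1/\tau}}{Z}\int_R^\infty (s+c_h)^{d-1} e^{-s^\tau}\,ds \cdot S_{d-1}$, and the extra factor $(1 + c_h/R)^{d-1} \le \exp(c_h d^{1-1/\tau}) = 1 + o(1)$ is harmless precisely because $d^{1-1/\tau} \to 0$; hence this is $O\!\big(\tfrac{c_h\tau d^{1-1/\tau}}{Z}\int_{\|z\|_2\ge R} e^{-\|z\|_2^\tau}dz\big) = O(c_h\tau d^{1-1/\tau})$ since the last integral is at most $Z$. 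For shell (ii): dividing the volume estimate by $Z \ge e^{-d} V_d R^d$ and again using $(1+c_h/R)^{d-1} = O(1)$ gives a contribution of order $c_h d / R = c_h d^{1-1/\tau}$. Summing (i)--(iii) yields $\int_{\mathbb{R}^d}\Delta_{c_h}p|_z\,dz = O(d^{1-1/\tau}) = O(d^{-(1/\tau-1)})$.

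I expect the main obstacle to be bookkeeping rather than a single hard inequality: carefully handling the $c_h$-shifts (controlling $(1+c_h/R)^{d-1}$, keeping the segments in the smooth region, and checking that the ball-only lower bound $Z \ge e^{-d}V_d R^d$ genuinely suffices wherever $Z$ appears in a denominator), and making the dependence on the fixed constants $c_h,\tau$ explicit. Conceptually everything rests on the single point that $R = d^{1/\tau}$ is so large that the interior of $\mathcal{B}$ is invisible to a $c_h$-local flow and the gradient of $p$ outside $\mathcal{B}$ is only $O(d^{1-1/\tau})$.
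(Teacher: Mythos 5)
Your proof is correct and takes essentially the same route as the paper's: the identical three-shell split around $\|z\|_2=d^{1/\tau}$, the vanishing of $\Delta_{c_h}p$ on the interior of the ball, and the gradient bound $\|\nabla p\|_2\lesssim \tau d^{1-1/\tau}\,p$ outside it. The only differences are presentational -- the paper establishes the pointwise bound $\Delta_{c_h}p|_z/p(z)=\mathcal{O}(d^{1-1/\tau})$ in radial coordinates and then integrates, whereas you bound each shell's integral directly against $Z$ (requiring the harmless $(1+c_h/R)^{d-1}=\mathcal{O}(1)$ bookkeeping) -- and your sup-over-the-$c_h$-ball treatment of the gradient is, if anything, slightly more careful than the paper's one-sided bound $\Delta_{c_h}\tilde p|_r\le c_h|\tilde p'(r)|$.
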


\begin{proof}
Let $f(z)=z+uh(w^{\top}z+b)$ be a $c_h$-local planar flow. According to \textbf{Lemma} \ref{lemma: upper bound of L} and the fact that $|u^{\top}w h'(w^{\top}z+b))|<1$, we have
\[\begin{array}{rl}
\mathcal{L}(p,f)\leq\hat{\mathcal{L}}(p,f) & \displaystyle = \int_{\mathbb{R}^d}\left||\det J_f(z)|p(f(z))-p(z)\right|dz \\
& \displaystyle = \int_{\mathbb{R}^d}\left|(1+u^{\top}w h'(w^{\top}z+b))p(z+uh(w^{\top}z+b))-p(z)\right|dz
\end{array}\]
Now we define
\[\Delta_s p\vert_z=\sup_{\|\delta\|\leq s}|p(z+\delta)-p(z)|\]
Then, since $\forall x\in\mathbb{R}, |h(x)|\leq c_h, |h'(x)|\leq c_h/(1+|x|)$, we have
\[\begin{array}{rl}
    \hat{\mathcal{L}}(p,f)
    & \displaystyle = \int_{\mathbb{R}^d}\left|(1+u^{\top}w h'(w^{\top}z+b))(p(z+uh(w^{\top}z+b))-p(z))+u^{\top}w h'(w^{\top}z+b)p(z)\right|dz \\
    (|u^{\top}wh'|\leq c_h)
    &\displaystyle \leq \int_{\mathbb{R}^d}\left(|u^{\top}w h'(w^{\top}z+b)|p(z)+(1+c_h)|p(z+uh(w^{\top}z+b))-p(z)|\right) dz \\
    (\text{bounds on }h\text{ and }h')
    &\displaystyle \leq  \int_{\mathbb{R}^d}\left(\frac{c_h}{1+|w^{\top}z+b|}|u^{\top}w|p(z)+(1+c_h)\Delta_{\|u\|_2c_h}p\vert_z\right) dz\\
    (\|u\|_2\leq1)
    &\displaystyle \leq c_h\int_{\mathbb{R}^d}\frac{\|w\|_2p(z)}{1+|w^{\top}z+b|}dz+(1+c_h)\int_{\mathbb{R}^d}(\Delta_{c_h}p\vert_z) dz
\end{array}\]
Finally, using \textbf{Lemma} \ref{lemma: ell_1 1} and \textbf{Lemma} \ref{lemma: ell_1 2} and setting $\epsilon=\frac12\|p-q\|_1=\Theta(1)$, we finish the proof.
\end{proof}

\subsection*{Proof of Lemma \ref{lemma: ell_1 1}}
\begin{proof}
For conciseness, we denote $p(z)$ by $p(r)$ for any $z$ such that $\|z\|_2=r\geq0$. That is, $p(r)\propto \exp(-r^{\tau})$. The outline of the proof is: $(i)$ simplify the expression by showing $b=0$, $(ii)$ rewrite the expression in polar coordination system, and $(iii)$ apply bounds from Gamma functions to obtain the result.

\textbf{Step} 1: \textit{simplification by solving $w$ and $b$.} First of all, we have $\|w\|_2/(1+|w^{\top}z+b|)=1/(1/\|w\|_2+|\tilde{w}^{\top}z+b|)$, where $\tilde{w}=w/\|w\|_2$. Thus, to make the integration largest, $\|w\|_2$ should equal to 1. Since $p(z)$ is symmetric, any direction of $w$ yields the same result. Therefore, we set $w=e_d=(0,\cdots,0,1)^{\top}\in\mathbb{R}^d$.

Next, we show $b=0$. To maximize the integration, we have
\[\frac{\partial}{\partial b}\int_{\mathbb{R}^d}\frac{p(z)}{1+|z_d+b|} dz=0\]
The partial derivative is equal to
\[\begin{array}{rl}
\displaystyle \int_{\mathbb{R}^d}-\frac{sgn(z_d+b)}{(1+|z_d+b|)^2}p(z) dz
& = \displaystyle \int_{\mathbb{R}^{d-1}}dz_{1:d-1}
    \left(\int_{-\infty}^{-b}\frac{p(z)}{(1+|z_d+b|)^2} dz_d
    -\int_{-b}^{\infty}\frac{p(z)}{(1+|z_d+b|)^2} dz_d\right)\\
& = \displaystyle \int_{\mathbb{R}^{d-1}}dz_{1:d-1}
    \left(\int_{-\infty}^{0}\frac{p(z-be_d)}{(1+|z_d|)^2} dz_d
    -\int_{0}^{\infty}\frac{p(z-be_d)}{(1+|z_d|)^2} dz_d\right)\\
& = \displaystyle \int_{\mathbb{R}^{d-1}}dz_{1:d-1}
    \int_{0}^{\infty}\frac{p(z+be_d)-p(z-be_d)}{(1+z_d)^2} dz_d\\
\end{array}\]
If $b>0$, then $\|z+be_d\|_2>\|z-be_d\|_2$, so $p(z+be_d)-p(z-be_d)<0$. Similarly, if $b<0$, then $p(z+be_d)-p(z-be_d)>0$. Therefore, we conclude $b=0$, and our objective becomes
\[\int_{\mathbb{R}^d}\frac{p(z)}{1+|z_d|} dz\]

\textbf{Step} 2: \textit{rewriting in polar coordinates.} Let $r=\|z\|_2$, then $z$ can be expressed by polar coordinates in the following form:
\[\left\{
\begin{array}{cl}
    z_1 & =r\sin\theta_1\sin\theta_2\cdots\sin\theta_{d-1} \\
    z_2 & =r\cos\theta_1\sin\theta_2\cdots\sin\theta_{d-1} \\
    z_3 & =r\cos\theta_2\sin\theta_3\cdots\sin\theta_{d-1} \\
    \vdots & \vdots \\
    z_{d-1} & =r\cos\theta_{d-1}\sin\theta_{d-1} \\
    z_d & =r\cos\theta_{d-1} \\
\end{array}
\right. ,\ \theta_1\in[0,2\pi),\theta_i\in[0,\pi),2\leq i\leq d-1\]
The determinant of the Jacobian matrix of this transformation is given below \citep{angel2017ndpolar}:
\[\det J_d=(-1)^{d-1}r^{d-1}\prod_{k=2}^{d-1}\sin^{k-1}\theta_k\]
Therefore, we have
\[\begin{array}{rl}
\displaystyle \int_{\mathbb{R}^d}\frac{p(z)}{1+|z_d|} dz
& \displaystyle = \int_0^{\infty}dr\int_0^{2\pi}d\theta_1\int_0^{\pi}d\theta_2\cdots \int_0^{\pi}d\theta_{d-1}
\left(\frac{p(r)}{1+|r\cos\theta_{d-1}|}r^{d-1}\prod_{k=2}^{d-1}\sin^{k-1}\theta_k\right)\\
& \displaystyle = \left(2\pi\prod_{k=2}^{d-2}\int_0^{\pi} \sin^{k-1}\theta_k d\theta_k \right) \times \int_0^{\infty}dr\int_0^{\pi}d\theta_{d-1}\left(\frac{p(r)}{1+|r\cos\theta_{d-1}|}r^{d-1}\sin^{d-2}\theta_{d-1}\right)
\end{array}\]

\textbf{Step} 3: \textit{further simplification via normalization.}
Since the integration of $p(z)$ over $\mathbb{R}^d$ is $1$, we can write
\[\begin{array}{rl}
\displaystyle 1=\int_{\mathbb{R}^d}p(z)dz
& \displaystyle = \int_0^{\infty}dr\int_0^{2\pi}d\theta_1\int_0^{\pi}d\theta_2\cdots \int_0^{\pi}d\theta_{d-1}
\left(p(r)r^{d-1}\prod_{k=2}^{d-1}\sin^{k-1}\theta_k\right)\\
& \displaystyle = \left(2\pi\prod_{k=2}^{d-1}\int_0^{\pi} \sin^{k-1}\theta_k d\theta_k \right) \times \int_0^{\infty} p(r)r^{d-1} dr
\end{array}\]
Furthermore, notice that
\[\int_0^{\pi}\sin^{d-2}\theta d\theta=\frac{\sqrt{\pi } \Gamma \left(\frac{d-1}{2}\right)}{\Gamma \left(\frac{d}{2}\right)}\]
According to Stirling's formula for Gamma functions, we have
\[\log\Gamma\left(\frac{d-1}{2}\right)-\log\Gamma\left(\frac d2\right)=\Theta\left(\frac{d-1}{2}\log\frac{d-1}{2}-\frac d2\log \frac d2\right)=\Theta\left(-\frac12\log d\right)\]
We are then able to simplify the integration as
\[\int_{\mathbb{R}^d}\frac{p(z)}{1+|z_d|} dz=\frac{\int_0^{\infty}dr\int_0^{\pi}d\theta_{d-1}\left(\frac{p(r)}{1+|r\cos\theta_{d-1}|}r^{d-1}\sin^{d-2}\theta_{d-1}\right)}{\int_0^{\infty} p(r)r^{d-1} dr}\cdot\Theta(\sqrt{d})\]

\textbf{Step} 4: \textit{applying inequalities for two cases.}
\begin{itemize}
    \item When $r\leq d$, we use
        \[\int_0^{\pi}\frac{\sin^{d-2}\theta}{1+|r\cos\theta|} d\theta
        \leq \int_0^{\pi}\sin^{d-2}\theta d\theta=\Theta(d^{-\frac12})\]
    \item When $r>d$, we use
        \[\int_0^{\pi}\frac{\sin^{d-2}\theta}{1+|r\cos\theta|} d\theta\leq \int_0^{\pi}\frac{\sin\theta}{1+|r\cos\theta|} d\theta=\frac{2\log(1+r)}{r}\]
\end{itemize}
Then, we have
\begin{equation}
\label{eqn:lemma: ell_1 1 step 4}
\int_{\mathbb{R}^d}\frac{p(z)}{1+|z_d|} dz=
\frac{\int_0^d p(r)r^{d-1} dr}{\int_0^{\infty} p(r)r^{d-1} dr}
+\Theta(\sqrt{d})\cdot\frac{\int_d^{\infty} p(r)r^{d-2}\log(1+r) dr}{\int_0^{\infty} p(r)r^{d-1} dr}
\end{equation}

\textbf{Step} 5: \textit{final computation.} By applying $p(r)\propto\exp(-r^{\tau})$, we are able to prove the following bounds.
\begin{itemize}
    \item For the first term in \eqref{eqn:lemma: ell_1 1 step 4}, let the incomplete Gamma function be
        \[\gamma(a,x)=\int_0^x t^{a-1}e^{-t} dt\]
        The incomplete Gamma function can be upper bounded below \citep{neuman2013inequalities}:
        \[\gamma(a,x)\leq\frac{x^a(1+ae^{-x})}{a^2}\]
        Therefore, we could bound the first term as
        \[\begin{array}{rl}
            \displaystyle \frac{\int_0^d p(r)r^{d-1} dr}{\int_0^{\infty} p(r)r^{d-1} dr}
            & \displaystyle = \frac{\int_0^d e^{-r^{\tau}}r^{d-1} dr}{\int_0^{\infty} e^{-r^{\tau}}r^{d-1} dr} \\
            (\mbox{let }s=r^{\tau}) & \displaystyle = \frac{\frac{1}{\tau}\int_0^{d^{\tau}} e^{-s}s^{\frac{d}{\tau}-1} ds}{\frac{1}{\tau}\int_0^{\infty} e^{-s}s^{\frac{d}{\tau}-1} ds} \\
            & \displaystyle = \gamma\left(\frac{d}{\tau},d^{\tau}\right) \left/ \Gamma\left(\frac{d}{\tau}\right)\right. \\
            & \displaystyle =\mathcal{O}\left(\frac{\tau^2d^{d-2}+\tau d^{d-1}e^{-d^{\tau}}}{\sqrt{d}\left(\frac{d-\tau}{\tau e}\right)^{\frac{d}{\tau}-1}}\right) \\
            & \displaystyle =\mathcal{O}\left((\tau e)^{\frac{d}{\tau}}d^{-\left(\frac32+\frac{d}{\tau}-d\right)}\right)
        \end{array}\]
    \item For the second term in \eqref{eqn:lemma: ell_1 1 step 4}, we let $\beta$ satisfy $\log(1+d)=d^{\beta}$. Then, $\log(1+r)\leq r^{\beta}$ when $r>d$, and $\beta\rightarrow0$ as $d$ goes to infinity. Thus, we obtain
    \[\begin{array}{rl}
        \displaystyle \frac{\int_d^{\infty} p(r)r^{d-2}\log(1+r) dr}{\int_0^{\infty} p(r)r^{d-1} dr}
        & \displaystyle \leq \frac{\int_d^{\infty} p(r)r^{d+\beta-2} dr}{\int_0^{\infty} p(r)r^{d-1} dr} \\
        & \displaystyle \leq \frac{\int_0^{\infty} p(r)r^{d+\beta-2} dr}{\int_0^{\infty} p(r)r^{d-1} dr} \\
        & \displaystyle = \mathcal{O}\left(\frac{\Gamma\left(\frac{d+\beta-1}{\tau}\right)}{\Gamma\left(\frac{d}{\tau}\right)}\right) \\
        & \displaystyle = \mathcal{O}\left(d^{-\frac{1-\beta}{\tau}}\right) \\
        & \displaystyle = \mathcal{O}\left((\log d)^{\frac{1}{\tau}}d^{-\frac{1}{\tau}}\right)
    \end{array}\]
\end{itemize}

In conclude, the first term in \eqref{eqn:lemma: ell_1 1 step 4} is exponential in $1/d$ while the second term is polynomial in $1/d$. Thus, we finish the proof.
\end{proof}

\subsection*{Proof of Lemma \ref{lemma: ell_1 2}}
\begin{proof}
Similar to the notations in the proof of \textbf{Lemma} \ref{lemma: ell_1 2}, for $r\geq0$, we denote $p(z)$ by $p(r)$ for any $z$ such that $\|z\|_2=r$.

Suppose
\[p(r)\propto\tilde{p}(r)=\left\{
        \begin{array}{cc}
            \exp(-d) & r\leq d^{\frac{1}{\tau}} \\
            \exp(-r^{\tau}) & r > d^{\frac{1}{\tau}}
        \end{array}\right.\]

One can check that $p(r)$ is continuous on $\mathbb{R}$. First, we compute the integration in polar coordinates. Following by steps 2-3 in the proof of  \textbf{Lemma} \ref{lemma: ell_1 1}, we obtain
\[\int_{\mathbb{R}^d}\left(\Delta_{c_h}p\vert_z\right) dz =
\frac{\int_0^{\infty}\left(\Delta_{c_h}p\vert_r\right)r^{d-1}dr}
  {\int_0^{\infty}p(r)r^{d-1}dr} =
\frac{\int_0^{\infty}\left(\Delta_{c_h}\tilde{p}\vert_r\right)r^{d-1}dr}
  {\int_0^{\infty}\tilde{p}(r)r^{d-1}dr}\]

Next, we show that $\left(\Delta_{c_h}\tilde{p}\vert_r\right)/\tilde{p}(r)=\mathcal{O}\left(d^{-\left(\frac{1}{\tau}-1\right)}\right)$. We split the rest of the proof into 3 cases.

\begin{itemize}
    \item When $r\leq d^{1/\tau}-c_h$, $\Delta_{c_h}\tilde{p}\vert_r=0$.

    \item When $r\geq d^{1/\tau}$, the second derivative of $\tilde{p}(r)$ is strictly positive, so
    \[\Delta_{c_h}\tilde{p}\vert_r\leq c_h|\tilde{p}'(r)|=c_h\tau r^{\tau-1}\exp(-r^{\tau})\]
    Therefore,
    \[\frac{\Delta_{c_h}\tilde{p}\vert_r}{\tilde{p}(r)}\leq c_h\tau r^{\tau-1}\leq c_h \tau d^{-\left(\frac{1}{\tau}-1\right)}\]

    \item When $d^{1/\tau}-c_h<r<d^{1/\tau}$, we have
    \[\Delta_{c_h}\tilde{p}\vert_r\leq \Delta_{c_h}\tilde{p}\vert_{d^{1/\tau}}\leq c_h|\tilde{p}'(d^{1/\tau})|=c_h\tau d^{-\left(\frac{1}{\tau}-1\right)}\exp(-d)\]
    Since $\tilde{p}(r)=\exp(-d)$ in this case, we obtain
    \[\frac{\Delta_{c_h}\tilde{p}\vert_r}{\tilde{p}(r)} \leq c_h\tau d^{-\left(\frac{1}{\tau}-1\right)}\]
\end{itemize}
Summing these up, we finish the proof.
\end{proof}

\subsection{Proof of Theorem \ref{thm: ell_1 householder}}
\begin{proof}
Let $f$ be any Householder flow. According to \textbf{Lemma} \ref{lemma: upper bound of L} and the fact that $\det J_f(z)=-1$ for any $z\in\mathbb{R}^d$, we have
\[\mathcal{L}(p,f)\leq\hat{\mathcal{L}}(p,f)=\int_{\mathbb{R}^d}|p(f(z))-p(z)|dz\]
Since a Householder matrix does not change the $\ell_2$ norm of a vector, we have
\[|p(f(z))-p(z)|\leq\sup_{\|x\|_2=\|y\|_2=\|z\|_2}|p(x)-p(y)|\]
Now, we rewrite the integration in polar coordinates (see step 2 in the proof of \textbf{Lemma} \ref{lemma: ell_1 1}), and we obtain that
\[\hat{\mathcal{L}}(p,f) \leq \left(2\pi\prod_{k=2}^{d-1}\int_0^{\pi} \sin^{k-1}\theta_k d\theta_k \right) \times \int_0^{\infty}r^{d-1} \sup_{\|x\|_2=\|y\|_2=r}|p(x)-p(y)| dr\]
First of all,
\[2\pi\prod_{k=2}^{d-1}\int_0^{\pi} \sin^{k-1}\theta_k d\theta_k= \left(2\pi\prod_{k=2}^{d-1}\frac{\sqrt{\pi}\Gamma\left(\frac k2\right)}{\Gamma\left(\frac{k+1}{2}\right)} \right)
= \frac{2\pi^{\frac d2}}{\Gamma\left(\frac d2\right)}\]
Next, we bound $\sup_{\|x\|_2=\|y\|_2=r}|p(x)-p(y)|$ for $r>0$. Let $\Sigma=I+S$. According to the Courant-Fischer theorem (Chapter 5.2.2. (4), \citep{lutkepohl1996handbook}),
\[\begin{array}{rll}
    \displaystyle \max_{\|z\|_2=r} z^{\top}\Sigma^{-1}z
    &\displaystyle = r^2 \lambda_{max}(\Sigma^{-1})
    &\displaystyle = \frac{r^2}{\lambda_{min}(\Sigma)}\\
    \displaystyle \min_{\|z\|_2=r} z^{\top}\Sigma^{-1}z
    &\displaystyle = r^2 \lambda_{min}(\Sigma^{-1})
    &\displaystyle = \frac{r^2}{\lambda_{max}(\Sigma)}
\end{array}\]
Therefore,
\[\sup_{\|x\|_2=\|y\|_2=r}|p(x)-p(y)| = \frac{\exp\left(-\frac{r^2}{2\lambda_{max}(\Sigma)}\right)-\exp\left(-\frac{r^2}{2\lambda_{min}(\Sigma)}\right)}{(2\pi)^{\frac d2}\sqrt{\det\Sigma}}\]
Then. we obtain
\[\begin{array}{rl}
    \displaystyle \int_0^{\infty}r^{d-1} \sup_{\|x\|_2=\|y\|_2=r}|p(x)-p(y)| dr
    &\displaystyle = \frac{1}{(2\pi)^{\frac d2}\sqrt{\det\Sigma}} \int_0^{\infty}r^{d-1} \exp\left(-\frac{r^2}{2\lambda_{max}(\Sigma)}\right)dr\\
    &\displaystyle~~-\frac{1}{(2\pi)^{\frac d2}\sqrt{\det\Sigma}}\int_0^{\infty}r^{d-1}\exp\left(-\frac{r^2}{2\lambda_{min}(\Sigma)}\right)dr \\
    & \displaystyle = \frac{2^{\frac d2-1}\Gamma\left(\frac d2\right)}{(2\pi)^{\frac d2}\sqrt{\det\Sigma}} \left(\lambda_{max}(\Sigma)^{\frac d2}-\lambda_{min}(\Sigma)^{\frac d2}\right)
\end{array}\]
Combining these computations, we have
\[\hat{\mathcal{L}}(p,f)
\leq\frac{2\pi^{\frac d2}}{\Gamma\left(\frac d2\right)}\cdot\frac{2^{\frac d2-1}\Gamma\left(\frac d2\right)}{(2\pi)^{\frac d2}\sqrt{\det\Sigma}} \left(\lambda_{max}(\Sigma)^{\frac d2}-\lambda_{min}(\Sigma)^{\frac d2}\right)
=\frac{\lambda_{max}(\Sigma)^{\frac d2}-\lambda_{min}(\Sigma)^{\frac d2}}{\sqrt{\det\Sigma}}\]
Since $\det\Sigma$ is equal to the product of all eigenvalues of $\Sigma$, we have
\[\begin{array}{rl}
    \hat{\mathcal{L}}(p,f)
    & \displaystyle \leq\left(\frac{\lambda_{max}(\Sigma)}{\lambda_{min}(\Sigma)}\right)^{\frac d2}-1 \\
    & \displaystyle = \left(\frac{\lambda_{max}(S)+1}{\lambda_{min}(S)+1}\right)^{\frac d2}-1 \\
    & \displaystyle = \left(1+\frac{\lambda_{max}(S)-\lambda_{min}(S)}{\lambda_{min}(S)+1}\right)^{\frac d2}-1
\end{array}\]
According to the Gershgorin Circle Theorem, the absolute value of each eigenvalue of $S$ does not exceed $\max_{1\leq i\leq d}\sum_{j=1}^d |S_{ij}|\leq d^{-(1+\kappa)}$. Therefore,
\[\begin{array}{rl}
    \hat{\mathcal{L}}(p,f)
    & \displaystyle \leq \left(1+\frac{2d^{-(1+\kappa)}}{1-d^{-(1+\kappa)}}\right)^{\frac d2}-1 \\
    & \displaystyle = \mathcal{O}\left(\frac{2d^{-(1+\kappa)}}{1-d^{-(1+\kappa)}}\cdot\frac d2\right)\\
    & \displaystyle = \mathcal{O}\left(d^{-\kappa}\right)
\end{array}\]
Finally, by setting $\epsilon=\frac12\|p-q\|_1=\Theta(1)$, we finish the proof.
\end{proof}

\newpage
\section{Experiments}
\subsection{Experiments for \textbf{Theorem} \ref{thm:high d ReLU}}
In Figure \ref{fig: 2d relu topo}, we plot two examples that illustrate \textbf{Theorem} \ref{thm:high d ReLU}. In each example, we plot the surface of $q$ and its transformed distribution $p=f\#q$, where $f$ is a ReLU planar flow. The four peaks of $q$ are marked as red points, and their mapped locations on the surface of $p$ are also marked as red. As illustrated, the mapped locations still correspond to the peaks of $p$, which is consistent with \textbf{Theorem} \ref{thm:high d ReLU} because both $\nabla_z\log q(z)$ and $\nabla_z\log p(f(z))$ are zero vectors.

\begin{figure}[!h]
    \centering
    \fcolorbox{black}{white}{
        \includegraphics[trim=35 35 35 40, clip, width=0.8\textwidth]{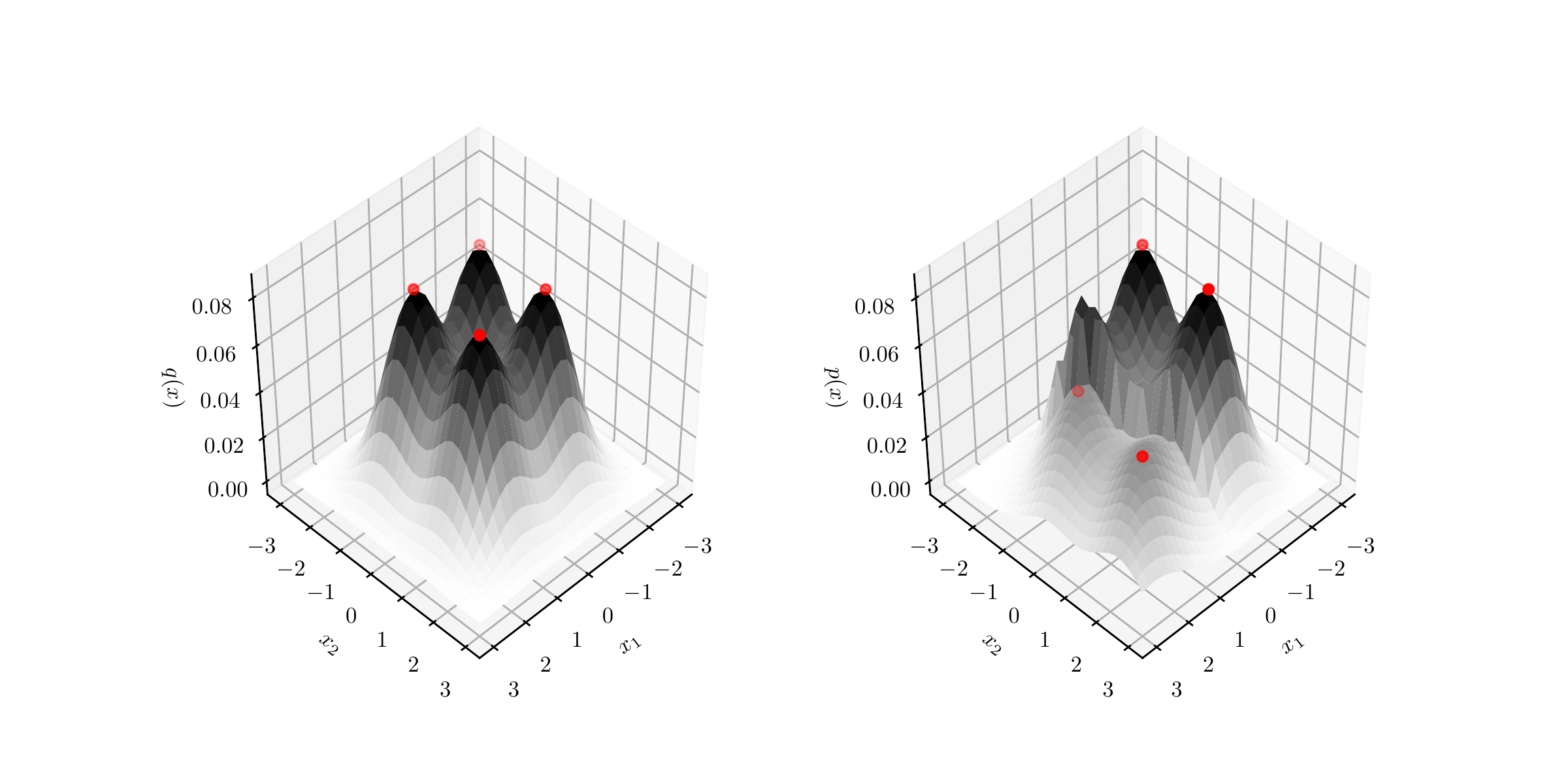}
    }
    \fcolorbox{black}{white}{
        \includegraphics[trim=35 35 35 40, clip,width=0.8\textwidth]{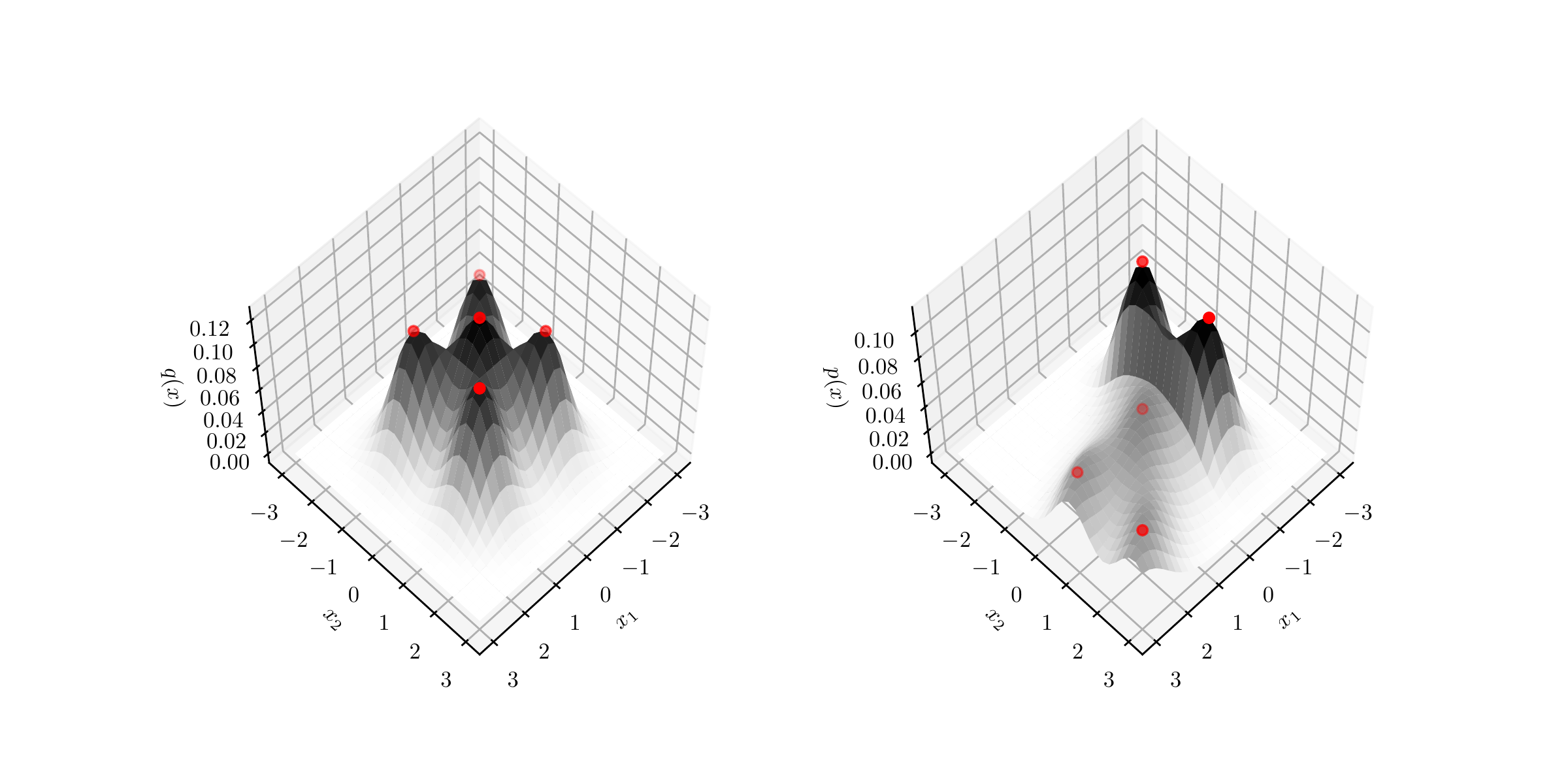}
    }
    \caption{Two examples that illustrate \textbf{Theorem} \ref{thm:high d ReLU}. Each example includes the surface plot of $q$ (left) -- a mixture of Gaussian distribution, and $p=f\#q$ (right) -- the transformed distribution of $q$. The red points correspond to the peaks of $q$ and their mapped points.}
    \label{fig: 2d relu topo}
\end{figure}

\subsection{Experiments for \textbf{Theorem} \ref{thm:high d exact more}}
In Figure \ref{fig: 2d general topo}, we plot four examples that illustrate \textbf{Theorem} \ref{thm:high d exact more}. In each example, we plot the surface of $q$, its transformed distribution $p=f\#q$ where $f$ is a planar flow with non-linearity $\tanh$, and the value $\log p(f(x))-\log q(x)$. As illustrated, the results are consistent with \textbf{Theorem} \ref{thm:high d exact more} because the gradient of $\log p(f(x))-\log q(x)$ is parallel to some constant vector as indicated by applying \textbf{Theorem} \ref{thm:high d exact more} to single planar flow.

\begin{figure}[!h]
    \centering
    \fcolorbox{black}{white}{
        \includegraphics[trim=35 60 35 90, clip,width=0.9\textwidth]{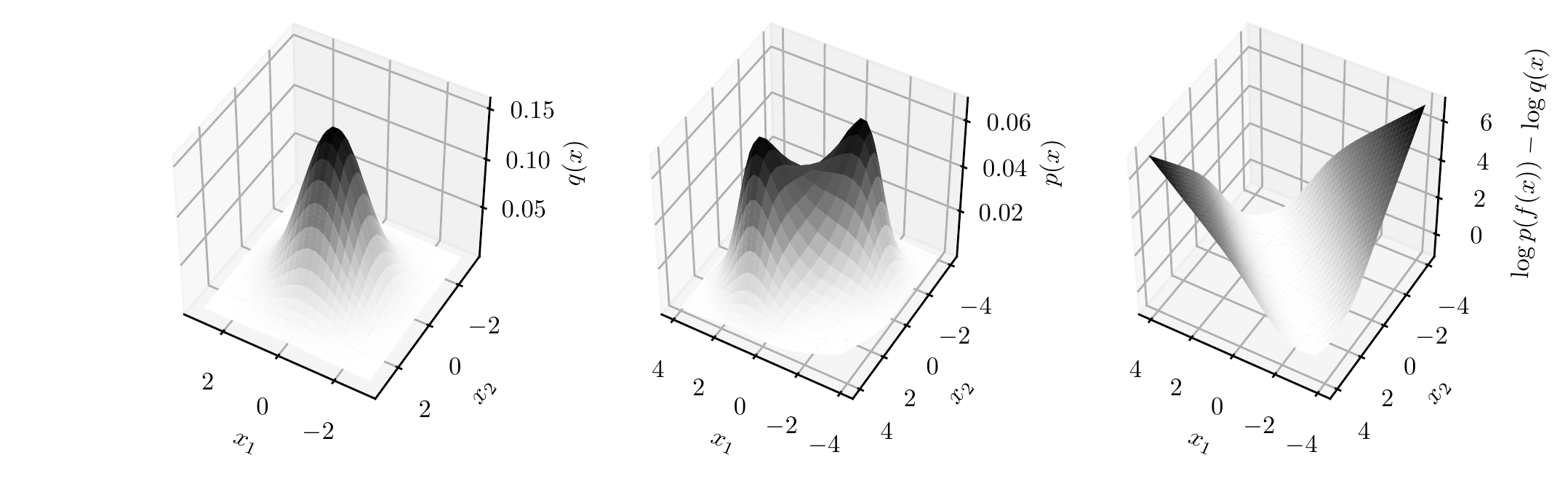}
    }
    \fcolorbox{black}{white}{
        \includegraphics[trim=35 60 35 90, clip,width=0.9\textwidth]{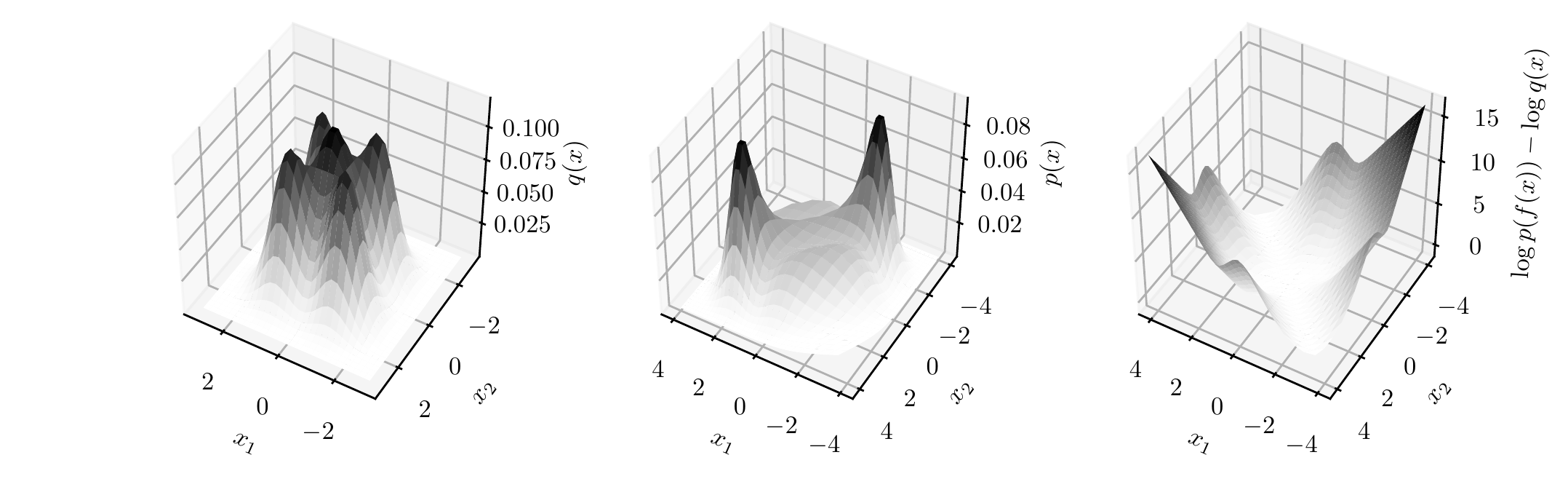}
    }
    \fcolorbox{black}{white}{
        \includegraphics[trim=35 60 35 90, clip,width=0.9\textwidth]{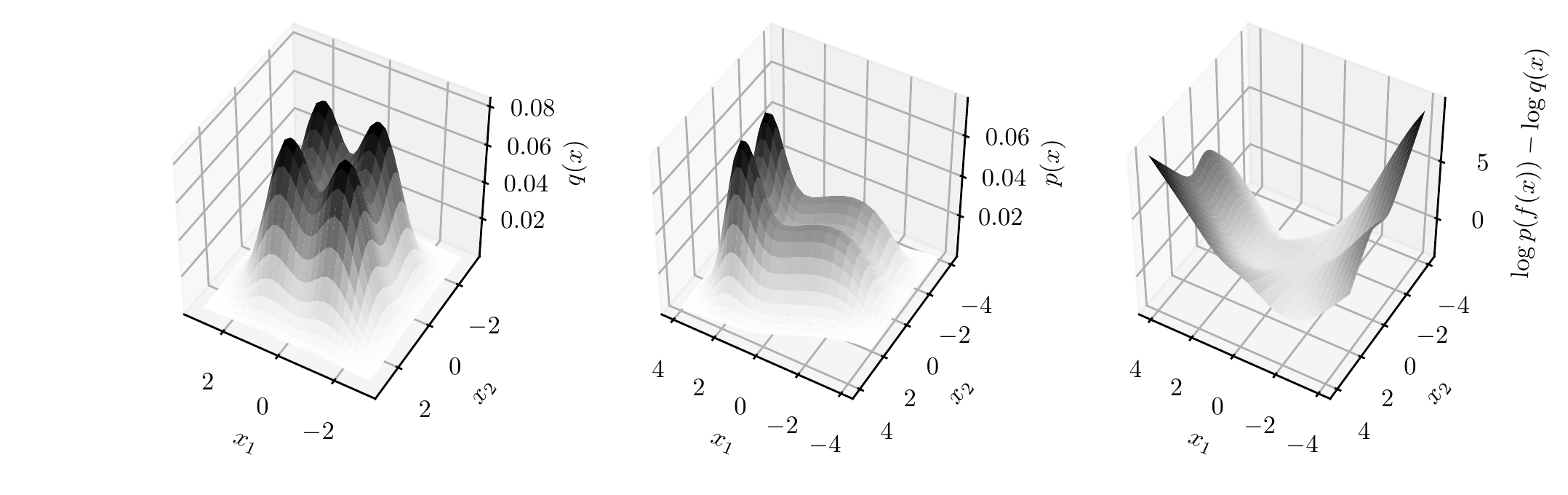}
    }
    \fcolorbox{black}{white}{
        \includegraphics[trim=35 60 35 90, clip,width=0.9\textwidth]{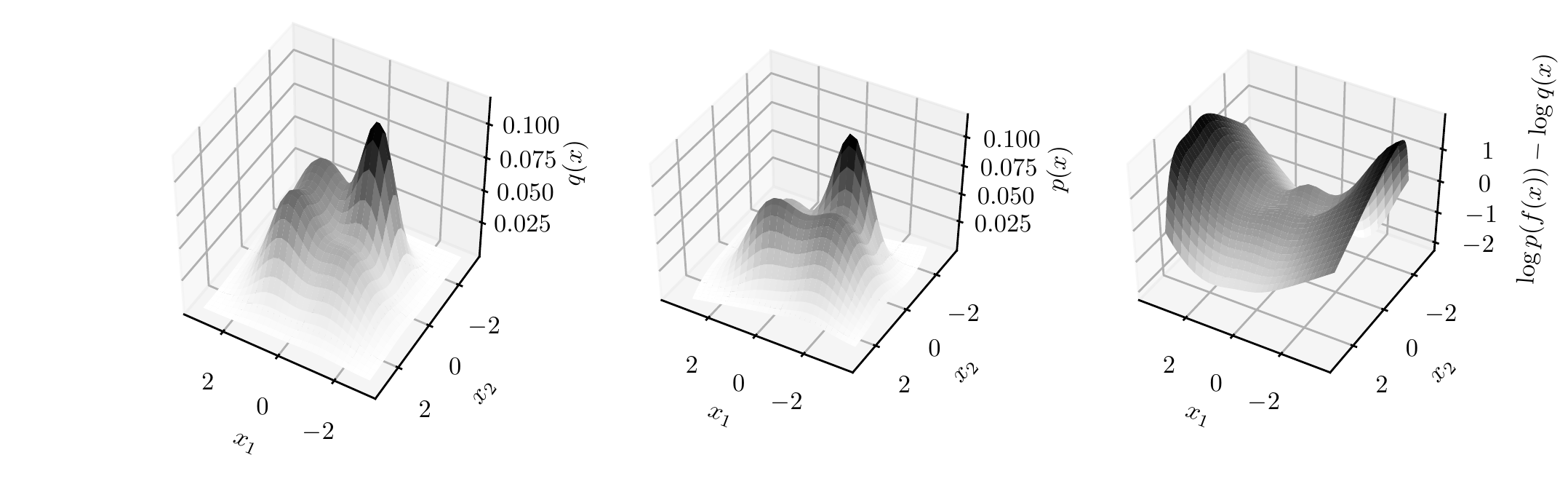}
    }
    \caption{For examples that illustrate \textbf{Theorem} \ref{thm:high d exact more}. Each example includes the surface plot of $q$ (left) -- a mixture of Gaussian distribution, $p=f\#q$ (middle) -- the transformed distribution of $q$ with a planar flow, and $\log p(f(x))-\log q(x)$ (right).}
    \label{fig: 2d general topo}
\end{figure}

\end{document}